\newtheorem{theorem}{Theorem}
\newtheorem{assumption}{Assumption}
\newtheorem{proposition}{Proposition}
\newtheorem{lemma}{Lemma}
\newtheorem{remark}{Remark}
\DeclareMathOperator{\E}{E}
\newcommand{\Gnl}{L_{\Gamma_{n}^{(t)}}}
\newcommand{\Gl}{L_{\Gamma^{(t)}}}
\newcommand{\Gt}{L_{\Gamma^{(0)}}}
\newcommand{\mbP}{\mathbb{P}}
\newcommand{\mbR}{\mathbb{R}}
\newcommand{\mcA}{\mathcal{A}}
\newcommand{\mcB}{\mathcal{B}}
\newcommand{\mcD}{\mathcal{D}}
\newcommand{\mcE}{\mathcal{E}}
\newcommand{\mcF}{\mathcal{F}}
\newcommand{\mcL}{\mathcal{L}}
\newcommand{\mcH}{\mathcal{H}}
\newcommand{\mcN}{\mathcal{N}}
\newcommand{\mcI}{\mathcal{I}}
\newcommand{\mcT}{\mathcal{T}}
\newcommand{\mcS}{\mathcal{S}}
\newcommand{\mcX}{\mathcal{X}}
\newcommand{\mcY}{\mathcal{Y}}
\newcommand{\mcZ}{\mathcal{Z}}
\title{On Hypothesis Transfer Learning of Functional Linear Models}
\author[1]{Haotian Lin}
\author[1,2]{Matthew Reimherr}
\affil[1]{Department of Statistics, The Pennsylvania State University}
\affil[2]{Amazon Science}
\date{}
\date{}
\begin{document}
\maketitle




\begin{abstract}
We study the \textit{transfer learning} (TL) for the \textit{functional linear regression} (FLR) under the Reproducing Kernel Hilbert Space (RKHS) framework, observing that the TL techniques in existing high-dimensional linear regression are not compatible with the truncation-based FLR methods, as functional data are intrinsically infinite-dimensional and generated by smooth underlying processes. We measure the similarity across tasks using RKHS distance, allowing the type of information being transferred to be tied to the properties of the imposed RKHS. Building on the hypothesis offset transfer learning paradigm, two algorithms are proposed: one conducts the transfer when positive sources are known, while the other leverages aggregation techniques to achieve robust transfer without prior information about the sources. We establish asymptotic lower bounds for this learning problem and show that the proposed algorithms enjoy a matching upper bound. These analyses provide statistical insights into factors that contribute to the dynamics of the transfer. We also extend the results to functional generalized linear models. The effectiveness of the proposed algorithms is demonstrated via extensive synthetic data as well as real-world data applications. 
\end{abstract}

\section{Introduction}\label{sec: intro}
Advances in technologies enable us to collect and process densely observed data over some temporal or spatial domains, which are coined functional data \citep{ramsay2005functional,kokoszka2017introduction}. While functional data analysis (FDA) has been proven useful in various fields like finance, genetics, etc., and has been researched widely in the statistical community, its effectiveness relies on having sufficient training samples drawn from the same distribution. However, this may not hold for functional data under some applications due to collection expenses or other constraints. Transfer learning (TL)\citep{torrey2010transfer} leverages additional information from some similar (source) tasks to enhance the learning procedure on the original (target) task and is an appealing mechanism when there is a lack of training samples. The goal of this paper is to develop TL algorithms for functional linear regression (FLR), one of the most prevalent models in the FDA. The FLR concerned in this paper is Scalar-on-Function regression, which takes the form:
\begin{equation*}
    Y = \alpha + \langle \beta, X \rangle_{L^{2}} + \epsilon
    = \alpha + \int_{\mathcal T} X(s) \beta(s) \ ds + \epsilon, 
\end{equation*}
where $Y$ is a scalar response, $X:\mathcal{T}\to \mathbb{R}$ and $\beta:\mathcal{T}\to \mathbb{R}$ are the square-integrable functional predictor and coefficient function respectively over a compact domain $\mathcal{T}\subset \mathbb{R}$, and $\epsilon$ is a random noise with zero mean.

A classical approach to estimating $\beta$ is to reduce the problem to classical multivariate linear regression by expanding the $X$ and $\beta$ under the same finite basis, like deterministic basis functions, e.g. Fourier basis, or the eigenbasis of the covariance function of $X$ \citep{cardot1999functional,yao2005functional,hall2006properties,hall2007methodology}, which we refer to truncation-based FLR methods in this paper. Conceptually, the offset transfer learning techniques developed in the existing multivariate/high-dimensional linear regression framework \citep{kuzborskij2013stability,kuzborskij2017fast,li2022transfer,bastani2021predicting} can be applied to truncation-based FLR methods to conduct transfer learning in FLR, though they lack a theoretical foundation in this context due to the truncation error inherent in a basis expansion of $\beta$. In particular, a key property distinguishing functional data from multivariate data is that they are inherently infinite-dimensional and generated through smooth underlying processes. Omitting this fact, using finite-dimensional approximations to $\beta$ and leveraging existing multivariate OTL techniques on the finite coefficients, loses the benefit that data are generated from smooth processes and are less interpretable for the transfer process; see detailed discussion in Section~\ref{sec: preliminaries section}. Observing these limitations, we develop the first TL algorithms for FLR with statistical convergence rate guarantees under the supervised learning setting.

We summarize our main contributions as follows.
\begin{enumerate}
    \item We propose using the Reproducing Kernel Hilbert Space (RKHS) distance between tasks' coefficients as a measure of task similarity. The transferred information is thus tied to the RKHS's properties, which makes the transfer more interpretable. One can tailor the employed RKHS to the task's nature, offering flexibility to embed diverse structural elements, like smoothness or periodicity, into TL processes.

    \item Built upon the offset transfer learning (OTL) paradigm, we propose TL-FLR, a variant of OTL for multiple positive transfer sources. We establish the minimax optimality for TL-FLR. 
    Intriguingly, the result reveals that the faster statistical rate of TL-FLR, compared to non-transfer learning, not only depends on source sample size and the magnitude of discrepancy across tasks, like most existing works, but also on the signal ratio between offset and source model.
    
    \item To deal with the practical scenario in which there is no available prior task similarity information, we propose Aggregation-based TL-FLR (ATL-FLR), utilizing sparse aggregation to mitigate negative transfer effects. We establish the upper bound for ATL-FLR and show that the aggregation cost decreases faster than the transfer learning risk, demonstrating an ability to identify optimal sources without too much extra cost compared to TL-FLR. We further extend this framework to Functional Generalized Linear Models (FGLM) with theoretical guarantees, broadening its applicability.

    \item In developing statistical guarantees, we uncovered unique requirements for making OTL theoretically feasible in the functional data context. These include the necessity for covariate functions across tasks to exhibit similar structural properties to ensure statistical convergence, and the coefficient functions of negative sources can be separated from positive ones within a finite-dimensional space to ensure optimal source identification.

\end{enumerate}

\paragraph{Literature review.}
Apart from truncation-based FLR approaches mentioned before, another line of research proposed that one can obtain a smooth estimator via smoothness regularization \citep{yuan2010reproducing,cai2012minimax}, and has been widely used in other functional models like the FGLM, functional Cox-model, etc. \citep{cheng2015joint,qu2016optimal,reimherr2018optimal,sun2018optimal}.

Turning to the TL regime in supervised learning, the hypothesis transfer learning (HTL) framework has become popular \citep{li2007bayesian,orabona2009model,kuzborskij2013stability,perrot2015theoretical,du2017hypothesis}. Offset transfer learning (OTL) (a.k.a. biases regularization transfer learning) has been widely analyzed and applied as one of the most popular HTL paradigms. It assumes the target's function/parameter is a summation of the source's and the offset's function/parameter. A series of works have derived theoretical analyses under different settings. For example, in \citet{kuzborskij2013stability,kuzborskij2017fast}, the authors provide the first theoretical study of OTL in the context of linear regression with stability analysis and generalization bounds. Later, in \citet{wang2015generalization,wang2016nonparametric}, the authors derive similar theoretical guarantees for non-parametric regression via Kernel Ridge Regression. A unified framework that generalizes many previous works is proposed in \citet{du2017hypothesis}, and the authors also present an excess risk analysis for their framework. Apart from the regression setting, generalization bounds for classification with surrogate losses have been studied in \citet{aghbalou2023hypothesis}. Other results that study HTL outside OTL can be found in \citet{li2007bayesian,cheng2015joint}. Besides, OTL can also be viewed as a case of representation learning \citep{du2020few,tripuraneni2020theory,xu2021representation} by viewing the estimated source model as a representation for target tasks. Finally, the bias regularization technique on which OTL relies is also widely used in other learning settings, e.g., Meta, Multi-task, and unsupervised learning, see \citet{denevi2018learning,denevi2019learning,balcan2019provable,tian2022unsupervised,tian2023learning}.

The statistics community has recently adopted OTL for various high-dimensional models with statistical risk guarantees. For example, \citet{bastani2021predicting} proposed using OTL for high-dimensional (generalized) linear regression, but only includes one positive transfer source. Later, \citet{li2022transfer} extended this idea to the multiple sources scenario and leveraged aggregation to alleviate negative transfer effects. In \citet{tian2022transfer}, the learning procedure gets extended to the high-dimensional generalized linear model, and the authors also proposed a positive sources detection algorithm via a validation approach. In these works, the similarity among tasks is quantified via $\ell^{1}$-norm, which captures the sparsity structure in high-dimensional parameters.
There is no TL for FDA that we are aware of, but the closest work would be in the area of domain adaptation. \citet{zhu2021functional} studied the domain adaptation problem between two separable Hilbert spaces by proposing algorithms to estimate the optimal transport mapping between two spaces.

\paragraph{Notation.}
For two sequences $\{a_{k}\}_{k\geq1}$ and $\{b_{k}\}_{k\geq1}$, we denotes $a_n \asymp b_n$ and $a_n \lesssim b_n$ if $|a_n/b_n| \rightarrow c$ and $|a_n/b_n|\leq c$ for some universal constant $c$ when $n\rightarrow \infty$. For two random variable sequence $\{A_{k}\}_{k\geq1}$ and $\{B_{k}\}_{k\geq1}$, if for any $\delta>0$, there exists $M_{\delta}>0$ and $N_{\delta}>0$ such that $\mbP(A_{k} < M_{\delta} B_{k})\geq 1 - \delta$, $\forall k\geq N_{\delta}$, we say $A_{k} = O_{\mbP}(B_{k})$. For a set $A$, let $|A|$ denote its cardinality, $A^{c}$ denote its complement. For an integer $n$, denote $[n]:= \{1,\cdots,n\}$.

We denote the covariance function of $X$ as $C(s,t) = \E [X(s) -\E X(s) ] [X(t) -\E X(t)]$ for $s,t\in \mcT$. For a real, symmetric, square-integrable, and nonnegative kernel, $K:\mcT\times\mcT\rightarrow \mbR$, we denote its associated RKHS on $\mcT$ as $\mcH_{K}$ and corresponding norm as $\|\cdot\|_{K}$.  We also denote its integral operator as $L_{K}(f) = \int_{\mcT} K(\cdot,t) f(t) dt \quad$ for $f\in L^2$. For two kernels, $K_{1}$ and $K_{2}$, their composition is $(K_{1} K_{2}) (s,t) = \int_{\mcT} K_{1} (s,u) K_{2} (u,t) du$. For a given kernel $K$ and covariance kernel $C$, define bivariate function $\Gamma$ and its integral operator as $\Gamma := K^{1/2}CK^{1/2}$ and $L_{\Gamma}(f)= L_{K^{\frac{1}{2}}}(L_{C}( L_{K^{\frac{1}{2}}}(f) ) )$.

\section{Preliminaries and Backgrounds}\label{sec: preliminaries section}
\paragraph{Problem Set-up.}
We now formally set the stage for the transfer learning problem in the context of FLR. Consider the following series of FLRs,
\begin{equation}\label{eqn: model setup}
    Y_i^{(t)}  = \alpha^{(t)} + \left\langle X_i^{(t)},\beta^{(t)}\right\rangle_{L^{2}}  + \epsilon_{i}^{(t)}  
\end{equation}
for $i \in [n_{t}]$,  $t=0\cup[T]$, where $t=0$ denotes the target model and $t\in[T]$ denotes source models. Denote the sample space $\mcZ$ as the Cartesian product of the covariate space $\mcX$ and response space $\mcY$. For each $t\in 0\cup[T]$, we denote $\mcD^{(t)} = \{ (X_{i}^{(t)}, Y_{i}^{(t)}) \}_{i=1}^{n_{t}} = \{ Z_{i}^{(t)} \}_{i=1}^{n_{t}}$. Throughout the paper we assume $\epsilon_{i}^{(t)}$ are i.i.d. across both $i$ and $t$ with zero mean and finite variance $\sigma^2$.

As estimating $\beta^{(0)}$ is our primary interest, we assume for simplicity that $\alpha^{(t)}=0$ for all $t$. We assume $n_{0} \ll \sum_{t=1}^{T}n_{t}$, a condition commonly validated in most TL literature and numerous practical applications. While our framework is designed primarily for the posterior drift setting, i.e., the marginal distributions of $X^{(t)}$ remain the same, but $\beta^{(t)}$ vary, the excess risk bounds we establish are based on a comparatively more relaxed condition, see Section \ref{sec: theoretical section for FLR}.

In the absence of source data, estimating $\beta^{(0)}$ is termed as target-only learning, and one can obtain a smooth estimator of $\beta$ through the regularized empirical risk minimization (ERM) \citep{yuan2010reproducing,cai2012minimax}, i.e. 
\begin{equation*}
    \hat{\beta} = \underset{  \beta \in \mcH_{K} }{\operatorname{argmin}}\left\{ \frac{1}{n_{0}}\sum_{i=1}^{n_{0}}\ell(\beta,Z_{i}^{(0)}) + \lambda\|\beta\|_{K}^{2}\right\},
\end{equation*}
where $K$ is an employed kernel and $\ell:\mcH_{K}\times\mcZ \rightarrow \mbR^{+}$ is the loss function. This approach has been proven to achieve the optimal rate in terms of excess risk, and we refer to it as \textit{Optimal Functional Linear Regression} (OFLR) in this paper, which serves as a non-transfer learning baseline.

\paragraph{Similarity Measure.}
We first state the limitations of using $\ell^{1}/\ell^{2}$-norm as a similarity measure in the truncation-based FLR method, which converts the problem into a classic multivariate one. For a given series of basis functions $\{\phi_{j}\}_{j\geq 1}$ and truncated number $M$, one can model the $t$-th FLR as
\begin{equation}\label{eqn: truncated FLR model}
    Y_{i}^{(t)} \approx  \sum_{j=1}^{M} X_{ij}^{(t)}\beta_{j}^{(t)} + \epsilon_{i}^{(t)}
\end{equation}
where $X_{ij}^{(t)} = \langle X_{i}^{(t)}, \phi_{j} \rangle_{L^{2}}$ and $\beta_{j}^{(t)}= \langle \beta^{(t)}, \phi_{j} \rangle_{L^{2}}$. Denote $\beta^{(t)}_{\text{trunc}} \in \mathbb{R}^{M}$ as the coefficient vector in (\ref{eqn: truncated FLR model}), one can then measure the similarity between the target and the $t$-th FLR model by the $\ell^{1}$ or $\ell^{2}$ norm of $\beta^{(t)}_{\text{trunc}} - \beta^{(0)}_{\text{trunc}}$ like the previous works did for multivariate linear regression. However, from the functional data analysis literature, since the functional data are generated from some structural underlying process, it is well known that one has to have the same kind of structure in the estimator, like smoothness, for theoretical reliability. For example, when the coefficient functions are smooth, the above approach cannot measure the similarity since $\{\beta^{(t)}_{\text{trunc}}\}_{t=0}^{T}$ are not necessarily sparse or might require regularization via an $\ell^{2}$-norm, but the employed basis functions might not reflect the desired smoothness. Besides, the basis functions and $M$ should be consistent across tasks, which reduces the flexibility of the learning procedure.

To explore the similarity tied to the structure of coefficient functions, one should quantify the similarity between tasks within certain functional spaces that possess the same structures. These structural properties, e.g., continuity/smoothness/periodicity, can be naturally encapsulated via kernels and their corresponding RKHS. Consequently, quantifying the similarity within a certain RKHS provides interpretability since the type of information transferred is tied to the structural properties of the used RKHS. We also note that this method is broadly applicable since the reproducing kernel can be tailored to the application problem accordingly. For example, one can transfer the information about continuity or smoothness by picking $K$ to be a Sobolev kernel, and about periodicity by picking periodic kernels like $K(x_1,x_2) = exp\left( - 2/l^{2} \sin\left(\pi |x_1 - x_2|/p \right) \right)$ where $l$ is the lengthscale and $p$ is the period.

Given the reasoning above, for $t=0\cup[T]$, we assume $\beta^{(t)}\in \mcH_{K}$, and define the $t$-th contrast function $\delta^{(t)} := \beta^{(0)}- \beta^{(t)}$. Given a constant $h\geq 0$, we say the $t$-th source model is ``h-transferable'' if $\| \delta^{(t)} \|_{K} \leq h$. The magnitude of $h$ characterizes the similarity between the target model and source models. We also define $\mcS_{h} = \{ t\in [T] : \| \delta^{(t)} \|_{K} \leq h \}$ as a subset of $[T]$, which consists of the indexes of all ``h-transferable'' source models. It is worth mentioning that the quantity $h$ is introduced for theoretical purposes to establish optimality, which is prevalent in recent studies such as \citet{bastani2021predicting,li2022transfer,tian2022transfer,he2024transfusion}. However, for the implementation of the algorithm, it is not necessary to know the actual value of $h$. We abbreviate $\mcS_{h}$ as $\mcS$ to generally represent the h-transferable sources index.

\paragraph{Learning Framework.}
This paper leverages the widely used OTL paradigm; see reviews in Section \ref{sec: intro}. Formally, in the FLR and single source $\beta^{(1)}$ context, the OTL obtains the target function via  $\hat{\beta}^{(0)} = \hat{\beta}^{(1)} + \hat{\delta}$ where $\hat{\beta}^{(1)}$ is the estimator trained on source dataset and $\hat{\delta}$ is obtained from target dataset via following minimization problem:
\begin{equation*}
    \hat{\delta} = \underset{ \delta \in \mathcal{H}_{K}}{\operatorname{argmin }} \frac{1}{n_{0}}\sum_{i=1}^{n_{0}}\ell(\delta + \hat{\beta}^{(0)},Z_{i}^{(0)}) + \lambda\|\delta\|_{K}^{2},
\end{equation*}
where the loss function can be square loss \citep{orabona2009model,kuzborskij2013stability} or surrogate losses \citep{aghbalou2023hypothesis}. The main idea is that the estimator $\hat{\beta}^{(1)}$ can be learned well given sufficiently large source samples, and the simple offset estimator $\hat{\delta}^{(0)}$ can be learned with much fewer target samples. 


\section{Methodology}\label{methodology section}

\subsection{Offset Transfer Learning with Multiple Sources}
For multiple sources, the idea of data fusion inspires us to obtain a centered source estimator $\beta_{\mcS}$ via all source datasets in place of $\beta^{(1)}$. Therefore, we can generalize single-source OTL to the multiple-source scenario as follows.

\begin{algorithm}[ht]
\caption{TL-FLR}\label{algo: A transfer algorithm}
\begin{algorithmic}
   \State {\bfseries Input:} Target/Source datasets
    $\{\mcD^{(t)} \}_{t=0}^{T}$; index set of source datasets $\mcS$; Loss function $\ell$ as square loss. 

    \State {\bfseries Output:} $\hat{\beta}_{S} + \hat{\delta}$. \\
    
    \State \textbf{Transfer Step}: Obtain $\hat{\beta}_{\mcS}$ via 
    \begin{equation}\label{eqn: transfer step}
        \hat{\beta}_{\mcS} = \underset{ \beta \in \mathcal{H}_{K}}{\operatorname{argmin}} \sum_{t\in \mcS} \frac{1}{n_{t}}\sum_{i=1}^{n_{t}} \ell(\beta, Z_{i}^{(t)})  + \lambda_{1}\|\beta\|_{K}^{2}.
    \end{equation}

    \State \textbf{Calibration Step}:
    Obtain offset $\hat{\delta}$ via 
    \begin{equation}\label{eqn: calibration step}
        \hat{\delta} = \underset{ \delta \in \mathcal{H}_{K}}{\operatorname{argmin }} \frac{1}{n_{0}}\sum_{i=1}^{n_{0}}\ell(\delta + \hat{\beta}_{\mcS}, Z_{i}^{(0)}) + \lambda_{2}\|\delta\|_{K}^{2}.
    \end{equation}

\end{algorithmic}
\end{algorithm}

Since the probabilistic limit of $\hat{\beta}_{\mcS}$ is not consistent with $\beta^{(0)}$, calibration of $\hat{\beta}_{\mcS}$ is performed in (\ref{eqn: calibration step}). The regularization term in (\ref{eqn: calibration step}) is consistent with our similarity measure, i.e. it restricts $\hat{\beta}^{(0)}$ to lie in a $\mcH_{K}$ ball centered at $\hat{\beta}_{\mcS}$. Therefore, this term pushes the $\hat{\beta}^{(0)}$ close to $\hat{\beta}_{\mcS}$ while the mean square error loss over the target dataset allows calibration for the bias. Intuitively, if $\hat{\beta}_{\mcS}$ is close to $\beta^{(0)}$, then TL-FLR can boost the learning on the target model.

\subsection{Adaptive Transfer via Sparse Aggregation}

Assuming the index set $\mcS$ is known in Algorithm~\ref{algo: A transfer algorithm} can be unrealistic in practice without prior information or investigation. Moreover, as some source tasks might have little or even a negative contribution to the target one, it could be practically harmful to directly apply Algorithm~\ref{algo: A transfer algorithm} by assuming all sources belong to $\mcS$. Inspired by the idea of aggregating multiple estimators in \citet{li2022transfer}, we develop ATL-FLR, which can be directly applied without knowing $\mcS$ while being robust to negative transfer sources.

The general idea of ATL-FLR is that one can first construct a collection of candidates for $\mcS$, named $\{\hat{\mcS_1}, \hat{\mcS_2}, \cdots, \hat{\mcS_J}\}$, such that there exists at least one $\hat{\mcS}_{j}$ satisfying $\hat{\mcS}_{j} = \mcS$ with high probability and then obtain their corresponding estimators $\mcF = \{ \hat{\beta}(\hat{\mcS_1}),\cdots,\hat{\beta}(\hat{\mcS_J})\}$ via TL-FLR. Then, one aggregates the candidate estimators in $\mcF$ such that the aggregated estimator $\hat{\beta}_{a}$ satisfies the following oracle inequality in high probability,
\begin{equation}\label{aggregation oracle inequality}
    R(\hat{\beta}_{a})\leq \min_{\beta \in \mcF} R(\beta) + r(\mcF,n),
\end{equation}
where $\quad R(f) = \E_{(X,Y)} [\ell(Y,f(X)) | \{\mcD^{(t)}:t\in0\cup[T]\} ]$, and $r(\mcF,n)$ is the aggregation cost. Thus, the $\hat{\beta}_{a}$ can achieve similar performance as TL-FLR up to some aggregation cost. The proposed aggregation-based TL-FLR is as follows:

\begin{algorithm}[ht]
    \caption{Aggregation-based TL-FLR (ATL-FLR)}\label{algo: aggregate transfer algorithm}
    \begin{algorithmic}
    \State {\bfseries Input:} Target/Source datasets
    $\{\mcD^{(t))} \}_{t=0}^{T}$; index set of source datasets $\mcS$; Loss function $\ell$ as square loss; A given integer $M$. \\
    
    \State \textbf{Step 1:} Split the target dataset $\mcD^{(0)}$ into $\mcD_{\mcI}^{(0)}$ and $\mcD_{\mcI^{c}}^{(0)}$ with $\mcI$ be a random subset of $[n_{0}]$ such that $|\mcI| = \lfloor \frac{n_0}{2} \rfloor$.

    \State \textbf{Step 2:} Build candidate sets of $\mcS$, $\{ \hat{\mcS}_0, \hat{\mcS}_1, \cdots, \hat{\mcS}_{T} \}$ as:
    
    \begin{enumerate}
    \item Obtain $\hat{\beta}_{0}$ by OFLR using $\mcD_{\mcI}^{(0)}$ and let $\hat{\mcS}_0 = \emptyset$.

    \item For each $t\in [T]$, obtain $\hat{\beta}_{t}$ by OFLR using $\mcD^{(t)}$ and obtain truncated RKHS norm $\hat{\Delta}_{t} = \|\hat{\beta}_0 -\hat{\beta}_{t} \|_{K^{M}} := \sum_{j=1}^{M} \langle \hat{\beta}_0 -\hat{\beta}_{t}, v_{j} \rangle^{2}/\tau_{j}$.
    
    \item Set $\hat{\mcS}_{t} = \left\{ k: \hat{\Delta}_{k} \text{ is among the first } t \text{ smallest.}  \right\}$
    
    \end{enumerate}
    
    \State \textbf{Step 3:} For $t\in[T]$, fit TL-FLR by setting $\mcS = \hat{\mcS}_{t}$ with dataset $\mcD_{\mcI}^{(0)}$. Let $\mcF = \{\hat{\beta}(\hat{\mcS}_0), \hat{\beta}(\hat{\mcS}_1),\cdots \hat{\beta}(\hat{\mcS}_{T})\}$.
    
    \State \textbf{Step 4:} Implement the sparse aggregation procedure in Algoritm~\ref{algo: sparse aggregation} with $\mcF$ as the dictionary and training dataset as $\mcD_{\mcI^c}^{(0)}$. Obtain the sparse aggregated estimator $\hat{\beta}_{a}$.
  \end{algorithmic}
\end{algorithm}

\begin{remark}
While exploring the estimated similarity across sources to the target in Step 2, we use a truncated RKHS norm, which is the distance between $\hat{\beta}_{0}$ and $\hat{\beta}_{t}$ after projecting them onto the space spanned by the first $M$ eigenfunctions of $K$. Here, $\{\tau_{j}\}_{j\geq 1}$ and $\{v_j\}_{j\geq 1}$ are the eigenvalues and eigenfunctions of $K$. Such a truncated norm guarantees the identifiability of $\mcS$; see Section~\ref{sec: er of ATL-FLR} for details.
\end{remark}

Step 2 ensures the target-only baseline $\hat{\beta}_{0}$ lies in $\mcF$ while the construction of $\hat{\mcS}_{t}$ ensures thorough exploration of $\mcS$. If $\mcS$ can be identified by one of the $\hat{\mcS}_{t}$, then inequality (\ref{aggregation oracle inequality}) indicates that even without knowing $\mcS$, the $\hat{\beta}_{a}$ can mimic the performance of the TL-FLR estimator, while not being worse than the target-only $\hat{\beta}_{0}$, up to an aggregation cost.

The sparse aggregation is adopted from \citet{gaiffas2011hyper}, see Appendix~\ref{apd: sparse aggregation}. Although we note that other aggregation methods like aggregate with cumulated exponential weights (ACEW) \citep{juditsky2008learning,audibert2009fast}, aggregate with exponential weights (AEW) \citep{leung2006information,dalalyan2007aggregation}, and Q-aggregation \citep{dai2012deviation} can replace sparse aggregation in Step 4, sparse aggregation is often preferred due to its computational efficiency and ability to eliminate negative transfer effects. Specifically, the final aggregated estimator is usually represented as a convex combination of elements in $\mcF$, i.e., $\hat{\beta}_{a} = \sum_{j=1}^{J}c_{j} \hat{\beta}(\hat{\mcS}_{j})$. The sparse aggregation sets most of the $c_{j}$ to zero, which effectively excludes the negative transfer sources. On the other hand, none of the ACEW, AEW, and Q-aggregation will set the $c_{j}$ to $0$ most of the time, meaning that negative transfer sources can still affect $\hat{\beta}_{a}$. Although one can manually tune temperature parameters in these approaches to shrink the $c_j$ close to zero, they are less computationally efficient given the fact that sparse aggregation does not require such a tuning process. In Section~\ref{sec: experiments}, we verify that sparse aggregation outperforms other aggregation methods under various settings.

\section{Theoretical Analysis}\label{sec: theoretical section for FLR}
In this section, we study the theoretical properties of the prediction accuracy of the proposed algorithms. We evaluate the proposed algorithms via excess risk, i.e., 
\begin{equation*}
    \mcE(\hat{\beta}^{(0)}) := \E_{Z^{(0)}} \left[  \ell(\hat{\beta}^{(0)},Z^{(0)}) - \ell(\beta^{(0)},Z^{(0)})\right]
\end{equation*}
where the expectation is taken over an independent test data point $Z^{(0)}$ from the target distribution. To study the excess risk of TL-FLR and ATL-FLR, we denote $\beta_{S}$ the population version of $\hat{\beta}_{\mcS}$ which also lies in $\mcH_{K}$ and define the parameter space as 
\begin{equation*}
    \Theta(h,R) = \left\{  \{\beta^{(t)}\}_{t\in \{0\}\cup\mcS}  : \|\beta_{\mcS}\|_{K}\leq R,  \|\delta^{(t)}\|_{K} \leq h \right\}.
\end{equation*}
To establish the theoretical analysis of the proposed algorithms, we first state some assumptions. 
For $t \in 0\cup[T]$, denote $\{s_{j}^{(t)}\}_{j\geq1}$ and $\{\phi_{j}^{(t)}\}_{j\geq 1}$ as the eigenvalues and eigenfunctions of $\Gamma^{(t)}:=K^{\frac{1}{2}}C^{(t)}K^{\frac{1}{2}}$ respectively.

\begin{assumption}[Eigenvalue Decay Rate (EDR)]\label{assump: EDR}
    Suppose that the eigenvalue decay rate (EDR) of $L_{\Gamma^{(0)}}$ is $2r$, i.e. 
    \begin{equation*}
        s_{j}^{(0)}\asymp j^{-2r}, \quad \forall j \geq 1.
    \end{equation*}
\end{assumption}
The polynomial EDR assumption is standard in FLR literature \citep{cai2012minimax,reimherr2018optimal}. RKHSs that satisfy this assumption, like Sobolev spaces, are natural choices when considering smoothness as the structural properties in the TL processes. For target-only learning by minimizing regularized ERM over target data $\mcD^{(0)}$, the minimax convergence rate of the excess risk under this assumption is $n_{0}^{-2r/(2r+1)}$ \citep{cai2012minimax}.

\begin{assumption}\label{assump: eigen assumption}
We assume either one of the following conditions holds.
\begin{enumerate}
    \item \label{commute assumption} $L_{\Gamma^{(t)}}$ commutes with $L_{\Gamma^{(0)}}$, $\forall t\in\mcS$, i.e. $L_{\Gamma^{(0)}} L_{\Gamma^{(t)}} = L_{\Gamma^{(t)}} L_{\Gamma^{(0)}}$, and 
    \begin{equation*}
        a_j^{(t)} := \langle L_{\Gamma^{(t)}}(\phi_{j}^{(0)}),\phi_{j}^{(0)} \rangle \asymp s_{j}^{(0)} \quad \forall j\geq 1.
    \end{equation*}

    \item \label{HS assumption} The following linear operator is Hilbert–Schmidt.
    \begin{equation*}
    \mathbf{I} - (L_{\Gamma^{(0)}})^{-1/2} L_{\Gamma^{(t)}} (L_{\Gamma^{(0)}})^{-1/2}, \quad \forall t\in\mcS.
    \end{equation*}
\end{enumerate}
\end{assumption}

We note that under the posterior drift setting, both conditions in Assumption~\ref{assump: eigen assumption} are satisfied automatically, and thus, our theoretical results are built on assumptions that are more relaxed than posterior drift. Although neither condition implies the other, both conditions primarily focus on how the smoothness of the source kernel $\Gamma^{(t)}$ relates to that of the target kernel $\Gamma^{(0)}$. Specifically, Condition~\ref{commute assumption} implies $L_{\Gamma^{(0)}}$ and $L_{\Gamma^{(t)}}$ not only share the same eigenspace but also have similar magnitudes of the projection onto the $j$-th dimension of the eigenspace, which commonly appears in FDA literature \citep{yuan2010reproducing,balasubramanian2022unified}. Condition~\ref{HS assumption} implies the probability measures of $X^{(0)}$ and $X^{(t)}$ are equivalent, see \citet{baker1973equivalence}. Collectively, both conditions indicate that the feasibility of OTL for FLR relies on the fact that the regularity of the source operator $L_{\Gamma^{(t)}}$ should behave similarly to that of the targets. Either a too ``smooth'' or a too ``rough'' source can degrade the optimality. 

Besides, these conditions help to prevent the excess risk of $\hat{\beta}_{S}$ over the target domain from diverging. Specifically, a technical reason behind such a complex form (an asymptotic behavior between the eigenvalue of $\Gamma^{(0)}$, $s_{j}^{(0)}$, and the projection of $\Gamma^{(0)}$ onto the eigenspace of $\Gamma^{(0)}$, $a_{j}^{(t)}$) is due to the natural difficulty of the functional data problem as one needs to handle quantities in infinite dimensions. Specifically, we are dealing with vectors in $\mbR^{\infty}$ when bounding the approximation error in the excess risk of the learned source model on the target domain $E \langle X^{(0)}, \hat{\beta}_{S} - \beta_{S} \rangle_{L^{2}}^2$. The constant term for the approximation error will be affected by the maximum ratio of $s_{j}^{(0)}$ and $a_{j}^{(t)}$ overall dimensions, i.e., $\forall j\geq 1$, and Assumption~\ref{assump: eigen assumption} indeed gives a fine control about this maximum ratio to avoid an exploded approximation error. In the multivariate case, this is not a problem as these quantities are more easily manipulated. It's crucial to highlight that this unique challenge arises when dealing with functional objects due to their infinite-dimensional nature.

\subsection{Minimax Excess Risk of TL-FLR}\label{sec: er of TL-FLR}
We first provide the upper bound of excess risk on TL-FLR.
\begin{theorem}[Upper Bound]\label{thm: upper bound for TL-FLR}
Suppose Assumption~\ref{assump: eigen assumption} and~\ref{assump: EDR} hold. If $n_0/n_{\mcS}\nrightarrow0$, let $\xi(h,R) = \frac{h^2}{R^2}$, then for the output $\hat{\beta}$ of Algorithm~\ref{algo: A transfer algorithm},
\begin{equation}\label{eqn:upper}
\sup_{ \Theta(h,R) } \mcE\left(\hat{\beta}\right) = O_{\mbP}
\left( n_{\mcS} ^{-\frac{2r}{2r+1}} + n_0^{-\frac{2r}{2r+1}}\xi(h,R)  \right),
\end{equation}
 if $\lambda_1 \asymp n_{\mcS} ^{-\frac{2r}{2r+1}}$ and $\lambda_2 \asymp n_0^{-\frac{2r}{2r+1}}$ where $\lambda_1$ and $\lambda_2$ are regularization parameters in Algorithm~\ref{algo: A transfer algorithm}.
\end{theorem}
Theorem~\ref{thm: upper bound for TL-FLR} provides the excess risk upper bound of $\hat{\beta}$, which bounds the excess risk by two terms. The first term comes from the transfer step and depends on the sample size of sources in $\mcS$, while the second term is due to only using the target dataset to learn the offset. In the trivial case when $\mcS=\emptyset$, the upper bound becomes $O_{\mbP}( n_{0}^{-2r/(2r+1)} )$, which coincides with the upper bound of target-only baseline OFLR \citep{cai2012minimax}. When $\mcS \neq \emptyset$, compared with the excess risk of the target-only baseline, we can see the sample size $n_{\mcS}$ in source models and the factor $\xi(h,R)$ are jointly affecting the transfer learning. The factor $\xi(h,R)$ represents the relative task signal strength between the source and target tasks. Geometrically, one can interpret $\xi(h,R)$ as the factor that roughly controls the angle between the source and target models within the RKHS.

\begin{figure}[ht]
\begin{center}
\centerline{\includegraphics[page = 1,width=0.3\columnwidth]{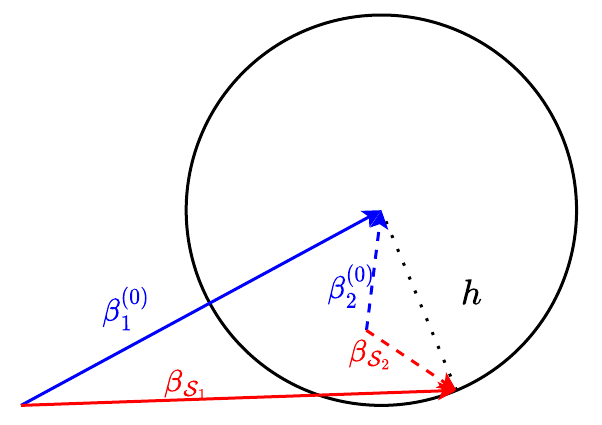}}
\caption{Geometric illustration for how $\xi(h,R)$ will affect the transfer dynamic. The circle represents an RKHS ball centered around $\beta^{(0)}$ with radius $h$. With the same $h$, larger signal strength of $\beta_{\mcS}$, i.e. $\|\beta_{\mcS_{1}}\|_{K}$ leads to smaller $\xi(h,R)$, while smaller signal strength of $\beta_{\mcS}$, i.e. $\|\beta_{\mcS_{2}}\|_{K}$ leads to larger $\xi(h,R)$.}
\label{fig: angle fig}
\end{center}
\vskip -0.2in
\end{figure}

Figure~\ref{fig: angle fig} shows how $n_{\mcS}$ and $\xi(h,R)$ impact the learning rate. When the $\beta_{S}$ and $\beta^{(0)}$ are more concordant ($\beta_{\mcS_{1}}$ and $\beta_{1}^{(0)}$), the angle between them is small and thus so the $\xi(h,R)$, making the second term in the upper bound negligible in the excess risk, and thus the risk converges faster compared to baseline $n_{0}^{-2r/(2r+1)}$ given sufficiently large $n_{\mcS}$. If $\beta_{S}$ and $\beta^{(0)}$ are less concordant ($\beta_{\mcS_{2}}$ and $\beta_{2}^{(0)}$), leveraging $\beta_{\mcS}$ will be less effective since a large $\xi(h,R)$ will make the second term the dominate term. 

It is worth noting that most of the existing literature fails to identify how $\xi(h,R)$ affects the effectiveness of OTL. For example, in \citet{wang2016nonparametric,du2017hypothesis}, this factor does not appear in the upper bound, and they claim $n_{S} \gg n_{0}$ provides successful transfer from source to target. In high-dimensional linear regression \citep{li2022transfer,tian2022transfer}, the authors only identify $\xi(h,R)$ is proportional to $h$ and claims a small $h$ can provide a faster convergence rate excess risk. However, our analysis (Figure~\ref{fig: angle fig}) shows that even with the same $h$, the similarity of the two tasks can be different given different signal strengths of $\beta_{\mcS}$, which will also affect the effectiveness of OTL. To this end, this reveals that one cannot obtain a faster excess risk in OTL by simply including more source datasets (larger $n_{\mcS}$), but should also carefully select or construct the $\mcS$,
i.e., the more source data are available, the more strict one should be with what sources one uses to build $\hat{\beta}_{\mcS}$ in the OTL framework.

\begin{theorem}[Lower Bound]\label{thm: lower bound for TL-FLR}
Under the same condition of Theorem~\ref{thm: upper bound for TL-FLR}, for any possible estimator $\tilde{\beta}$ based on $\{ \mcD^{(t)} : t\in \left\{0 \right\} \cup \mcS \}$, the excess risk of $\tilde{\beta}$ satisfies
\begin{equation}\label{lowerbound formula}
    \lim_{a\rightarrow 0} \lim_{n \rightarrow \infty}   \inf_{\tilde{\beta}} \sup_{\Theta(h,R) } P\left\{ \mcE\left( \tilde{\beta} \right) \geq  a \left( n_{\mcS} ^{-\frac{2r}{2r+1}} + n_{0}^{-\frac{2r}{2r+1}} \xi(h,R) \right) \right\} = 1.
\end{equation}
\end{theorem}
Combining Theorems~\ref{thm: upper bound for TL-FLR} and~\ref{thm: lower bound for TL-FLR} implies that the estimator from TL-FLR is rate-optimal in excess risk. The proof of the lower bound is based on considering the lower bound of two cases: (1) the ideal case where $\beta^{(t)} = \beta^{(0)}$ for all $t\in \mcS$ and (2) the worst case where $\beta^{(t)}\equiv0$, meaning no knowledge should be transferred at all.

\subsection{Excess Risk of ATL-FLR}\label{sec: er of ATL-FLR}
In this subsection, we study the excess risk for ATL-FLR. As we discussed before, making ATL-FLR achieve similar performance to TL-FLR relies on the fact that there exists a $\hat{\mcS}_{t}$ such that it equals to the true $\mcS$ (so $\hat{\beta}(\hat{\mcS}_{t}) = \hat{\beta}(\mcS)$) with high probability. Therefore, to ensure the $\mcF$ constructed in Step 2 of Algorithm~\ref{algo: aggregate transfer algorithm} satisfies such a property, we impose the following assumption to guarantee the identifiability of $\mcS$ and thus ensure the existence of such $\hat{\mcS}_{t}$.
\begin{assumption}[Identifiability of $\mcS$]\label{assump: identifiability}
Suppose for any $h$, there is an integer $M$ such that 
\begin{equation*}
    \min_{t\in \mcS^{c}} \| \beta_0 - \beta_{t}\|_{K^{M}} > h,
\end{equation*}
where $\|\cdot\|_{K^{M}}$ is the truncated version of $\|\cdot\|_{K}$ defined in Algorithm~\ref{algo: aggregate transfer algorithm}.
\end{assumption}
Assumption~\ref{assump: identifiability} ensures that $\forall t\in \mcS^{c}$, there exists a finite-dimensional subspace of $\mcH_{K}$, such that the norm of the projection of the contrast function, $\delta^{(t)}$, on this subspace is already greater than $h$. This assumption indeed eliminates the existence of $\beta^{(t)}$, for $t\in \mcS^{c}$, that live on the boundary of the RKHS-ball centered at $\beta^{(0)}$ with radius $h$ in $\mcH_{K}$. Under Assumption~\ref{assump: identifiability}, we now show the $\mcF$ constructed in Algorithm~\ref{algo: aggregate transfer algorithm} guarantees the existence of $\hat{\mcS}_{t}$.
\begin{theorem}\label{thm: aggregation consistency}
Suppose Assumption~\ref{assump: identifiability} holds, then
\begin{equation*}
    \max_{t\in \mcS} \Delta_{t} < \min_{t\in \mcS^{c}} \Delta_{t} \quad \text{and} \quad \mathbb{P}\left( \max_{t\in \mcS} \hat{\Delta}_{t} < \min_{t\in \mcS^{c}} \hat{\Delta}_{t} \right) \rightarrow 1,
\end{equation*}
and hence there exists a $t$ s.t. $\hat{\mcS}_{t}\in\mcF$ and
\begin{equation*}
    \mathbb{P}\left( \hat{\mcS}_{t} = \mcS \right) \rightarrow 1.
\end{equation*}
\end{theorem}
\begin{remark}
Assumption~\ref{assump: identifiability} ensures a sufficient gap between those $\Delta_{t}$ that belong to $\mcS$ and those that don't, which ensures their estimated counterpart will also possess this gap with high probability, making one of the $\hat{\mcS}$ consistent with $\mcS$.
\end{remark}
With Proposition~\ref{prop: oracle inequality of sparse aggregation}, which states the cost of sparse aggregation in Appendix \ref{sub appendix: prop}, and the excess risk of TL-FLR in Theorem~\ref{thm: aggregation consistency}, we can establish the excess risk for ATL-FLR.

\begin{theorem}[Upper Bound of ATL-FLR]\label{thm: upper bound ATL-FLR}
Let $\hat{\beta}_{a}$ be the output of Algorithm~\ref{algo: aggregate transfer algorithm}, then under the same settings of Theorem~\ref{thm: upper bound for TL-FLR} and Assumption~\ref{assump: identifiability},
\begin{equation*}
 \sup_{ \Theta(h,R) }  \mcE\left(\hat{\beta}_{a}\right)  =   
 O_{\mbP} \bigg(  \underbrace{n_{\mcS}^{-\frac{2r}{2r+1}} + n_0^{-\frac{2r} {2r+1}}\xi(h,R)}_{\text{transfer learning risk}} + \underbrace{\frac{log(T)log(n_0)}{n_0}}_{\text{aggregation cost}} \bigg).
\end{equation*}
\end{theorem}
One interesting note is that the transfer learning risk is the classical nonparametric rate, while the aggregation cost is parametric (or nearly parametric). Therefore, the aggregation cost usually decays substantially faster than the transfer learning risk. However, for transfer learning in high-dimensional linear regression, such a faster-decayed aggregation cost is diminished since the transfer learning risk is also parametric \citep{li2022transfer}.

\section{Extension to Functional Generalized Linear Models}\label{sec: extension to FGLM}
In this section, we show that our approaches in the FLR model can be naturally extended to the functional generalized linear model (FGLM) settings, which includes wider application scenarios like classification. To start, consider the following series of FGLM models similar to the FLR setting (\ref{eqn: model setup}), 
\begin{equation*}\label{eqn: FGLM form}
    \mbP  (Y_{i}^{(t)}|X_{i}^{(t)}) = \rho(Y_{i}^{(t)}) \exp\left\{ \frac{Y_{i}^{(t)} \eta(  \theta_{i}^{(t)} ) - \psi( \theta_{i}^{(t)}) }{d(\tau)}  \right\},
\end{equation*}
where $i \in [n_{t}]$ and $t\in [T]$, $\theta_{i}^{(t)} = \langle X_{i}^{(t)}, \beta^{(t)} \rangle_{L^{2}}$ is the canonical parameter. The functions $\rho, \eta, \psi, d$ are known, and $\tau$ is either known or out-of-interest parameter that is independent of $X^{(t)}$. In this paper, we consider $\eta$ to take the canonical form, i.e., $\eta(x) = x$. The GLMs are characterized by the different $\psi$. For example, in linear regression with Gaussian response, $\psi(x) = x^{2}/2$; in the logistic regression with binary response, $\psi(x) = log(1+e^{x})$; and in Poisson regression with non-negative integer response, $\psi(x) = e^{x}$.

A standard method for addressing GLM involves minimizing the loss function defined as the negative log-likelihood. Therefore, to implement the transfer learning for FGLM, one can naturally substitute the square loss in TL-FLR (Algorithm~\ref{algo: A transfer algorithm}) and ATL-FLR (Algorithm~\ref{algo: aggregate transfer algorithm}) with the negative log-likelihood loss, i.e.
\begin{equation*}
    \ell(\beta, Z_{i}^{(t)}) = - Y_{i}^{(t)} \eta(  \theta_{i}^{(t)} ) + \psi( \theta_{i}^{(t)} ).
\end{equation*}
We refer to these transfer learning algorithms for FGLM as TL-FGLM and ATL-FGLM.  To establish the optimality of TL-FGLM and ATL-FGLM, the following technical assumptions are required.
\begin{assumption}\label{assump: FGLM Lip continuous}
    Assume $\psi$ is Lipschitz continuous on its domain, and $\psi' < \infty$.
\end{assumption}
\begin{assumption}\label{assump: FGLM bounded second derivative}
    Assume there exist constants $0<\mcA_{1}\leq\mcA_{2}<\infty$ such that the function $\psi''$ satisfies 
    \begin{equation*}
        \mcA_{1} \leq \inf_{s \in \mcT} \psi''(s) \leq \psi''(s) \leq \sup_{s \in \mcT} \psi''(s) \leq \mcA_{2}.
    \end{equation*}
\end{assumption}
Assumption~\ref{assump: FGLM Lip continuous} is natural in most GLM literature and is satisfied by many popular exponential families. Assumption~\ref{assump: FGLM bounded second derivative} restricts the $\psi''$ in the bounded region and thus restricts the variance of $y$. 

Since the conditional mean for FGLM is $\E[Y_{i}|X_{i}] = \eta'(\langle \beta, X^{(0)} \rangle_{L^2})$, we evaluate the accuracy by excess risk, i.e. $\mcE(\hat{\beta}):= \E_{X^{(0)}} [ \eta'(\langle \hat{\beta}, X^{(0)} \rangle_{L^2}) -\eta'(\langle \beta^{(0)}, X^{(0)} \rangle_{L^2}) ]^2$. 
\begin{theorem}\label{thm: bounds for TL-FGLM}
Under the same assumption of Theorem~\ref{thm: upper bound for TL-FLR} and suppose Assumptions~\ref{assump: FGLM Lip continuous},~\ref{assump: FGLM bounded second derivative} hold. 
\begin{enumerate}
    \item (Lower Bound) For any possible estimator $\tilde{\beta}$ based on target and source datasets, the excess risk of $\tilde{\beta}$ satisfies
    \begin{equation*}
        \lim_{a\rightarrow 0} \lim_{n \rightarrow \infty}   \inf_{\tilde{\beta}} \sup_{ \Theta(h,R) } P\left\{ \mcE\left( \tilde{\beta} \right) \geq  a \left( n_{\mcS} ^{-\frac{2r}{2r+1}} + n_{0}^{-\frac{2r}{2r+1}} \xi(h,R)  \right) \right\} = 1.
    \end{equation*}

    \item (Upper Bound) If $n_0/n_{\mcS}\nrightarrow0$ and $\lambda_1 \asymp n_{\mcS} ^{-\frac{2r}{2r+1}}$ and $\lambda_2 \asymp n_0^{-\frac{2r}{2r+1}}$, then for the output $\hat{\beta}$ of TL-FGLM,
    \begin{equation*}\label{eqn: upper for FGLM}
    \sup_{ \Theta(h,R) } \mcE\left(\hat{\beta}\right) = O_{\mbP}
    \left( n_{\mcS} ^{-\frac{2r}{2r+1}} + n_0^{-\frac{2r}{2r+1}}\xi(h,R)  \right).
    \end{equation*}
\end{enumerate}
\end{theorem}
\begin{remark}
    The error bounds of TL-FLR and TL-FGLM are the same, which is consistent with the case in the target-only learning between FLR and FGLM, see \citet{cai2012minimax,du2014penalized}. However, we note that the proof is not a trivial extension of FLR since minimizing the regularized negative likelihood usually will not provide an analytical solution.
\end{remark}
\begin{remark}
    Due to the same upper bound for TL-FLR and TL-FGLM, the upper bound of ATL-FGLM is the same as ATL-TLR, i.e., with the same aggregation cost.
\end{remark}

\section{Experiments}\label{sec: experiments}

We illustrate the proposed transfer learning algorithms by conducting experiments on synthetic data and two real-world applications, including a financial market data application for FLR and a wearable device detection application for FGLM. \footnote{The R code and the application datasets are available in \url{https://github.com/haotianlin/HTL-FLM}.}

\subsection{Simulations}
We consider the following algorithms: OFLR, TL-FLR, ATL-FLR, Detection Transfer Learning (Detect-TL) \citep{tian2022transfer}, and Exponential Weighted ATL-FLR (ATL-FLR (EW)) \citep{li2022transfer}. 
To set up the RKHS, we consider the setting in \citet{cai2012minimax}. Let $\psi_k(s) = \sqrt{2}\cos(\pi k s)$ for $j\geq 1$ and define the reproducing kernel $K$ of $\mcH_{K}$ as $  K(\cdot,\cdot) = \sum_{k=1}^{\infty} k^{-2} \psi_{k}(\cdot) \psi_{k}(\cdot)$.

For the target model, $\beta^{(0)}(s)$ is set to be (1) $\beta_{1}^{(0)}(s) = \sum_{k=1}^{\infty}  4\sqrt{2}(-1)^{k-1}k^{-2}  \psi_{k}(s)$; (2) $\beta_{2}^{(0)}(s) = 4\cos(3\pi s)$; (3) $\beta_{3}^{(0)}(s) = 4\cos(3\pi s) + 4\sin(3 \pi s)$. For a specific $h$, let $\mcS = \{ l:  \|\beta_{0} - \beta_{t}\|_{K} \leq h \}$, then we generate source models as follows. We scale target models such that their RKHS norm is $20$. If $t\in \mcS$, then $\beta_{t}(t)$ is set to be $\beta_{t}(s) = \beta_0(s) + \sum_{k=1}^{\infty} \left( \mathcal{U}_{k} (\sqrt{12}h/\pi k^{2} ) \right) \psi_{k}(s)$ with $\mathcal{U}_{k}$'s i.i.d. uniform random variable on $[-1,1]$. If $t\in \mcS^{c}$, then $\beta_{t}(s)$ is generated from a Gaussian process with mean function $\cos(2\pi s)$ with kernel $\exp(-15|s-t|)$ as covariance kernel. The predictors $X^{(t)}$ are i.i.d. generated from a Gaussian process with the mean function $\sin(\pi s)$ and the covariance kernels are set to be Mat\'ern kernel $C_{\nu, \rho}$ \citep{cressie1999classes} where the parameter $\nu$ controls the smoothness of $X^{(t)}$. We set the covariance kernel of $X^{(t)}$ as $C_{1/2, 1}$ for the target tasks and $C_{3/2, 1}$ for source tasks, to fulfill Assumptions~\ref{assump: eigen assumption}. We note that such a setting is more challenging than assuming that the target and source tasks have the same covariance kernel. All functions are generated on $[0,1]$ with 50 evenly spaced points and we set $n_0 = 150$ and $n_{t} = 100$. For each algorithm, we set the regularization parameters as $\lambda_1$ and $\lambda_2$ as the optimal values in Theorem~\ref{thm: upper bound for TL-FLR} and select the constants using cross-validation. The excess risk for the target tasks is calculated via the Monte-Carlo method by using $1000$ newly generated predictors $X^{(0)}$.

\begin{figure*}[ht]
    \centering
    \includegraphics[width = \textwidth]{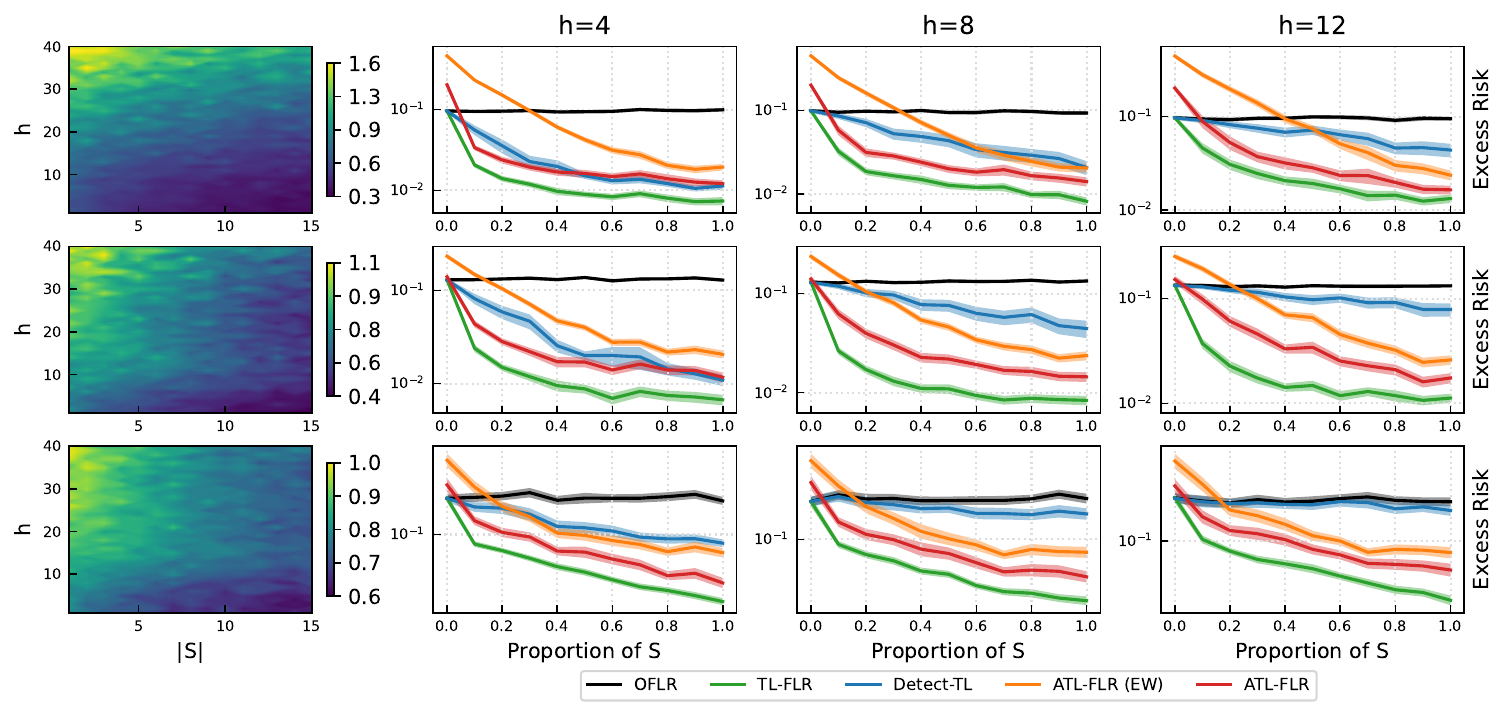}
    \caption{Left panel (Heatmap): log relative excess risk of TL-FLR to OFLR. Right panels (Line Chart): Excess Risk of different transfer learning algorithms. Each row corresponds to a $\beta^{(0)}$, and the y-axes for each row are under the same scale. The result for each sample size is an average of 100 replicate experiments, with the shaded area indicating $\pm$ two standard errors.}
    \label{fig: Combination Results}
\end{figure*}

In the left panel of Figure~\ref{fig: Combination Results}, we compare TL-FLR with OFLR by considering \textit{relative excess risk}, i.e., the ratio of TL-FLR's excess risk to OFLR's. We note that since the RKHS of $\beta^{(0)}$ is fixed, the magnitude of $h$ is proportional to $\xi(h,\beta_{\mcS})$, and a large $h$ indicates less similarity between sources and target tasks. Overall, the effectiveness of TL-FLR for different $\beta^{(0)}$ presents a consistent pattern, i.e., with more transferable sources involved and smaller $h$ (bottom right), TL-FLR has a more significant improvement, while with fewer sources and larger $h$ (top left), the transfer may be worse than OFLR. 

In the right panel of Figure~\ref{fig: Combination Results}, we evaluate ATL-FLR under unknown $\mcS$ cases. We set $\mcS$ as a random subset of $\{1,2,\cdots,20\}$ such that $|\mcS|$ is equal to $0,2,\cdots,20$. We also implement TL-FLR using true $\mcS$ and OFLR as baseline. In all scenarios, ATL-FLR outperforms all its competitors. Comparing ATL-FLR with ATL-FLR(EW), even though ATL-FLR(EW) has similar patterns as ATL-FLR, we can see the gaps between the two curves are larger when the proportion of $\mcS$ is small, showing that ATL-FLR(EW) is more sensitive to source tasks in $\mcS^c$, while ATL-FLR is less affected. Detect-TL only has a considerable reduction on the excess risk with relatively small $h$, but provides limited improvement when $h$ is large, indicating its limited performance when limited knowledge is available in sources.

\subsection{Application for Predicting Stock Sector Returns with FLM}
In this section, we demonstrate an application of the proposed algorithms in the financial market. The goal of portfolio management is to balance future stock returns and risk, and thus, investors can rebalance their portfolios according to their goals. Some investors may be interested in predicting the future stock returns in a specific sector, and transfer learning can borrow market information from other sectors to improve the prediction of interest.

In this stock data application, for two given adjacent months, we focus on utilizing the Monthly Cumulative Return (MCR) of the first month to predict the Monthly Return (MR) of the subsequent month and improving the prediction accuracy on a certain sector by transferring market information from other sectors. Specifically, suppose for a specific stock, the daily price for the first month is $\{s^{1}(t_0),s^{1}(t_1),\cdots,s^{1}(t_m)\}$ and for the second month is $\{s^{2}(t_0),s^{2}(t_1),\cdots,s^{2}(t_m)\}$, then the predictors and responses are expressed as
\begin{equation}\label{eqn: real data formulation}
    X(t) = \frac{s^{1}(t) - s^{1}(t_0)}{s^{1}(t_0)} \quad \text{and} \quad Y = \frac{s^{2}(t_{m}) - s^{2}(t_0)}{s^{2}(t_0)}.
\end{equation}
The stock price data are collected from \href{https://finance.yahoo.com/}{Yahoo Finance}, and we focus on stocks whose corresponding companies have a market cap of over $20$ billion. We divide the sectors based on the division criteria on \href{https://www.nasdaq.com/market-activity/stocks/screener}{Nasdaq}. The raw data obtained from websites is processed to match the format in (\ref{eqn: real data formulation}).

After pre-processing, the dataset consists of total $11$ sectors: Basic Industries (BI), Capital Goods (CG), Consumer Durable (CD), Consumer Non-Durable (CND), Consumer Services (CS), Energy (E), Finance (Fin), Health Care (HC), Public Utility (PU), Technology (Tech), and Transportation (Trans), with the number of stocks in each sector as $60,58,31,30,104,55,70,68,46,103,41$. The period of the stock price is 05/01/2021 to 09/30/2021.

We compare the performance of \textit{Pooled Transfer (Pooled-TL)}, \textit{Naive Transfer (Naive-TL)}, \textit{Detect-TL}, \textit{ATL-FLR(EW)} and \textit{ATL-FLR}. Naive-TL implements TL-FLR by setting all source sectors belonging to $\mcS$, while the Pooled-TL one omits the calibration step in Naive-TL, and the other three are the same as the former simulation section. The learning of each sector is treated as the target task each time, and all the other sectors are sources. We randomly split the target sector into the train ($80\%$) and test ($20\%$) sets and report the ratio of the four approaches' prediction errors to OFLR's on the test set. We consider the Mat\'ern kernel as the reproducing kernel $K$ again. Specifically, we set $\rho=1$ and $\nu = 1/2,3/2,\infty$ (where $\nu = 1/2$ is equivalent to the exponential kernel and $\nu = \infty$ is equivalent to the Gaussian kernel), which endows $K$ with different smoothness properties. The tuning parameters are selected via Generalized Cross-Validation(GCV). Again, we replicate the experiment $100$ times and report the average prediction error with standard error in Figure~\ref{fig: financial market results}.

\begin{figure}[htbp]
    \centering
    \subfloat[$\nu = \frac{1}{2}$ (Exponential Kernel)]{
    \includegraphics[page = 1,width=0.7\textwidth]{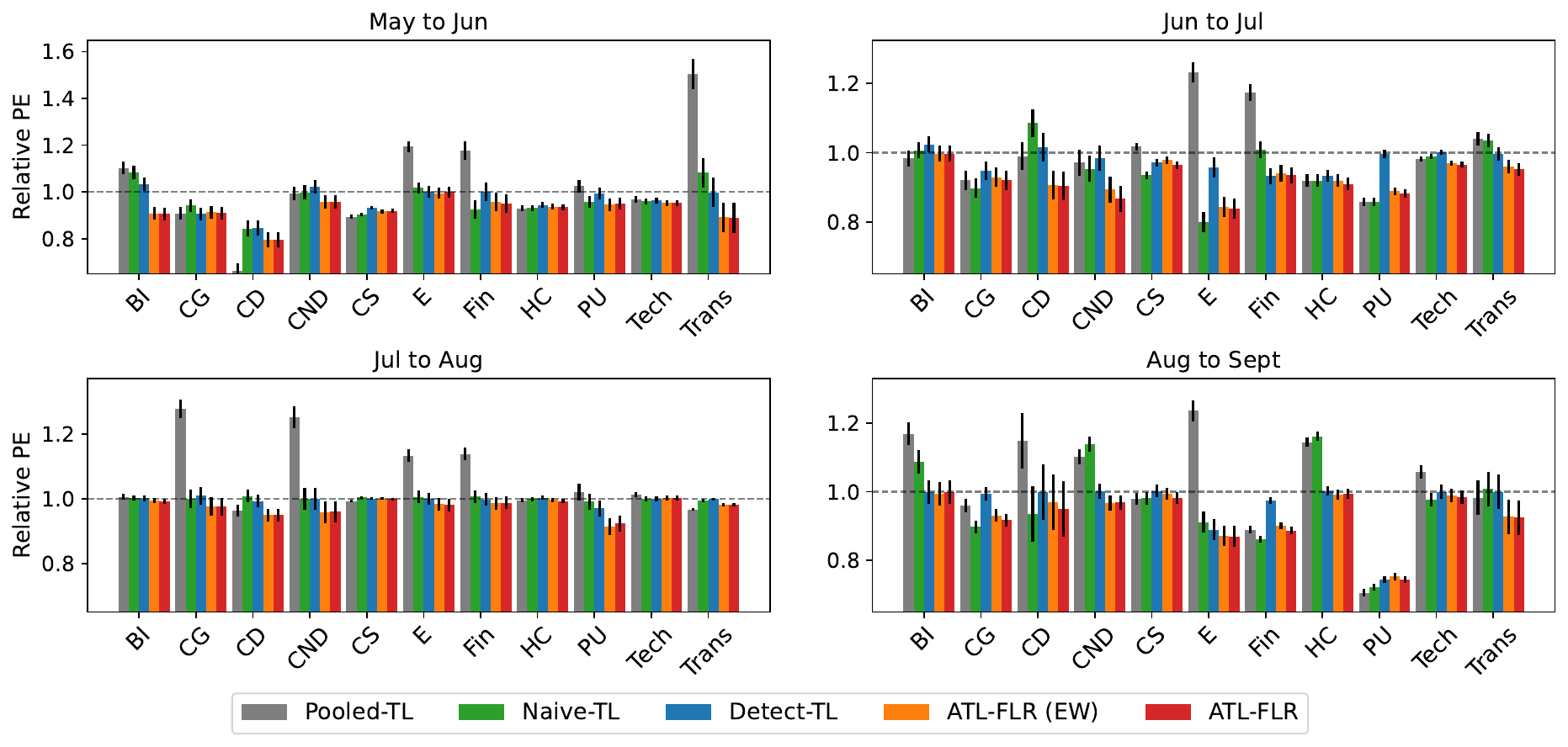}
    }

    \subfloat[$\nu = \frac{3}{2}$ ]{
    \includegraphics[page = 1,width=0.7\textwidth]{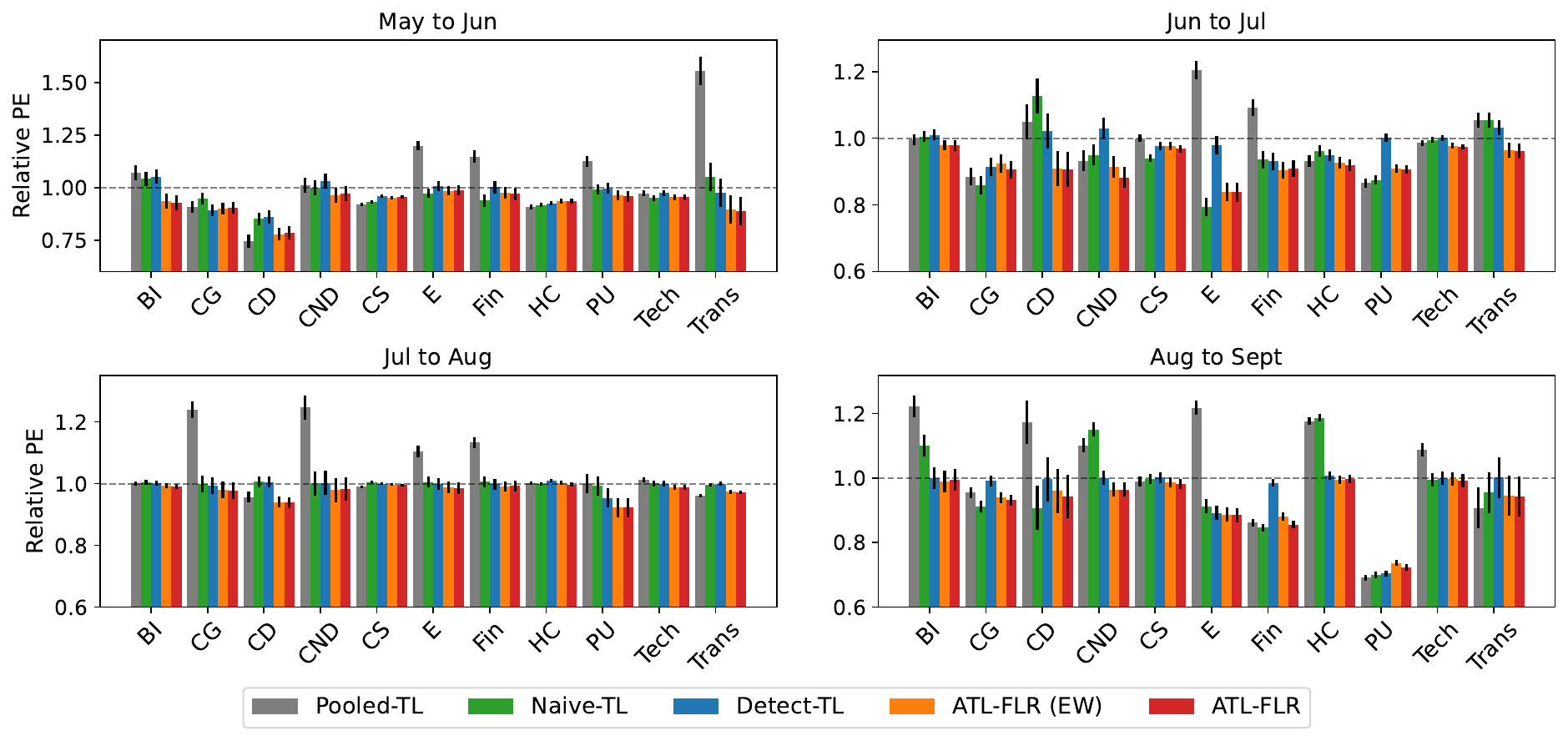}
    }

    \subfloat[$\nu = \infty$ (Gaussian Kernel)]{
    \includegraphics[page = 1,width=0.7\textwidth]{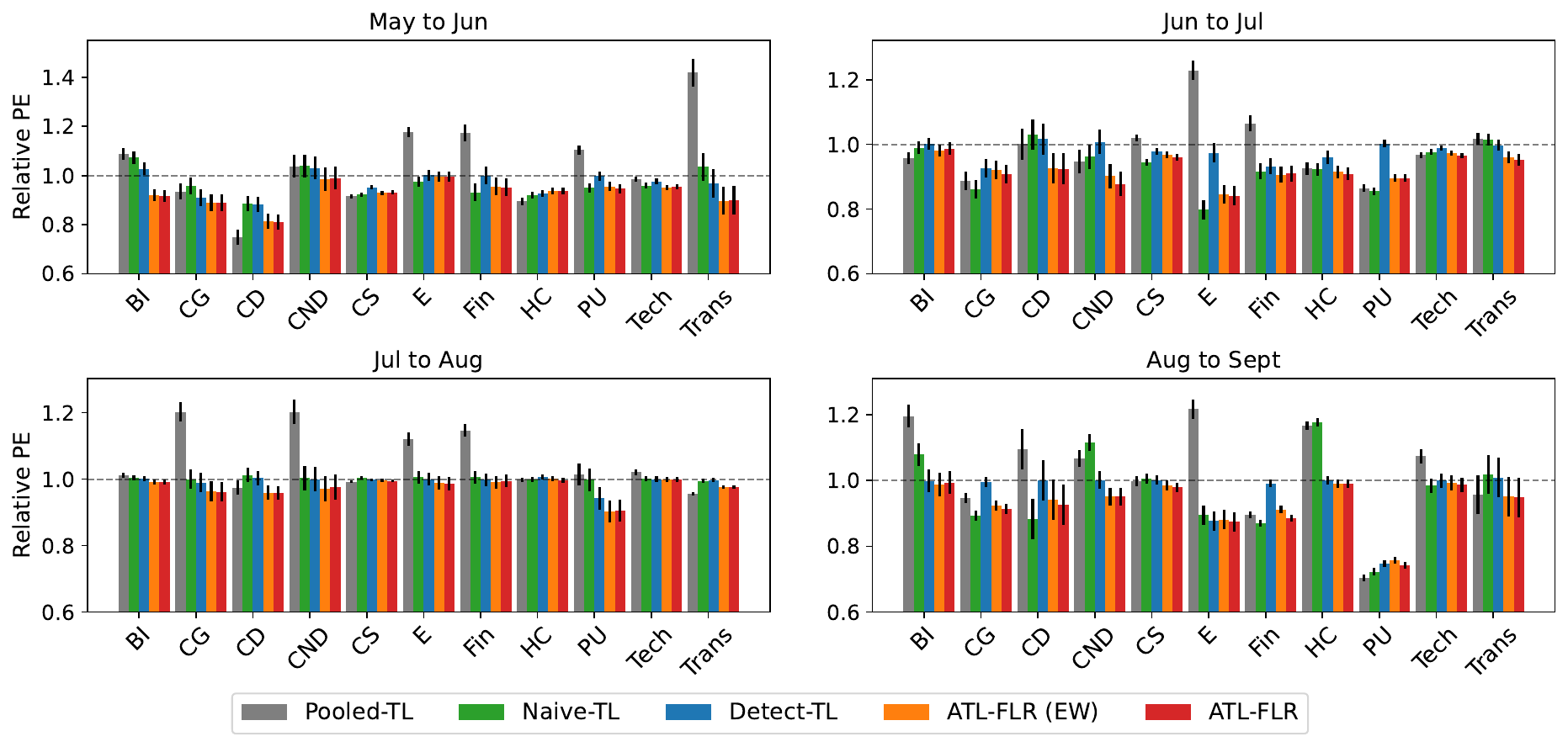}
    }
    
    \caption{Relative prediction error of Pooled-TL, Naive-TL, Detect-TL, ATL-FLR(EW), and ATL-FLR to OFLR for each target sector. Each bar is an average of 100 replications, with the standard error as the black line.}
    \label{fig: financial market results}
\end{figure}

First, we note that the Pooled-TL and Naive-TL only reduce the prediction error in a few sectors, but make no improvement or even downgrade the predictions in most sectors. This implies that the effect of direct transfer learning can be quite random, as it can benefit the prediction of the target sector when it shares high similarities with other sectors, while having worse performance when similarities are low. Besides, Naive-TL shows an overall better performance than Pooled-TL, demonstrating the importance of the calibration step. For Detect-TL, all the ratios are close to $1$, showing its limited improvement, which is as expected, as it can miss positive transfer sources easily. Finally, both ATL-FLR(EW) and ATL-FLR provide more robust improvements on average. We can see both of them have improvements across almost all the sectors, regardless of the similarity between the target sector and the source sectors. Comparing the results from different kernels, we can see the improvement patterns are consistent across all the sectors and adjacent months, showing the proposed algorithms are also robust to different reproducing kernels.

\subsection{Application for Classifying Human Activity with FGLM}

Personal wearable devices have become increasingly popular as they can detect users' movements and provide feedback/records. However, the limited data can make the device's detection inaccurate for a newly registered user. The technology can make detecting new users' actions more accurate by leveraging learned hypotheses from other users with similar features. Under this context, we consider the Human Activity Recognition (HAR) dataset \citep{anguita2013public}, which contains the recordings of volunteers performing daily living activities, including walking, walking upstairs, walking downstairs, sitting, standing, and lying. Each volunteer carried a waist-mounted smartphone with an embedded accelerometer and gyroscope sensors to capture the body acceleration and gravity signal. 

In this section, we evaluate the efficacy of our proposed transfer learning method for FGLM and specifically aim at functional logistic regression. The goal is to distinguish the actions of walking upstairs (marked as $1$) and walking downstairs (marked as $0$) by applying functional logistic regression with the covariate $X$ as the body acceleration signal in the vertical direction over time. We treat the classification for each volunteer as a separate task. After preprocessing, there are a total of $30$ volunteer datasets. Each dataset is balanced in terms of label proportion, and each sample size is between $78$ and $127$. The covariate of body acceleration signals is measured in equal spacing $128$ consecutive time points, and we pick the first 32 points to reduce motion cycles. For each dataset, we randomly split the samples into a training set $(80\%)$ and test set $(20\%)$. Each volunteer is treated as a target each time, and all the other volunteers are used as sources.

We compare the performance of our proposed \textit{ATL-FLR(EW)} and \textit{ATL-FLR} with the non-transfer baseline \textit{OFLR} and some competitors, including \textit{Pooled Transfer (Pooled-TL)}, \textit{Naive Transfer (Naive-TL)}, \textit{Detect-TL}. We consider the Mat\'ern kernel $K_{\nu,\rho}$ \citep{cressie1999classes} as the reproducing kernel and set $\rho=1$ and $\nu = 1/2$. The regularization parameters are selected via Generalized Cross-Validation (GCV). We repeat the experiment $100$ times to assess the variability in train/test data split and report the misclassification rate (in percentage) on test data with standard error. The threshold for all binary classification is set to be 0.5. The results are placed in Table~\ref{table: misclassification rate for FGLM application}.

Based on Table~\ref{table: misclassification rate for FGLM application}, the results demonstrate that the proposed ATL-FGLM can provide the lowest misclassification rate for most of the volunteers. In cases where ATL-FGLM is not the best, its misclassification rate is also close to the lowest (e.g., Volunteer 9,17,27), which verifies its robustness to non-informative sources.

\begin{table}[!htbp]
    \centering
        \caption{Misclassification rates $(\%)$ on the test set for each target volunteer with standard errors in subscript.}
        \resizebox{\columnwidth}{!}{
    \begin{tabular}{c p{20mm}p{20mm}p{20mm}p{20mm}p{20mm}p{20mm}}
    \hline
    \multirow{2}{*}{Target} & \multicolumn{6}{c}{Transfer Learning Algorithms} \\ \cline{2-7}
    & Non-TL & Pooled-TL & Naive-TL & Detect-TL & ATL-FGLM \newline (EW) & ATL-FGLM \\ 
    \hline
    Volunteer1 & $5.548_{0.395}$ & $8.290_{0.408}$ & $5.129_{0.364}$ & $5.194_{0.346}$ & $4.774_{0.365}$ & $\textbf{4.387}_{0.365}$  \\ 
    Volunteer2 & $15.679_{0.522}$ & $17.679_{0.541}$ & $16.536_{0.514}$ & $15.714_{0.520}$ & $15.179_{0.521}$ & $\textbf{15.036}_{0.528}$  \\ 
    Volunteer3 & $21.879_{0.747}$ & $21.576_{0.600}$ & $21.121_{0.628}$ & $21.121_{0.650}$ & $18.667_{0.716}$ & $\textbf{17.758}_{0.680}$  \\ 
    Volunteer4 & $25.000_{0.705}$ & $25.767_{0.693}$ & $23.467_{0.686}$ & $\textbf{21.133}_{0.669}$ & $22.467_{0.701}$ & $21.300_{0.672}$  \\ 
    Volunteer5 & $29.000_{0.913}$ & $43.857_{0.791}$ & $28.786_{0.837}$ & $29.393_{0.804}$ & $24.714_{0.800}$ & $\textbf{24.357}_{0.779}$  \\ 
    Volunteer6 & $16.448_{0.600}$ & $\textbf{15.345}_{0.544}$ & $16.103_{0.572}$ & $16.103_{0.550}$ & $16.414_{0.572}$ & $16.276_{0.565}$  \\ 
    Volunteer7 & $12.759_{0.489}$ & $15.448_{0.638}$ & $14.207_{0.599}$ & $13.379_{0.574}$ & $12.483_{0.543}$ & $\textbf{12.483}_{0.530}$  \\ 
    Volunteer8 & $27.000_{0.697}$ & $24.304_{0.801}$ & $23.913_{0.684}$ & $25.087_{0.762}$ & $25.609_{0.786}$ & $\textbf{23.609}_{0.780}$  \\ 
    Volunteer9 & $\textbf{23.536}_{0.614}$ & $25.679_{0.623}$ & $25.036_{0.667}$ & $25.286_{0.651}$ & $24.714_{0.665}$ & $24.536_{0.657}$  \\ 
    Volunteer10 & $31.120_{0.732}$ & $32.600_{0.674}$ & $30.160_{0.740}$ & $30.600_{0.736}$ & $30.080_{0.731}$ & $\textbf{29.480}_{0.764}$  \\ 
    Volunteer11 & $16.267_{0.575}$ & $14.400_{0.503}$ & $15.000_{0.487}$ & $14.200_{0.552}$ & $13.967_{0.552}$ & $\textbf{13.400}_{0.574}$  \\ 
    Volunteer12 & $6.667_{0.402}$ & $9.100_{0.452}$ & $6.633_{0.403}$ & $5.600_{0.367}$ & $5.467_{0.359}$ & $\textbf{4.767}_{0.352}$  \\ 
    Volunteer13 & $19.903_{0.539}$ & $18.903_{0.543}$ & $18.581_{0.552}$ & $19.065_{0.564}$ & $18.581_{0.567}$ & $\textbf{18.129}_{0.546}$  \\ 
    Volunteer14 & $37.300_{0.578}$ & $\textbf{33.533}_{0.688}$ & $35.833_{0.702}$ & $35.667_{0.692}$ & $35.733_{0.648}$ & $35.867_{0.679}$  \\ 
    Volunteer15 & $21.926_{0.618}$ & $23.000_{0.554}$ & $20.778_{0.595}$ & $20.926_{0.587}$ & $20.481_{0.628}$ & $\textbf{20.333}_{0.604}$  \\ 
    Volunteer16 & $18.379_{0.630}$ & $19.310_{0.600}$ & $17.931_{0.626}$ & $18.241_{0.632}$ & $17.897_{0.625}$ & $\textbf{17.655}_{0.620}$  \\ 
    Volunteer17 & $\textbf{17.964}_{0.614}$ & $20.250_{0.616}$ & $22.393_{0.716}$ & $20.857_{0.635}$ & $19.893_{0.564}$ & $18.821_{0.576}$  \\ 
    Volunteer18 & $24.971_{0.580}$ & $21.412_{0.591}$ & $20.029_{0.579}$ & $20.176_{0.545}$ & $19.559_{0.577}$ & $\textbf{18.471}_{0.567}$  \\ 
    Volunteer19 & $20.833_{0.657}$ & $19.875_{0.672}$ & $19.375_{0.631}$ & $\textbf{18.958}_{0.668}$ & $19.000_{0.668}$ & $19.167_{0.686}$  \\ 
    Volunteer20 & $15.966_{0.617}$ & $18.379_{0.618}$ & $15.897_{0.548}$ & $15.207_{0.565}$ & $15.586_{0.605}$ & $\textbf{15.069}_{0.549}$  \\ 
    Volunteer21 & $16.036_{0.550}$ & $18.321_{0.585}$ & $16.429_{0.585}$ & $15.357_{0.582}$ & $\textbf{15.107}_{0.534}$ & $15.643_{0.546}$  \\ 
    Volunteer22 & $19.500_{0.716}$ & $19.708_{0.758}$ & $19.750_{0.736}$ & $18.750_{0.734}$ & $18.500_{0.686}$ & $\textbf{18.458}_{0.689}$  \\ 
    Volunteer23 & $22.323_{0.648}$ & $20.387_{0.637}$ & $20.871_{0.626}$ & $21.871_{0.641}$ & $20.774_{0.642}$ & $\textbf{19.742}_{0.582}$ \\ 
    Volunteer24 & $16.314_{0.580}$ & $\textbf{15.171}_{0.529}$ & $15.400_{0.536}$ & $15.514_{0.550}$ & $15.543_{0.559}$ & $15.514_{0.554}$  \\ 
    Volunteer25 & $32.139_{0.583}$ & $30.194_{0.648}$ & $29.417_{0.595}$ & $29.444_{0.587}$ & $29.306_{0.593}$ & $\textbf{29.056}_{0.619}$  \\ 
    Volunteer26 & $6.844_{0.376}$ & $8.156_{0.389}$ & $7.437_{0.392}$ & $6.750_{0.392}$ & $6.937_{0.384}$ &  $\textbf{6.719}_{0.383}$ \\ 
    Volunteer27 & $8.500_{0.473}$ & $8.786_{0.414}$ & $7.643_{0.425}$ & $7.536_{0.476}$ & $\textbf{6.286}_{0.394}$ & $6.393_{0.407}$  \\ 
    Volunteer28 & $25.103_{0.709}$ & $24.586_{0.716}$ & $23.448_{0.712}$ & $24.172_{0.739}$ & $23.517_{0.726}$ & $\textbf{23.069}_{0.694}$  \\ 
    Volunteer29 & $39.276_{0.761}$ & $38.586_{0.738}$ & $37.345_{0.775}$ & $37.276_{0.752}$ & $37.552_{0.819}$ & $\textbf{37.103}_{0.846}$  \\ 
    Volunteer30 & $33.474_{0.749}$ & $33.263_{0.610}$ & $30.526_{0.672}$ & $32.632_{0.712}$ & $30.000_{0.777}$ & $\textbf{29.263}_{0.800}$ \\ 
    \hline
    \end{tabular} 
    }
    \label{table: misclassification rate for FGLM application}
\end{table}

\section{Discussion and Future Work}\label{sec: discussion}

This paper studies transfer learning under the functional linear model framework, including FLR and FGLM. We derive the asymptotic rates for the excess risk over the target domain and show a faster statistical rate depending on both source sample size and the magnitude of similarity across tasks. Our theoretical analysis helps researchers better understand the transfer dynamics of OTL. Moreover, we leverage the sparse aggregation to implement the transfer practically, alleviate the negative transfer effect, and achieve nearly optimal statistical rates.

\paragraph{The Role of Smoothness of Offset and Optimality under Misspecification.} In the current analysis, we assume that the offset slope function resides in $\mcH_{K}$ and shares the same smoothness as the target and source functions. This assumption is reasonable given the unknown true smoothness of the offset slope function. However, the success of OTL hinges on the offset slope function possessing a simpler structure (well-regularized) that can be effectively learned from a small sample size. Consequently, a critical open question emerges: if the offset slope function exhibits higher smoothness, how do we identify the different smoothness for the source and offset slope functions and subsequently apply the appropriate kernel to achieve optimal rates? Besides, the optimal rates rely on oracle conditions, i.e., correctly specified hypothesis space and known smoothness order for optimal regularization parameters. Since these conditions are impossible to hold in practice, how do we achieve optimal rates without them?

Recently, \citet{lin2024smoothness} provides the first attempt to explore these questions within the nonparametric regression setting. They find that the Gaussian kernel is an optimally robust solution against model misspecification and develop adaptive optimal rates under varying smoothness scenarios. Although their findings are specific to Sobolev spaces, it is worth investigating whether a similar solution exists for FLR and FGLM since the kernel in these contexts is a composition of the covariance kernel and the RKHS kernel.

\paragraph{Efficiently Building Source Hypotheses.} When constructing the source hypothesis $\hat{\beta}_{\mcS}$, Algorithm~\ref{algo: A transfer algorithm} employs a data fusion technique in the transfer step. Specifically, it consolidates all the source datasets in $\mcS$ and simultaneously trains a unified $\hat{\beta}_{\mcS}$. The time and space complexities for this step are $O(n_{\mcS}^{3})$ and $O(n_{\mcS}^2)$, respectively. In real-world transfer learning scenarios, the involvement of numerous large sample-size source datasets can lead to significant algorithmic scaling issues, thereby limiting the practicality of the transfer step and the proposed algorithm. A potential solution is to perform the learning in a distributed manner \citep{tong2021distributed,liu2022statistical}. Additionally, if the learned source models for each source domain are available, one might consider fusing these learned hypotheses instead of the raw data.

\bibliographystyle{plainnat}
\bibliography{ref}

\newpage
\section*{Content of Appendices}
\appendix
\startcontents[sections]
\printcontents[sections]{}{1}{}

\section{Background of RKHS and Integral Operators}
In this section, we will present some facts about the RKHS and also the integral operator of a kernel that are useful in our proof and refer readers to \citet{wendland2004scattered} for a more detailed discussion. 

Let $\mcT$ be a compact set of $\mbR$. For a real, symmetric, square-integrable, and semi-positive definite kernel $K: \mcT \times \mcT \rightarrow \mbR$, we denote its associated RKHS as $\mcH_{K}$. For the reproducing kernel $K$, we can define its integral operator $L_{K}: L^{2} \rightarrow L^{2}$ as 
\begin{equation*}
    L_{K} (f)(\cdot) = \int_{\mcT} K(s,\cdot)f(s) ds.
\end{equation*}
$L_{K}$ is self-adjoint, positive-definite, and trace class (thus Hilbert-Schmidt and compact). By the spectral theorem for self-adjoint compact operators, there exists an at most countable index set $N$, a non-increasing summable positive sequence $\{\tau_{j}\}_{j\geq 1}$ and an orthonormal basis of $L^{2}$, $\{e_{j}\}_{j\geq1}$ such that the integrable operator can be expressed as 
\begin{equation*}
    L_{K}(\cdot) = \sum_{j\in N} \tau_{j} \langle \cdot, e_{j} \rangle_{L^{2}} e_{j}.
\end{equation*}
The sequence $\{\tau_{j}\}_{j\geq 1}$ and the basis $\{e_{j}\}_{j\geq1}$ are referred as the eigenvalues and eigenfunctions. The Mercer's theorem shows that the kernel $K$ itself can be expressed as 
\begin{equation*}
    K(x,x') = \sum_{j\in N} \tau_{j} e_{j}(x) e_{j}(x'), \quad \forall x,x' \in \mcT,
\end{equation*}
where the convergence is absolute and uniform.

We now introduce the fractional power integral operator and the composite integral operator of two kernels. For any $s\geq 0$, the fractional power integral operator $L_{K}^{s}:L^{2} \rightarrow L^{2}$ is defined as 
\begin{equation*}
    L_{K}^{s}(\cdot) = \sum_{j\in N} \tau_{j}^{s} \langle \cdot, e_{j} \rangle_{L^{2}} e_{j}.
\end{equation*}
For two kernels $K_{1}$ and $K_{2}$, we define their composite kernel as 
\begin{equation*}
    (K_{1}K_{2})(x,x') = \int_{\mcT} K_{1}(x,s)K_{2}(s,x') ds,
\end{equation*}
and thus $L_{K_{2}K_{2}} = L_{K_{1}} \circ L_{K_{2}}$. Given these definitions, for a given reproducing kernel $K$ and covariance function $C$, the definition of $\Gamma$ in the main paper is
\begin{equation*}
   L_{\Gamma} = L_{K^{\frac{1}{2}}} \circ L_{C} \circ L_{K^{\frac{1}{2}}} \quad \text{and}\quad \Gamma := K^{\frac{1}{2}}CK^{\frac{1}{2}}.
\end{equation*}
If both $L_{K^{\frac{1}{2}}}$ and $L_{C}$ are bounded linear operators, the spectral algorithm guarantees the existence of eigenvalues $\{s_{j}\}_{j\geq1}$ and eigenfunctions $\{\psi_{j}\}_{j\geq1}$.

\section{Sparse Aggregation Process}\label{apd: sparse aggregation}
We provide the procedure of sparse aggregation in Step 4 of ATL-FLR (Algorithm~\ref{algo: aggregate transfer algorithm}) for completeness and refer readers to \citet{gaiffas2011hyper} for more details.

\begin{algorithm}[H]
\caption{Sparse Aggregation}\label{algo: sparse aggregation}
\begin{algorithmic}[1]
    \State {\bfseries Input:} The candidate set $\mcF$; target dataset $\mcD_{\mcI^c}^{(0)}$; pre-specified parameter $c$, $\phi$.\\
    
    \State Split $\mcD_{\mcI^c}^{(0)}$ into equal size set, with index set $\mcI_{1}^{c}$ and $\mcI_{2}^{c}$
   
    \State Use $\mcD_{\mcI_{1}^c}^{(0)}$ to define a random subset of $\mcF$ as
    \begin{equation*}
        \mcF_1 = \left\{ \beta \in \mcF : R_{n,\mcI_{1}^c}(\beta) \leq R_{n,\mcI_{1}^c}(\hat{\beta}_{n1}) + c \max \left( \phi \left\| \hat{\beta}_{n1} - \beta \right\|_{n,\mcI_{1}^c} , \phi^2  \right) \right\}
    \end{equation*}
    where 
    \begin{equation*}
    \left\| \beta \right\|_{n,\mcI_{1}^c}^2 = \frac{1}{|\mcI_{1}^c|} \sum_{i\in \mcI_{1}^c} \langle X_{i}^{(0)}, \beta \rangle_{L^2}^2, \quad R_{n,\mcI}(\beta) = \frac{1}{|\mcI|} \sum_{i\in \mcI} ( Y_{i}^{(0)} - \langle X_{i}^{(0)},  \beta \rangle_{L^2} )^2, \quad \hat{\beta}_{n1} = \underset{\beta\in \mcF}{ argmin} R_{n,\mcI_{1}^c}(\beta)
    \end{equation*}
    
    \State Set $\mcF_{2}$ as following:
    \begin{equation*}
        \mcF_2 = \{ c_1 \beta_1 + c_2 \beta_2 : \beta_1,\beta_2\in \mcF_1 \ \textit{and} \ c_1 + c_2 = 1 \}
    \end{equation*}
    then, return 
    \begin{equation*}
        \hat{\beta}_{a}  = \underset{ \beta \in \mcF_2 }{argmin} R_{n, \mcI_2^{c}}(\beta).
    \end{equation*}
\end{algorithmic}
\end{algorithm}

The sparse aggregation algorithm is stated in Algorithm~\ref{algo: sparse aggregation}. For the Oracle inequality and the pre-specified parameters $c$ and $\phi$, we refer the reader to Appendix~\ref{sub appendix: prop} for model details. Generally, the final aggregated estimator $\hat{\beta}_{a}$ will only select two best-performed candidates from the candidate set $\mcF$. This guarantees some of the incorrectly constructed $\hat{\mcS}$ are not involved in building $\hat{\beta}_{a}$ and thus alleviate the negative transfer sources.
\begin{remark}
    In \cite{gaiffas2011hyper}, the authors indicated $\hat{\beta}_{a}$ has an explicit solution with the form 
    \begin{equation*}
        \hat{\beta}_{a} = \hat{t} \hat{\beta}_{1} + (1 - \hat{t}) \hat{\beta}_{2}
    \end{equation*}
    with $\hat{\beta}_{1}$ and $\hat{\beta}_{2}$ in belongs to $\mcF$ and $\hat{t}$ has an analytical form. 
\end{remark}

\section{Proofs for Section~\ref{sec: theoretical section for FLR}}\label{apd: theoretical section for FLR}

\subsection{Proof of Upper Bound for TL-FLR (Theorem~\ref{thm: upper bound for TL-FLR})}
\begin{proof}
We first prove the upper bound under Assumption~\ref{assump: eigen assumption} condition~\ref{commute assumption} and defer the proof under condition~\ref{HS assumption} at the end. WLOG, we assume the eigenfunction of $L_{\Gamma^{(0)}}$ and $L_{\Gamma^{(t)}}$ are perfectly aligned, i.e. $\phi_{j}^{(0)}=\phi_{j}^{(t)}$ for all $j\in \mathbb{N}$. We also recall that we set all the intercepts $\alpha^{(t)}=0$ since $\alpha^{(t)}$ will not affect the convergence rate of estimating $\beta^{(t)}$ \citep{du2014penalized}.

Let $L^{2} = \{ f: \mcT \rightarrow \mathbb{R} : \|f\|_{L^{2}} < \infty \}$ represent the he set of all square integrable functions over $\mcT$.
Since $L_{K^{\frac{1}{2}}}(L^2)= \mcH_{K}$, for any $\beta\in\mcH_{K}$, there exist a $f\in L^2$ such that $\beta = L_{K^{\frac{1}{2}}}(f)$. In following proofs, we denote $f^{(t)}$ as $\beta^{(t)}$'s corresponding element in $L^{2}$. Therefore, we can rewrite the minimization problem in the transfer step and the calibration step as
\begin{equation*}
    \hat{f}_{\mcS \lambda_{1}} = \underset{ f \in L^2}{\operatorname{argmin}}\left\{\frac{1}{n_{\mcS}}\sum_{t\in\mcS}\sum_{i=1}^{n_{t}}\left(Y_{i}^{(t)}- \langle X_i^{(t)}, L_{K^{\frac{1}{2}}}(f) \rangle \right)^{2}+\lambda_{1}\|f\|_{L^2}^{2}\right\},
\end{equation*}
where $n_{\mcS} = \sum_{t\in\mcS}n_{t}$ and 
\begin{equation*}
    \hat{f}_{\delta \lambda_{2}} = \underset{ f_{\delta} \in L^2}{\operatorname{argmin}}\left\{\frac{1}{n_0}\sum_{i=1}^{n_0}\left(Y_{i}^{(0)}- \langle X_i^{(0)}, L_{K^{\frac{1}{2}}}(\hat{f}_{\mcS} + f_{\delta}) \rangle \right)^{2}+\lambda_{2}\|f_{\delta}\|_{L^2}^{2}\right\}.
\end{equation*}
Thus the excess risk of $\hat{\beta}$ can be rewritten as 
\begin{equation*}
    \mathcal{E}(\hat{\beta}) = \left\| (L_{\Gamma^{(0)}})^{\frac{1}{2}} (\hat{f} - f_{0}) \right\|_{L^{2}}^{2} \quad \text{where} \quad \hat{f} = \hat{f}_{\mcS \lambda_{1}} + \hat{f}_{\delta \lambda_{1}}
\end{equation*}
Define the empirical version of $C^{(t)}$ as
\begin{equation*}
    C_{n}^{(t)}(s,t) = \frac{1}{n_{t}} \sum_{i=1}^{n_{t}}X_{i}^{(t)}(s) X_{i}^{(t)}(t),
\end{equation*}
and let
\begin{equation*}
    L_{\Gamma_{n}^{(t)}} = L_{K^{\frac{1}{2}}}L_{C_{n}^{(t)}}L_{K^{\frac{1}{2}}}.
\end{equation*}
To bound the excess risk $\mathcal{E}(\hat{\beta})$, by triangle inequality, 
\begin{equation*}
    \left\| (L_{\Gamma^{(0)}})^{\frac{1}{2}} (\hat{f} - f_{0}) \right\|_{L^{2}} \leq \left\| (L_{\Gamma^{(0)}})^{\frac{1}{2}} (\hat{f}_{\mcS} - f_{\mcS}) \right\|_{L^{2}} + \left\| (L_{\Gamma^{(0)}})^{\frac{1}{2}} (\hat{f}_{\delta} - f_{\delta}) \right\|_{L^{2}}
\end{equation*}
where each term at the r.h.s. corresponds to the excess risk from the transfer and calibration steps, respectively. 

\paragraph{Transfer Step.}
For the transfer step, the solution of minimization is
\begin{equation*}
    \hat{f}_{\mcS \lambda_1} = \left( \sum_{t\in \mcS}\alpha_{t} \Gnl + \lambda_{1} \mathbf{I} \right)^{-1}\left( \sum_{t\in \mcS}\alpha_{t} \Gnl (f^{(t)}) + \sum_{t\in\mcS}g_{n}^{(t)} \right),
\end{equation*}
where $\mathbf{I}$ is identity operator, $\alpha_{t} = \frac{n_{t}}{n_{\mcS}}$ and 
\begin{equation*}
    g_{n}^{(t)} = \frac{1}{n_{\mcS}}  \sum_{i=1}^{n_{t}}\epsilon_{i}^{(t)}L_{K^{\frac{1}{2}}}(X_{i}^{(t)}).
\end{equation*}
Besides, the solution of the transfer learning step, $\hat{f}_{\mcS \lambda_{1}}$, converges to its population version, which is defined by the following moment condition
\begin{equation*}
    \sum_{t\in \mcS} \alpha_{t} \E  \left\{ L_{K^{\frac{1}{2}}}(X^{(t)}) \left( Y^{(t)} - \langle L_{K^{\frac{1}{2}}}(X^{(t)}), f_{\mcS} \rangle_{L^{2}} \right) \right\} = 0,
\end{equation*}
and therefore leads to the explicit form of $f_{\mcS}$ as 
\begin{equation*}
    \left( \sum_{t\in \mcS}\alpha_{t} \Gl \right) f_{\mcS}  = \sum_{t\in \mcS} \alpha_{t} \Gl \left(f^{(t)} \right).
\end{equation*}
Define 
\begin{equation*}
    f_{\mcS \lambda_1} = \left( \sum_{t\in \mcS}\alpha_{t} \Gl + \lambda_{1} \mathbf{I} \right)^{-1}\left( \sum_{t\in \mcS}\alpha_{t} \Gl (f^{(t)})  \right).
\end{equation*}
By the triangle inequality
\begin{equation*}
    \left\| (L_{\Gamma^{(0)}})^{\frac{1}{2}} (\hat{f}_{\mcS \lambda_1} - f_{\mcS})\right\|_{L^2} \leq \underbrace{\left\|(L_{\Gamma^{(0)}})^{\frac{1}{2}}(\hat{f}_{\mcS \lambda_1} - f_{\mcS \lambda_1})\right\|_{L^2}}_{\text{estimation error}} + \underbrace{\left\|(L_{\Gamma^{(0)}})^{\frac{1}{2}}(f_{\mcS \lambda_1} - f_{\mcS})\right\|_{L^2}}_{\text{approximation error}}.
\end{equation*}
For approximation error, by Lemma~\ref{lemma1} and taking $v = \frac{1}{2}$, the second term on r.h.s. can be bounded by 
\begin{equation*}
    \left\|(L_{\Gamma^{(0)}})^{\frac{1}{2}}(f_{\mcS \lambda_1} - f_{\mcS})\right\|_{L^2}^{2}
    =O_{\mbP} \left( \lambda_{1}    \left\|f_{\mcS}\right\|_{L^2}^{2} \right).
\end{equation*}
Now we turn to the estimation error. We further introduce an intermediate term
\begin{equation*}
    \tilde{f}_{\mcS \lambda_1} =  f_{\mcS \lambda_1} + \left( \sum_{t\in \mcS} \alpha_{t} L_{\Gamma^{(t)}} + \lambda_1 \mathbf{I} \right)^{-1} \left( \sum_{t\in \mcS} \alpha_{t} \Gnl (f_{\mcS} - f_{\mcS \lambda_1} ) + \sum_{t\in\mcS}g_{n}^{(t)} - \lambda_1 f_{\mcS \lambda_1} \right).
\end{equation*}
We first bound $\| (L_{\Gamma^{(0)}})^{\frac{1}{2}} (f_{\mcS \lambda_1} - \tilde{f}_{\mcS \lambda_1}) \|_{L^2}^{2}$. Based on he fact that 
\begin{equation*}
    \lambda_1 f_{\mcS \lambda_1} = \sum_{t\in \mcS} \alpha_{t} \Gl \left( f^{(t)} - f_{\mcS \lambda_1} \right) = \sum_{t\in \mcS} \alpha_{t} \Gl \left( f_{\mcS} - f_{\mcS \lambda_1} \right),
\end{equation*}
we have
\begin{equation*}
\begin{aligned}
    & \| (L_{\Gamma^{(0)}})^{\frac{1}{2}} (f_{\mcS \lambda_1} - \tilde{f}_{\mcS \lambda_1}) \|_{L^2} \\
    & \quad \leq \left\| (L_{\Gamma^{(0)}})^{\frac{1}{2}} ( \sum_{t\in \mcS} \alpha_{t} L_{\Gamma^{(t)}} + \lambda_1 \mathbf{I} )^{-1} ( \sum_{t\in \mcS}g_{n}^{(t)}   ) \right\|_{L^{2}} + \\
    & \quad \qquad \left\| (L_{\Gamma^{(0)}})^{\frac{1}{2}} ( \sum_{t\in \mcS} \alpha_{t} L_{\Gamma^{(t)}} + \lambda_1 \mathbf{I} )^{-1} ( \sum_{t\in \mcS} \alpha_{t} (\Gnl - \Gl) (f_{\mcS} - f_{\mcS \lambda_1} )  ) \right\|_{L^2} \\
    & \quad =  \left\{\sum_{j=1}^{\infty}  \left( \left\langle (L_{\Gamma^{(0)}})^{\frac{1}{2}} ( \sum_{t\in \mcS} \alpha_{t} L_{\Gamma^{(t)}} + \lambda_1 \mathbf{I} )^{-1} \left( \sum_{t\in \mcS}g_{n}^{(t)}   \right) , \phi_{j}^{(0)} )  \right\rangle_{L^2} \right)^2\right\}^{\frac{1}{2}} + \\
    & \quad \qquad \left\{ \sum_{j=1}^{\infty}  \left( \left\langle (L_{\Gamma^{(0)}})^{\frac{1}{2}} ( \sum_{t\in \mcS} \alpha_{t} L_{\Gamma^{(t)}} + \lambda_1 \mathbf{I} )^{-1} \left( \sum_{t\in \mcS} \alpha_{t} (\Gnl - \Gl) (f_{\mcS} - f_{\mcS \lambda_1} )  \right) , \phi_{j}^{(0)} )  \right\rangle_{L^2} \right)^2 \right\}^{\frac{1}{2}}
\end{aligned}
\end{equation*}
For the first term in the above inequality, by Lemma~\ref{lemma3}, 
\begin{equation*}
    \left\{  \sum_{j=1}^{\infty}  \left( \left\langle (L_{\Gamma^{(0)}})^{\frac{1}{2}} ( \sum_{t\in \mcS} \alpha_{t} L_{\Gamma^{(t)}} + \lambda_1 \mathbf{I} )^{-1} \left( \sum_{t\in \mcS}g_{n}^{(t)}  \right) , \phi_{j}^{(0)} )  \right\rangle_{L^2} \right)^2 \right\} =O_{\mbP} \left( \sigma^{2} \left(  n_{\mcS}  \right)^{-1} \lambda_{1}^{\frac{1}{2r}} \right).
\end{equation*}
For second one, by Lemma~\ref{lemma2} and~\ref{lemma4},
\begin{equation*}
    \sum_{j=1}^{\infty}  \left( \left\langle (L_{\Gamma^{(0)}})^{\frac{1}{2}} ( \sum_{t\in \mcS} \alpha_{t} L_{\Gamma^{(t)}} + \lambda_1 \mathbf{I} )^{-1} \left( \sum_{t\in \mcS} \alpha_{t} (\Gnl - \Gl) (f_{\mcS} - f_{\mcS \lambda_1} )  \right) , \phi_{j}^{(0)} )  \right\rangle_{L^2} \right)^2  = O_{\mbP} \left( \left( n_{\mcS} \right)^{-1} \lambda_{1}^{\frac{1}{2r}} \right).
\end{equation*}
Therefore, 
\begin{equation*}
    \| (L_{\Gamma^{(0)}})^{\frac{1}{2}} (f_{\mcS \lambda_1} - \tilde{f}_{\mcS \lambda_1}) \|_{L^2}^{2} =O_{\mbP}\left( \|f_{\mcS}\|_{L^{2}}^2 \left( n_{\mcS}  \right)^{-1} \lambda_{1}^{\frac{1}{2r}} \right) .
\end{equation*}
Finally, we bound $\| (L_{\Gamma^{(0)}})^{\frac{1}{2}} (\hat{f}_{\mcS \lambda_1} - \tilde{f}_{\mcS \lambda_1}) \|_{L^2}^{2}$. Once again, by the definition of $\tilde{f}_{\mcS \lambda_1}$
\begin{equation*}
    \hat{f}_{\mcS \lambda_1} - \tilde{f}_{\mcS \lambda_1} = \left( \sum_{t\in \mcS} \alpha_{t} \Gl + \lambda_{1} \mathbf{I} \right)^{-1} \left( \sum_{t\in \mcS} \alpha_{t} (\Gl - \Gnl)( \hat{f}_{\mcS \lambda_1} - f_{\mcS \lambda_1}) \right).
\end{equation*}
Thus, by Lemma~\ref{lemma2}
\begin{equation*}
\begin{aligned}
    & \left\| (L_{\Gamma^{(0)}})^{\frac{1}{2}} (\hat{f}_{\mcS \lambda_1} - \tilde{f}_{\mcS \lambda_1}) \right\|_{L^2}^{2} \\
    & \qquad \leq  \left\| (\Gt)^{\frac{1}{2}} ( \sum_{t\in \mcS} \alpha_{t} \Gl + \lambda_1 \mathbf{I} )^{-1} ( \sum_{t\in \mcS} \alpha_{t} \left(\Gl - \Gnl \right) ) (\Gt)^{-\frac{1}{2}} \right\|_{op}^{2} \left\|(L_{\Gamma^{(0)}})^{\frac{1}{2}}( \hat{f}_{\mcS \lambda_1} - f_{\mcS \lambda_1}) \right \|_{L^2}^{2} \\
    & \qquad = O_{\mbP} \left( n_{\mcS}^{-1} \lambda_{1}^{\frac{1}{2r}}  \left \|(L_{\Gamma^{(0)}})^{\frac{1}{2}}( \hat{f}_{\mcS \lambda_1} - f_{\mcS \lambda_1}) \right \|_{L^2}^{2} \right) \\
    & \qquad = o_{\mbP} \left( \left \|(L_{\Gamma^{(0)}})^{\frac{1}{2}}( \hat{f}_{\mcS \lambda_1} - f_{\mcS \lambda_1}) \right \|_{L^2}^{2} \right).
\end{aligned}
\end{equation*}
Combining three parts, we get 
\begin{equation*}
    \left\| (L_{\Gamma^{(0)}})^{\frac{1}{2}} \left( \hat{f}_{\mcS \lambda_1} - f_{\mcS } \right) \right\|_{L^2}^{2} = O_{\mbP}\left( \|f_{\mcS}\|_{L^{2}}^2 \lambda_1  + \left(\sigma^2 + \|f_{\mcS}\|_{L^{2}}^2\right) n_{\mcS}^{-1} \lambda_{1}^{-\frac{1}{2r}} \right)  ,
\end{equation*}
by taking $\lambda_{1} \asymp ( n_{\mcS})^{-\frac{2r}{2r+1}}$ and notice the fact that $\frac{\sigma^2}{\|\beta_{\mcS}\|_{K}^2}$ is bounded above (This is a reasonable condition since the signal-to-noise ratio can't be 0, otherwise one can hardly learn anything from the data), we have
\begin{equation*}
    \left\| (L_{\Gamma^{(0)}})^{\frac{1}{2}} \left( \hat{f}_{\mcS \lambda_1} - f_{\mcS } \right) \right\|_{L^2}^{2} = O_{\mbP}\left(  \|f_{\mcS}\|_{L^{2}}^2 n_{\mcS}^{-\frac{2r}{2r+1}} \right).
\end{equation*}

The desired upper bound is proved given the fact that $\frac{\sigma^2}{\|\beta_{\mcS}\|_{K}^2}$ and $\frac{\sigma^2}{h^2}$ is bounded above.

\paragraph{Calibration Step.}
The estimation scheme in the calibration step is in the same form as the transfer step, and thus its proof follows the same logic as the transfer step. The solution to the minimization problem in the calibration step is 
\begin{equation*}
    \hat{f}_{\delta \lambda_2} = \left( L_{\Gamma_{n}^{(0)}} + \lambda_2 \mathbf{I} \right)^{-1} \left( L_{\Gamma_{n}^{(0)}} (f_{\mcS }-\hat{f}_{\mcS \lambda_1} + f_{\delta}) + g_{n}^{(0)} \right),
\end{equation*}
where 
\begin{equation*}
    g_{n}^{(0)} = \frac{1}{n_0}\sum_{i=1}^{n_0}\epsilon_{i}^{(0)}L_{K^{\frac{1}{2}}}(X_{i}^{(0)}).
\end{equation*}
Similarly, define
\begin{equation*}
    f_{\delta \lambda_2} = \left( L_{\Gamma^{(0)}} + \lambda_2 \mathbf{I} \right)^{-1} \left( L_{\Gamma^{(0)}}(f_{\mcS}-\hat{f}_{\mcS} + f_{\delta}) \right),
\end{equation*}
where $f_{\delta}$ is the population version of the estimator, i.e. $\hat{f}_{\delta}$.
\begin{equation*}
    \left( \sum_{t\in \mcS}\alpha_{t} \Gl \right) f_{\delta} =  \left( \sum_{t\in \mcS} \alpha_{t} \Gl\left( f_{\delta}^{(t)} \right) \right)
\end{equation*}
By the triangle inequality,
\begin{equation*}
    \left\|(L_{\Gamma^{(0)}})^{\frac{1}{2}} (\hat{f}_{\delta}-f_{\delta})  \right\|_{L^2} \leq \left\|(L_{\Gamma^{(0)}})^{\frac{1}{2}} (\hat{f}_{\delta}-f_{\delta\lambda_2})  \right\|_{L^2} + 
    \left\|(L_{\Gamma^{(0)}})^{\frac{1}{2}} (f_{\delta \lambda_2}-f_{\delta})  \right\|_{L^2}.
\end{equation*}
For the second term in r.h.s., 
\begin{equation*}
    \begin{aligned}
    \left\| (L_{\Gamma^{(0)}})^{\frac{1}{2}} (f_{\delta\lambda_2}-f_{\delta})  \right\|_{L^2}
    & \leq \left\|(L_{\Gamma^{(0)}})^{\frac{1}{2}}\left(L_{\Gamma^{(0)}} + \lambda_2\mathbf{I} \right)^{-1}L_{\Gamma^{(0)}} \left( f_{\mcS } - \hat{f}_{\mcS \lambda_1} \right)  \right\|_{L^2} + \left\|(L_{\Gamma^{(0)}})^{\frac{1}{2}} \left( f_{\delta\lambda_2}^{*} - f_{\delta} \right) \right\|_{L^2}\\
    & \leq \left\|(L_{\Gamma^{(0)}})^{\frac{1}{2}}\left(L_{\Gamma^{(0)}} + \lambda_2\mathbf{I} \right)^{-1}(L_{\Gamma^{(0)}})^{\frac{1}{2}}\right\|_{op}\left\|L_{\Gamma^{(0)}}^{\frac{1}{2}} \left( f_{\mcS} - \hat{f}_{\mcS} \right)  \right\|_{L^2}\\ 
    & \qquad  + \left\|(L_{\Gamma^{(0)}})^{\frac{1}{2}} \left( f_{\delta\lambda_{2}}^{*} - f_{\delta} \right) \right\|_{L^2},
    \end{aligned}
\end{equation*}
where $f_{\delta\lambda_{2}}^{*} = \left(L_{\Gamma^{(0)}}+\lambda_2 \mathbf{I}\right)^{-1}L_{\Gamma^{(0)}}(f_{\delta})$.

By Lemma~\ref{lemma1} with $\mcS = \emptyset$,
\begin{equation*}
\begin{aligned}
    \left\|(L_{\Gamma^{(0)}})^{\frac{1}{2}} \left( f_{\delta\lambda_2}^{*} - f_{\delta} \right) \right\|_{L^2}^2 
    & \leq \frac{\lambda_2 }{4}\left\|f_{\delta}\right\|_{L^2}^{2}\\
    & \lesssim \lambda_2  h^{2},
\end{aligned}
\end{equation*}
where the second inequality holds with the fact the $\mcS = \left\{ 1\leq l\leq L: \|f_{0} - f^{(t)}\|_{L^2}\leq h \right\}$. Therefore, 
\begin{equation*}
    \left\|(L_{\Gamma^{(0)}})^{\frac{1}{2}} (f_{\delta\lambda_2}-f_{\delta})  \right\|_{L^2}^{2}  = O_{\mbP} \left( n_{\mcS}^{-\frac{2r}{2r+1}} + \lambda_2  h^2  \right).
\end{equation*}
For the first term, we play the same game as \textbf{transfer step}. Define 
\begin{equation*}
    \tilde{f}_{\delta \lambda_2} = f_{\delta \lambda_2} +  \left( L_{\Gamma^{(0)}} + \lambda_2 \mathbf{I}\right)^{-1} \left( L_{\Gamma_{n}^{(0)}} (f_{\mcS } - \hat{f}_{\mcS } + f_{\delta} ) + g_{n}^{(0)} - L_{\Gamma_{n}^{(0)}}(f_{\delta \lambda_2}) - \lambda_2 f_{\delta \lambda_2} \right),
\end{equation*}
and the definition of $f_{\delta \lambda_2}$ leads to 
\begin{equation*}
    \tilde{f}_{\delta \lambda_2} - f_{\delta \lambda_2} = \left( L_{\Gamma^{(0)}} + \lambda_2 \mathbf{I}\right)^{-1} \left( (L_{\Gamma_{n}^{(0)}} - L_{\Gamma^{(0)}})( f_{\mcS } - \hat{f}_{\mcS } + f_{\delta } - f_{\delta \lambda_2}  ) + g_{n}^{(0)} \right).
\end{equation*}
Therefore, 
\begin{equation*}
\begin{aligned}
    \left\| (L_{\Gamma^{(0)}})^{\frac{1}{2}} \left( \tilde{f}_{\delta \lambda_2} - f_{\delta \lambda_2} \right) \right\|_{L^2} & \leq \left\| (L_{\Gamma^{(0)}})^{\frac{1}{2}}(L_{\Gamma^{(0)}} + \lambda_2 \mathbf{I})^{-1} g_{n}^{(0)} \right\|_{L^2} +  \\
     & \qquad  \left\| (L_{\Gamma^{(0)}})^{\frac{1}{2}} \left( L_{\Gamma^{(0)}} + \lambda_2 \mathbf{I}\right)^{-1}  (L_{\Gamma_{n}^{(0)}} - L_{\Gamma^{(0)}})  (L_{\Gamma^{(0)}})^{-\frac{1}{2}} \right\|_{op} \cdot \\
     & \qquad \left\{ \left\| (L_{\Gamma^{(0)}})^{\frac{1}{2}} \left( f_{\mcS } - \hat{f}_{\mcS } + f_{\delta} - f_{\delta \lambda_2} \right) \right\|_{L^2}   \right\} 
\end{aligned}
\end{equation*}
leading to 
\begin{equation*}
    \left\| (L_{\Gamma^{(0)}})^{\frac{1}{2}} \left( \tilde{f}_{\delta \lambda_2} - f_{\delta \lambda_2} \right) \right\|_{L^2}  =O_{\mbP} \left( \left( n_0 \lambda_2^{\frac{1}{2r}} \right)^{-1} + \left( n_0 \lambda_2^{\frac{1}{2r}} \right)^{-1} \left[ n_{\mcS}^{-\frac{2r}{2r+1}} + \lambda_2 h^{2} \right] \right)  ,
\end{equation*}
where the first term and operator norm comes from Lemma~\ref{lemma2} and~\ref{lemma3} with $\mcS = \emptyset$, and bounds on $\| (L_{\Gamma^{(0)}})^{\frac{1}{2}} ( f_{\mcS } - \hat{f}_{\mcS } + f_{\delta} - f_{\delta \lambda_2} ) \|_{L^2}^{2}$ comes from \textbf{transfer step} and bias term of \textbf{calibration step}.

Finally, for $\|(L_{\Gamma^{(0)}})^{\frac{1}{2}}(\hat{f}_{\delta \lambda_2} - \tilde{f}_{\delta \lambda_2})\|_{L^2}^{2}$, notice that 
\begin{equation*}
    \hat{f}_{\delta \lambda_2} - \tilde{f}_{\delta \lambda_2} = \left( L_{\Gamma^{(0)}} + \lambda_2 \mathbf{I}\right)^{-1} \left( (L_{\Gamma^{(0)}} - L_{\Gamma_{n}^{(0)}}) (\tilde{f}_{\delta \lambda_2} - f_{\delta \lambda_2})  \right),
\end{equation*}
thus by Lemma~\ref{lemma2},
\begin{equation*}
    \left\|(L_{\Gamma^{(0)}})^{\frac{1}{2}}(\hat{f}_{\delta \lambda_2} - \tilde{f}_{\delta \lambda_2})\right\|_{L^2}^{2} = o_{\mbP} \left( \|(L_{\Gamma^{(0)}})^{\frac{1}{2}}(\tilde{f}_{\delta \lambda_2} - f_{\delta \lambda_2})\|_{L^2}^{2} \right).
\end{equation*}
Combine three parts, we get 
\begin{equation*}
    \left\| (L_{\Gamma^{(0)}})^{\frac{1}{2}} \left( \hat{f}_{\delta \lambda_2} - f_{\delta} \right) \right\|_{L^2}^{2} = O_{\mbP} \left( \lambda_2 h^{2} + \left(h^2 + \sigma^2\right) (n_0 \lambda_{2}^{\frac{1}{2r}})^{-1} \right) ,
\end{equation*}
taking $\lambda_2 \asymp n_0^{-\frac{2r}{2r+1}}$ and notice the fact that $\frac{\sigma^2}{h^2}$ is bounded above (similar reasoning as the transfer step), we have
\begin{equation*}
    \left\| (L_{\Gamma^{(0)}})^{\frac{1}{2}} \left( \hat{f}_{\delta \lambda_2} - f_{\delta} \right) \right\|_{L^2}^{2} = O_{\mbP} \left( h^2  n_0^{-\frac{2r}{2r+1}}  \right) .
\end{equation*}
Combining the results from \textbf{transfer step} and \textbf{calibration step}, and reorganizing the constants for each term, we have
\begin{equation*}
    \mcE (\hat{\beta}) =  \left\| (L_{\Gamma^{(0)}})^{\frac{1}{2}} (\hat{f} - f_{0}) \right\|_{L^{2}} = O_{\mbP} \left(  n_{\mcS}^{-\frac{2r}{2r+1}} + n_0^{-\frac{2r}{2r+1}} \xi(h,\mcS)   \right).
\end{equation*}
To prove the same upper bound under Assumption 2, we only need to show that Lemmas~\ref {lemma1} to~\ref {Tony's lemma6} still hold under Assumption 2. Let $\{(s_j^{(0)},\phi_{j}^{(0)})\}_{j\geq 1}$ be the eigen-pairs of $L_{\Gamma^{(0)}}$. We show that 
\begin{equation}\label{asymptotic eigenvalue}
    \left \langle L_{\Gamma^{(t)}} (\phi_{j}^{(0)}), \phi_{j}^{(0)}  \right\rangle_{L^2} = s_{j}^{(0)} (1 + o(1)).
\end{equation}
Consider
\begin{equation*}
\begin{aligned}
    \left| \left \langle (L_{\Gamma^{(t)}} - L_{\Gamma^{(0)}} ) \phi_{j}^{(0)}, \phi_{j}^{(0)}  \right\rangle_{L^2} \right| & = \left| \left \langle (L_{\Gamma^{(0)}})^{\frac{1}{2}}\left( (L_{\Gamma^{(0)}})^{-\frac{1}{2}} L_{\Gamma^{(t)}}(L_{\Gamma^{(0)}})^{-\frac{1}{2}} - \mathbf{I} \right)(L_{\Gamma^{(0)}})^{\frac{1}{2}} \phi_{j}^{(0)}, \phi_{j}^{(0)}  \right\rangle_{L^2} \right| \\
    & = \lambda_j^{(0)} \left|  \left \langle \left( (L_{\Gamma^{(0)}})^{-\frac{1}{2}} L_{\Gamma^{(t)}}(L_{\Gamma^{(0)}})^{-\frac{1}{2}} - \mathbf{I} \right) \phi_{j}^{(0)}, \phi_{j}^{(0)}  \right\rangle_{L^2} \right|.
\end{aligned}
\end{equation*}
Since $(L_{\Gamma^{(0)}})^{-\frac{1}{2}} L_{\Gamma^{(t)}}(L_{\Gamma^{(0)}})^{-\frac{1}{2}} - \mathbf{I}$ is Hilbert–Schmidt, then 
\begin{equation*}
    \left\|   (L_{\Gamma^{(0)}})^{-\frac{1}{2}} L_{\Gamma^{(t)}}(L_{\Gamma^{(0)}})^{-\frac{1}{2}} - \mathbf{I} \right\|_{HS}^{2} = \sum_{i,j} \left| \langle \phi_{i}^{(0)}, \left( (L_{\Gamma^{(0)}})^{-\frac{1}{2}} L_{\Gamma^{(t)}}(L_{\Gamma^{(0)}})^{-\frac{1}{2}} - \mathbf{I} \right) \phi_{j}^{(0)} \rangle_{L^2} \right|^2 < \infty
\end{equation*}
which leads to
\begin{equation*}
\left|  \left \langle \left( (L_{\Gamma^{(0)}})^{-\frac{1}{2}} L_{\Gamma^{(t)}}(L_{\Gamma^{(0)}})^{-\frac{1}{2}} - \mathbf{I} \right) \phi_{j}^{(0)}, \phi_{j}^{(0)}  \right\rangle_{L^2} \right| = o(1) \quad \textit{as} \quad j\to \infty.
\end{equation*}
Therefore, Equation (\ref{asymptotic eigenvalue}) holds. One can now replace the common eigenfunctions $\phi_j$ by $\phi_j^{(0)}$ in the proofs of Lemma~\ref{lemma1} to Lemma~\ref{Tony's lemma6}, and it is not hard to check that the results still hold.
\end{proof}

\subsection{Proof of Lower Bound for TL-FLR (Theorem~\ref{thm: lower bound for TL-FLR})}
In this part, we prove the alternative version for the lower bound, i.e. 
\begin{equation*}
    \lim_{a\rightarrow 0} \lim_{n \rightarrow \infty}   \inf_{\tilde{\beta}} \sup_{\Theta(h) } P\left\{ \mcE\left( \tilde{\beta} \right) \geq  a \left( (n_{\mcS} + n_{0})^{-\frac{2r}{2r+1}} + n_{0}^{-\frac{2r}{2r+1}} \xi(h,\mcS)  \right) \right\} = 1.
\end{equation*}
This alternative form is also proven in other contexts, like high-dimensional linear regression or GLM, to show optimality. However, the upper bound we derive for TL-FLR can still be sharp since in the TL regime, it is always assumed $n_{\mcS} \gg n_{0}$, and leads to $(n_{\mcS} + n_{0})^{-\frac{2r}{2r+1}} \asymp n_{\mcS}^{-\frac{2r}{2r+1}}$. 

On the other hand, one can modify the transfer step in TL-FLR by including the target dataset $\mcD^{(0)}$ to estimate $\beta_{\mcS}$, which produces an alternative upper bound $(n_{\mcS} + n_{0})^{-\frac{2r}{2r+1}} + n_{0}^{-\frac{2r}{2r+1}} \xi(h,\mcS)$, and mathematically aligns with the alternative lower bound we mention above. However, we would like to note that such a modified TL-FLR is not computationally efficient for transfer learning, since for each new upcoming target task, TL-FLR needs to recalculate a new $\hat{\beta}_{\mcS}$ with the massive datasets $\{ \mcD^{(t)}: t\in 0\cup\mcS \}$.

\begin{proof}
Note that any lower bound for a specific case will immediately yield a lower bound for the general case. Therefore, we consider the following two cases.

(1) Consider $h=0$, i.e.
\begin{equation*}
    y_{i}^{(t)} = \left\langle X_{i}^{(t)}, \beta \right\rangle + \epsilon_{i}^{(t)},\quad \forall  t\in \left\{0 \right\}\cup \mcS.
\end{equation*}
In this case, all the source model shares the same coefficient function as the target model, i.e. $\beta^{(t)} = \beta^{(0)}$ for all $t\in \mcS$. Therefore, the estimation process is equivalent to estimating $\beta$ under the target model with sample size equal to $n_{\mcS}$. The lower bound in \citet{cai2012minimax} can be applied here directly and leads to 
\begin{equation*}
    \lim_{a\rightarrow 0} \lim_{n \rightarrow \infty} \inf_{\tilde{\beta}} \sup_{ \Theta(h) } P\left\{ \mcE\left( \tilde{\beta} \right) \geq a  \left(n_{\mcS} + n_{0}\right)^{-\frac{2r}{2r+1}}   \right\} = 1.
\end{equation*}

(2) Consider $\beta^{(0)} \in \mathcal{B}_{\mathcal{H}}(h)$ where $\mathcal{B}_{\mathcal{H}}(h)$ is a ball in RKHS centered at $0$ with radius $h$, and $\beta^{(t)} = 0$ for all $t\in \left\{0 \right\}\cup \mcS$ and $\sigma\geq h$. That is all the source datasets contain no information about $\beta^{(0)}$. Consider slope functions $\beta_1,\cdots,\beta_M \in \mathcal{B}_{\mathcal{H}}(h)$ and $P_{1},\cdots,P_{M}$ as the probability distribution of $\{ (X_{i}^{(0)},Y_{i}^{(0)}) : i = 1,\cdots,n_0 \}$ under $\beta_1,\cdots,\beta_M$. Then the KL divergence between $P_i$ and $P_j$ is 
\begin{equation*}
    KL\left( P_{i} | P_{j} \right) = \frac{n_0}{2\sigma^2} \left\| L_{(C^{(0)})^{\frac{1}{2}}} \left( \beta_{i} - \beta_{j} \right)  \right\|_{K}^{2} \quad \textit{for} \quad i,j \in \{1,\cdots,K\}.
\end{equation*}
Let $\tilde{\beta}$ be any estimator based on $\{ (X_{i}^{(0)},Y_{i}^{(0)}) : i = 1,\cdots,n_0 \}$ and consider testing multiple hypotheses, by Markov's inequality and Lemma~\ref{lemmaFano}
\begin{equation} \label{fanolowerbound}
\begin{aligned}
    & \left\| L_{(C^{(0)})^{\frac{1}{2}}} \left( \tilde{\beta} - \beta_{i} \right)  \right\|_{K}^{2}  \geq P_{i} \left( \tilde{\beta} \neq \beta_{i} \right) \min_{i,j} \left\| L_{(C^{(0)})^{\frac{1}{2}}} \left( \beta_{i} - \beta_{j} \right)  \right\|_{K}^{2}\\
    & \qquad \qquad \geq \left( 1 - \frac{\frac{n_0}{2\sigma^{2}} \max_{i,j} \left\| L_{(C^{(0)})^{\frac{1}{2}}} \left( \beta_{i} - \beta_{j} \right)  \right\|_{K}^{2} + log(2)}{log(M-1)} \right)\min_{i,j} \left\| L_{(C^{(0)})^{\frac{1}{2}}} \left( \beta_{i} - \beta_{j} \right)  \right\|_{K}^{2}.
\end{aligned}
\end{equation}
Our target is to construct a sequence of $\beta_1, \cdots, \beta_M\in \mathcal{B}_{\mathcal{H}}(h)$ such that the above lower bound matches with the upper bound. We consider the Varshamov-Gilbert bound in \citet{varshamov1957estimate}, which we state as Lemma~\ref{lemmaVG}. Now we define,
\begin{equation*}
    \beta_i = \sum_{k=N+1}^{2N} \frac{b_{i,k-N} h }{\sqrt{N}} L_{K^{\frac{1}{2}}} (\phi_k) \quad \textit{for} \quad i = 1,2,\cdots,M.
\end{equation*}
where $(b_{i,1}, b_{i,2}, \cdots, b_{i,N}) \in {0,1}^{N}$. Then, 
\begin{equation*}
    \left\| \beta_{i} \right\|_{K}^{2} = \sum_{k = N+1}^{2N} \frac{b_{i,k-N}^2 h^2}{N} \left\| L_{K^{\frac{1}{2}}}(\phi_{k} ) \right\|_{K}^{2} \leq h^2,
\end{equation*}
hence $\beta_{\theta} \in \mathcal{B}_{\mcH_{K}}(h)$. Besides,
\begin{equation*}
\begin{aligned}
    \left\| L_{(C^{(0)})^{\frac{1}{2}}} \left( \beta_{i} - \beta_{j} \right)  \right\|_{K}^{2} & = \frac{h^2}{N} \sum_{k=N+1}^{2N} \left(b_{i,k-N} - b_{j,k-N} \right)^2 s_l^{(0)} \\
    & \geq \frac{h^{2} s_{2N}^{(0)} }{N} \sum_{k=N+1}^{2N} \left(b_{i,k-N} - b_{j,k-N} \right)^2 \\
    & \geq \frac{h^{2} s_{2N}^{(0)}}{4}  ,
\end{aligned}
\end{equation*}
where the last inequality is by Lemma~\ref{lemmaVG}, and 
\begin{equation*}
\begin{aligned}
    \left\| L_{(C^{(0)})^{\frac{1}{2}}} \left( \beta_{i} - \beta_{j} \right)  \right\|_{K}^{2} & = \frac{h^2}{N} \sum_{k=N+1}^{2N} \left(b_{i,k-N} - b_{j,k-N} \right)^2 s_{k}^{(0)} \\
    & \leq \frac{h^{2} s_{N}^{(0)} }{N} \sum_{k=M+1}^{M} \left(b_{i,k-N} - b_{j,k-N} \right)^2 \\
    &  \leq  h^{2} s_{N}^{(0)}.
\end{aligned}
\end{equation*}
Therefore, one can bound the KL divergence by
\begin{equation*}
    KL\left( P_{i} | P_{j} \right) \leq \max_{i,j} \left\{ \frac{n_0}{2\sigma^2} \left\| L_{(C^{(0)})^{\frac{1}{2}}} \left( \beta_{i} - \beta_{j} \right)  \right\|_{K}^{2}   \right\}.
\end{equation*}
Using the above results, the r.h.s. of Equation~\ref{fanolowerbound} becomes
\begin{equation*}
   \left( 1 - \frac{\frac{4 n_0 h^2}{\sigma^{2}} s_{N}^{(0)} + 8log(2)}{N} \right) \frac{s_{2N}^{(0)} h^2}{4}.
\end{equation*}
Taking $N = \frac{8h^2}{\sigma^2} n_{0}^{\frac{1}{2r+1}}$, which implies $N\rightarrow \infty$, would produce
\begin{equation*}
\begin{aligned}
    \left( 1 - \frac{\frac{4 n_0 h^2}{\sigma^{2}} s_{N}^{(0)} + 8log(2)}{N} \right) \frac{s_{2N}^{(0)} h^2}{4} & \asymp
    \left( \frac{1}{2} - \frac{8log(2)}{N} \right) h^2 N^{-2r} \\
    & \asymp n_0^{-\frac{2r}{2r+1}} h^2
\end{aligned}
\end{equation*}
Combining the lower bound in case (1) and case (2), we obtain the desired lower bound.
\end{proof}

\subsection{Proof of Consistency (Theorem~\ref{thm: aggregation consistency})}
\begin{proof}
Under Assumption 3, 
\begin{equation*}
    \max_{t\in \mcS} \Delta_{t} < \min_{t\in \mcS^{c}} \Delta_{t}
\end{equation*}
holds automatically. To prove 
\begin{equation*}
    \mathbb{P}\left( \hat{\mcS}_j = \mcS \right) \rightarrow 1,
\end{equation*}
we only need to show that the following fact holds 
\begin{equation*}
    \mathbb{P}\left( \max_{t\in \mcS} \hat{\Delta}_{t} < \min_{t\in \mcS^{c}} \hat{\Delta}_{t} \right) \rightarrow 1.
\end{equation*}
Observe that
\begin{equation*}
    \left\| f \right\|_{K^{M}} = \sum_{j=1}^{M} \frac{f_j^2}{v_j} \leq \frac{1}{v_M} \sum_{j=1}^{M} f_j^2 \leq \frac{1}{v_M} \left\| f \right\|_{L^{2}} \lesssim \left\| f \right\|_{L^{2}}
\end{equation*}
for any finite $M$, then by Corollary 10 in \citet{yuan2010reproducing}
\begin{equation*}
    \left\| (\hat{\beta}_0 - \hat{\beta}_{t}) - ( \beta_0 - \beta_{t} ) \right\|_{K^M} \lesssim \left\| (\hat{\beta}_0 - \hat{\beta}_{t}) - ( \beta_0 - \beta_{t} ) \right\|_{L^2} = o_{\mbP}(1).
\end{equation*}
Therefore, for $t\in \mcS^{c}$
\begin{equation*}
    \left\| \hat{\beta}_0 - \hat{\beta}_{t} \right\|_{K^M} \geq (1 - o_{\mbP}(1))   \left\| \beta_0 - \beta_{t} \right\|_{K^M}
\end{equation*}
and also for $t\in \mcS$
\begin{equation*}
        \left\| \hat{\beta}_0 - \hat{\beta}_{t} \right\|_{K^M} \leq  (1 + o_{\mbP}(1))   \left\| \beta_0 - \beta_{t} \right\|_{K^M} \leq (1 + o_{\mbP}(1))   \left\| \beta_0 - \beta_{t} \right\|_{K}
\end{equation*}
with high probability. Finally, 
\begin{equation*}
\begin{aligned}
    \mathbb{P}\left( \max_{t\in \mcS} \hat{\Delta}_{t} < \min_{t\in \mcS^{c}} \hat{\Delta}_k \right) & \geq \mathbb{P}\left( (1 + o(1)) \max_{t\in \mcS} \left\| \beta_0 - \beta_{t} \right\|_{K} <  (1 - o(1))  \min_{t\in \mcS^{c}} \left\| \beta_0 - \beta_{t} \right\|_{K^M}\right) \\ & \rightarrow 1,
\end{aligned}
\end{equation*}
where the convergence in probability is guaranteed by Assumption~\ref{assump: identifiability}.
\end{proof}

\subsection{Proof of Upper Bound for ATL-FLR (Theorem~\ref{thm: upper bound ATL-FLR})}
The Theorem directly holds by combining Theorem~\ref{thm: upper bound for TL-FLR}, Proposition~\ref{prop: oracle inequality of sparse aggregation} with setting (2), and Theorem~\ref{thm: aggregation consistency}.

\subsection{Proposition}\label{sub appendix: prop}
\begin{proposition}[\citet{gaiffas2011hyper}] \label{prop: oracle inequality of sparse aggregation}
Given a confidence level $\delta$, assume either setting (1) or (2) holds for a constant $b$,
\begin{enumerate}
    \item  $\max \{|Y^{(0)}|, \max_{\beta \in \mcH_{K}}|\langle X^{(0)},\beta \rangle_{L^2}|\} \leq b$
    \item  $\max \{ \|\epsilon^{(0)}\|_{\psi}, \sup_{\beta \in \mcH_{K}}\| \langle X^{(0)}, \beta - \beta^{(0)} \rangle \| \} \leq b$
\end{enumerate}  
where $\|\epsilon^{(0)}\|_{\psi} := inf\{c>0 : \E[ \exp\{ |\epsilon^{(0)}|/c \} ] \leq 2 \} $. Let $\sigma_{\epsilon^{(0)}}^{2}$ be the upper bound for $\E \left[(\epsilon^{(0)})^2 | X^{(0)} \right]$. The pre-specified parameter $\phi$ in Algorithm~\ref{algo: sparse aggregation} is defined as 
\begin{equation*}
    \phi = \left\{ \begin{aligned}
        &b \sqrt{\frac{(log( |\mcF| + \delta ) ) }{|\mcI^{c}|}}, &  \text{if setting (1) holds}   \\
        &\left(\sigma_{\epsilon^{(0)}} + b \right) \sqrt{\frac{(log( |\mcF| + \delta ) log(| \mcI^{c} |)) }{|\mcI^{c}|}}. &  \text{if setting (2) holds} 
    \end{aligned} \right. 
\end{equation*}

Let $\hat{\beta}_{a}$ be the output of Algorithm~\ref{algo: aggregate transfer algorithm}, then
\begin{equation}\label{sparse-agg oracle inequality}
    \mcE\left(\hat{\beta}_{a}\right) \leq \min_{t = 0,1,\cdots,T} \mcE\left(\hat{\beta}(\hat{\mcS}_{t})\right) + r_{\delta}(\mcF,n_{0})
\end{equation}
holds with probability at least $1-4e^{-\delta}$ where
\begin{equation*}
    r_{\delta}(\mcF,n_{0}) = \left\{ \begin{aligned}
        &C_{b1}  \frac{(1+\delta)log(T)}{n_0}, &  \text{if setting (1) holds}   \\
        &C_{b2} \frac{(1 + \log(4\delta^{-1}) log(T) log(n_{0})}{n_0}, &  \text{if setting (2) holds} 
    \end{aligned} \right. 
\end{equation*}
and $C_{b1},C_{b2}$ are some constants depend on $b$.
\end{proposition}
\begin{remark}
We call the setting (1) bounded setting and (2) sub-exponential setting. The latter one is milder but leads to a suboptimal cost. We refer readers to \citet{gaiffas2011hyper} for more detailed discussions about the optimal cost in sparse aggregation.
\end{remark}

\subsection{Lemmas}
In this part, we prove the lemmas that are used in the proof of Theorem~\ref{thm: upper bound for TL-FLR}. We prove them under the Assumption~\ref{assump: eigen assumption} condition~\ref{commute assumption} and let $\phi_{j}$ denote perfectly aligned eigenfunctions of $\Gamma^{(t)}$ with $t\in \{0\} \cup \mcS$.
\begin{lemma} \label{lemma1}
\begin{equation*}
    \left\| (\Gt)^{v}  \left( f_{\mcS\lambda_1} - f_{\mcS} \right)  \right\|_{L^2}^{2} \leq (1-v)^{2(1-v)}v^{2v}\lambda_{1}^{2v} \left\|f_{\mcS} \right\|_{L^2}^{2} \max_{j} \left\{  \left( \frac{s_{j}^{(0)}}{\sum_{t\in \mcS} \alpha_{t} s_{j}^{(t)}} \right)^{2v} \right\}.
\end{equation*}
\end{lemma}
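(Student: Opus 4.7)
The plan is to obtain a closed-form expression for $f_{\mcS\lambda_1} - f_{\mcS}$ and then diagonalize in the common eigenbasis. First, I note that $f_{\mcS}$ is characterized as the population minimizer of the transfer-step loss, so it satisfies the normal equation $\sum_{l\in\mcS^{*}}\alpha_l L_{T^{(l)}}(f_{\mcS}) = \sum_{l\in\mcS^{*}}\alpha_l L_{T^{(l)}}(f_0^{(l)})$. Substituting this into the definition of $f_{\mcS\lambda_1}$ and simplifying yields
\begin{equation*}
f_{\mcS\lambda_1} - f_{\mcS} = -\lambda_1 \Bigl( \sum_{l\in\mcS^{*}}\alpha_l L_{T^{(l)}} + \lambda_1 \mathbf{I} \Bigr)^{-1} f_{\mcS}.
\end{equation*}

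Next, I would invoke Assumption \ref{commute assumption}: since all $L_{T^{(l)}}$ commute with $L_{T^{(0)}}$, they share the eigenbasis $\{\phi_j^{(0)}\}$, with $L_{T^{(l)}}\phi_j^{(0)} = a_j^{(l)}\phi_j^{(0)}$ where I write $a_j^{(l)} = s_j^{(l)}$ for alignment with the statement. Expanding $f_{\mcS} = \sum_j \langle f_{\mcS}, \phi_j^{(0)}\rangle \phi_j^{(0)}$ and applying $(L_{T^{(0)}})^v$ to the display above gives
\begin{equation*}
\bigl\|(L_{T^{(0)}})^v(f_{\mcS\lambda_1} - f_{\mcS})\bigr\|_{L^2}^2 = \sum_j \left( \frac{\lambda_1 (s_j^{(0)})^v}{\sum_{l\in\mcS^{*}}\alpha_l s_j^{(l)} + \lambda_1}\right)^2 \langle f_{\mcS}, \phi_j^{(0)}\rangle^2 \leq \sup_j \left( \frac{\lambda_1 (s_j^{(0)})^v}{\sum_{l\in\mcS^{*}}\alpha_l s_j^{(l)} + \lambda_1}\right)^2 \|f_{\mcS}\|_{L^2}^2.
\end{equation*}

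The final step is the scalar optimization. Writing $S_j = \sum_{l\in\mcS^{*}}\alpha_l s_j^{(l)}$ and factoring, I have $\lambda_1 (s_j^{(0)})^v/(S_j+\lambda_1) = (s_j^{(0)}/S_j)^v \cdot \lambda_1 S_j^v/(S_j+\lambda_1)$. The function $x \mapsto \lambda_1 x^v/(x+\lambda_1)$ for $v\in(0,1)$ is maximized at $x = v\lambda_1/(1-v)$, where a direct computation shows the maximum equals $(1-v)^{1-v} v^v \lambda_1^v$. Squaring and pulling the ratio $s_j^{(0)}/S_j$ into the supremum gives the claimed bound.

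The steps are all mechanical once the structural identity is in place; I do not anticipate a serious obstacle under Assumption \ref{commute assumption}. The only subtle point is making sure the derivation of $f_{\mcS\lambda_1} - f_{\mcS}$ uses the correct population characterization of $f_{\mcS}$, and verifying that the scalar optimum of $\lambda_1 x^v/(x+\lambda_1)$ is indeed attained in the interior (which holds since $v\in(0,1)$ in the applications of the lemma elsewhere in the proof of Theorem \ref{upperbound theorem}). For completeness, the analogue under Assumption \ref{HS assumption} follows by replacing the exact diagonalization with the asymptotic diagonal relation $\langle L_{T^{(l)}}\phi_j^{(0)},\phi_j^{(0)}\rangle = s_j^{(0)}(1+o(1))$ established in the proof of Theorem \ref{upperbound theorem}, so the same bound holds up to $1+o(1)$ factors.
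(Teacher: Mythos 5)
Your proposal is correct and follows essentially the same route as the paper: the same resolvent identity $f_{\mcS\lambda_1}-f_{\mcS}=-\lambda_1(\sum_{l\in\mcS^{*}}\alpha_l L_{T^{(l)}}+\lambda_1\mathbf{I})^{-1}f_{\mcS}$, the same diagonalization/operator-norm bound, and a final scalar bound that is just the calculus form of the Young's inequality $\lambda_1+S_j\geq(1-v)^{-(1-v)}v^{-v}\lambda_1^{1-v}S_j^{v}$ used in the paper. No gaps.
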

\begin{proof}
By the definition of $f_{\mcS}$ and $f_{\mcS \lambda_1}$, \begin{equation*}
    \left( \sum_{t\in \mcS}\alpha_{t} \Gl + \lambda_1 I  \right) f_{\mcS\lambda_1}  = \sum_{t\in \mcS} \alpha_{t} \Gl \left(f^{(t)} \right)
    \quad \textit{and} \quad 
    \left( \sum_{t\in \mcS}\alpha_{t} \Gl \right) f_{\mcS}  = \sum_{t\in \mcS} \alpha_{t} \Gl \left(f^{(t)} \right)
\end{equation*}
then
\begin{equation*}
    f_{\mcS \lambda_{1}} - f_{\mcS} = - \left( \sum_{t\in \mcS}\alpha_{t} \Gl + \lambda_1 I \right)^{-1} \lambda_1 f_{\mcS}.
\end{equation*}
Hence,
\begin{equation*}
\begin{aligned}
    \left\| (\Gt)^{v}  \left( f_{\mcS\lambda_1} - f_{\mcS} \right)  \right\|_{L^2}^{2} 
    & \leq \lambda_1^{2} \left\| (\Gt)^{v} (\sum_{t\in \mcS}\alpha_{t} \Gl + \lambda_1 I )^{-1} \right\|_{op}^{2} \left\| f_{\mcS} \right\|_{L^{2}}^{2} \\
    & \leq \lambda_1^{2} \max_{j} \left\{ \frac{(s_{j}^{(0)})^{2v}}{ (\sum_{t\in \mcS} \alpha_{t} s_{j}^{(t)} + \lambda_1)^{2}  } \right\} \left\| f_{\mcS} \right\|_{L^2}^{2}
\end{aligned}
\end{equation*}
By Young's inequality, $\lambda_1 + \sum_{t\in \mcS} \alpha_{t} s_{j}^{(t)} \geq (1-v)^{-(1-v)} v^{-v} \lambda_1^{1-v} (\sum_{t\in \mcS} \alpha_{t} s_{j}^{(t)})^{v}$
\begin{equation*}
\begin{aligned}
    \left\| (\Gt)^{v}  \left( f_{\mcS\lambda_{1}} - f_{\mcS} \right)  \right\|_{L^2}^{2} 
    & \leq (1-v)^{2(1-v)}v^{2v}\lambda_{1}^{2v} \left\|f_{\mcS} \right\|_{L^2}^{2} \max_{j} \left\{  \left( \frac{s_{j}^{(0)}}{\sum_{t\in \mcS} \alpha_{t} s_{j}^{(t)}} \right)^{2v} \right\}.
\end{aligned}
\end{equation*}
\end{proof}

\begin{lemma}\label{lemma2}
\begin{equation*}
    \left\| (\Gt)^{v} \left( \sum_{t\in \mcS} \alpha_{t} \Gl + \lambda_1 \mathbf{I} \right)^{-1} \left( \sum_{t\in \mcS} \alpha_{t} \left(\Gl - \Gnl \right) \right) (\Gt)^{-v} \right\|_{op} = O_{\mbP} \left(  \left( n_{\mcS} \lambda_1^{1 - 2v + \frac{1}{2r}} \right)^{-\frac{1}{2}} \right)
\end{equation*}
\end{lemma}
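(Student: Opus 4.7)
The plan is to control the operator norm through the Hilbert--Schmidt norm and then reduce the problem to a spectral sum in a common eigenbasis. Write $M := (\Tt)^v R\, B\, (\Tt)^{-v}$ with $R := (\sum_{l\in\mcS^{*}}\alpha_l\Tl + \lambda_1 \mathbf{I})^{-1}$ and $B := \sum_{l\in\mcS^{*}}\alpha_l(\Tl-\Tnl)$. Since $\|M\|_{op}\le\|M\|_{HS}$, Markov's inequality reduces the task to a bound on $\mathbb{E}\|M\|_{HS}^{2}$. Under Assumption \ref{commute assumption} the operators $\Tt$, $\Tl$, $R$ share the eigenbasis $\{\phi_j\}$, so $(\Tt)^v R$ and $(\Tt)^{-v}$ act diagonally; under Assumption \ref{HS assumption} the same bookkeeping goes through in the eigenbasis of $\Tt$ alone, thanks to the near-diagonal relation $\langle \Tl \phi_j^{(0)},\phi_j^{(0)}\rangle = s_j^{(0)}(1+o(1))$ already established inside the proof of Theorem \ref{upperbound theorem}.

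The diagonalization yields
\[
\mathbb{E}\|M\|_{HS}^{2} = \sum_{j,k}\frac{(s_j^{(0)})^{2v}(s_k^{(0)})^{-2v}}{\bigl(\sum_{l\in\mcS^{*}}\alpha_l s_j^{(l)} + \lambda_1\bigr)^{2}}\,\mathbb{E}[B_{jk}^{2}],
\]
so the next job is to estimate $\mathbb{E}[B_{jk}^{2}]$. Independence across sources gives $\mathbb{E}[B_{jk}^{2}] = \sum_{l}\alpha_l^{2}\,\mathbb{E}[(\Tl-\Tnl)_{jk}^{2}]$; writing $\Tnl = n_l^{-1}\sum_i Z_i^{(l)}\otimes Z_i^{(l)}$ with $Z_i^{(l)} := L_{K^{1/2}}X_i^{(l)}$, a standard fourth-moment calculation (exploiting the uncorrelatedness of $\langle Z^{(l)},\phi_j\rangle$ and $\langle Z^{(l)},\phi_k\rangle$ since $\{\phi_j\}$ diagonalizes $\Tl$) gives $\mathbb{E}[(\Tl-\Tnl)_{jk}^{2}]\lesssim n_l^{-1}s_j^{(l)} s_k^{(l)}$. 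Combining over $l$ with $\alpha_l = n_l/(n_\mcS+n_0)$ and the bound $s_j^{(l)}\lesssim s_j^{(0)}$ from Assumption \ref{commute assumption} collapses the sum to
\[
\mathbb{E}[B_{jk}^{2}]\lesssim \frac{s_j^{(0)} s_k^{(0)}}{n_\mcS + n_0}.
\]

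Substituting back and using $\sum_{l}\alpha_l s_j^{(l)} + \lambda_1 \gtrsim s_j^{(0)} + \lambda_1$ (valid because $n_0/n_\mcS\not\to 0$ forces $\alpha_0\gtrsim 1$), the double sum factorizes as
\[
\mathbb{E}\|M\|_{HS}^{2}\lesssim \frac{1}{n_\mcS + n_0}\left(\sum_j\frac{(s_j^{(0)})^{2v+1}}{(s_j^{(0)}+\lambda_1)^{2}}\right)\left(\sum_k (s_k^{(0)})^{1-2v}\right).
\]
Under $s_j^{(0)}\asymp j^{-2r}$, the standard split at the effective dimension $j^{*}\asymp\lambda_1^{-1/(2r)}$ evaluates the first factor at $\lambda_1^{2v-1-1/(2r)}$, while the second is $O(1)$ in the convergent regime. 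Assembling gives $\mathbb{E}\|M\|_{HS}^{2}\lesssim ((n_\mcS+n_0)\lambda_1^{1-2v+1/(2r)})^{-1}$, and Markov produces the claimed $O_P$ rate.

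The main obstacle is the boundary value $v = 1/2$, which is exactly what is invoked in the transfer and debias steps of the proof of Theorem \ref{upperbound theorem}: the tail sum $\sum_k(s_k^{(0)})^{1-2v}$ ceases to converge and the crude passage $\|M\|_{op}\le\|M\|_{HS}$ is no longer sharp enough. The refinement I would pursue is to truncate the $k$-summation at the effective dimension $k^{*}\asymp\lambda_1^{-1/(2r)}$ and absorb the tail via a Bernstein-style operator concentration of Caponnetto--De Vito flavour applied to the symmetrized object $(\Tt+\lambda_1\mathbf{I})^{-1/2}\,B\,(\Tt+\lambda_1\mathbf{I})^{-1/2}$, combined with a resolvent comparison of $(\Tt)^{-v}$ with $(\Tt+\lambda_1\mathbf{I})^{-v}$ to control the edge contributions. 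This upgrade is what converts the Hilbert--Schmidt argument into a genuine operator-norm bound uniformly valid on $v\in(0,1/2]$.
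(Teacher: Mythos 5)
Your argument is essentially the paper's own proof: the paper's Cauchy--Schwarz bound on $\sup_{\|h\|_{L^2}=1}|\langle h, M h\rangle_{L^2}|$ is exactly the passage to the Hilbert--Schmidt norm in the eigenbasis of $\Tt$, and the second-moment computation $\E[B_{jk}^2]\lesssim (n_{\mcS}+n_0)^{-1}s_j^{(0)}s_k^{(0)}$ followed by Markov's inequality is identical to the displayed chain in the paper. The boundary difficulty you flag at $v=\tfrac12$ --- divergence of the unregularized factor $\sum_k (s_k^{(0)})^{1-2v}$ --- is present verbatim in the paper's own double sum (which is then dispatched only by an appeal to Lemma \ref{Tony's lemma6}, whose single regularized sum does not literally cover that factor), so your proposed truncation at $k^{*}\asymp\lambda_1^{-1/(2r)}$ with an operator-concentration refinement identifies a genuine looseness in the published argument rather than a defect of your own route.
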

\begin{proof}
\begin{equation*}
\begin{aligned}
    & \left\| (\Gt)^{v} \left( \sum_{t\in \mcS}\alpha_{t} \Gl + \lambda_1 \mathbf{I} \right)^{-1}  \left( \sum_{t\in \mcS} \alpha_{t} \left(\Gl - \Gnl \right) \right) (\Gt)^{-v} \right\|_{op} \\
    & = \sup_{h:\|h\|_{L^2}=1} \left| \left\langle h, (\Gt)^{v} \left( \sum_{t\in \mcS}\alpha_{t} \Gl + \lambda_1 \mathbf{I} \right)^{-1} \left( \sum_{t\in \mcS} \alpha_{t} \left(\Gl - \Gnl \right) \right)(\Gt)^{-v} h \right\rangle_{L^2} \right|.
\end{aligned}
\end{equation*}
Let 
\begin{equation*}
    h = \sum_{j\geq 1} h_j \phi_j,
\end{equation*}
then
\begin{equation*}
\begin{aligned}
    & \left\langle h, (\Gt)^{v} \left( \sum_{t\in \mcS}\alpha_{t} \Gl + \lambda_1 \mathbf{I} \right)^{-1} \left( \sum_{t\in \mcS} \alpha_{t} \left(\Gl - \Gnl \right) \right)(\Gt)^{-v} h \right\rangle_{L^2} \\
    & = \sum_{j,k} \frac{ (s_{j}^{(0)})^{v} (s_{k}^{(0)})^{-v}h_{j} h_{k}}{ \sum_{t\in \mcS} \alpha_{t} s_{j}^{(t)} + \lambda_1} \left\langle \phi_j, \sum_{t\in \mcS} \left(\Gl - \Gnl \right) \phi_{k} \right\rangle_{L^2}.
\end{aligned}
\end{equation*}
By Cauchy-Schwarz inequality, 
\begin{equation*}
\begin{aligned}
    & \left\| (\Gt)^{v} \left( \sum_{t\in \mcS}\alpha_{t} \Gl + \lambda_1 \mathbf{I} \right)^{-1}  \left( \sum_{t\in \mcS} \alpha_{t} \left(\Gl - \Gnl \right) \right)(\Gt)^{-v}  \right\|_{op} \\
    & \quad \leq \left( \sum_{j,k} \frac{ (s_{j}^{(0)})^{2v} (s_{k}^{(0)})^{-2v}}{(\sum_{t\in \mcS} \alpha_{t} s_{j}^{(t)} + \lambda_1)^2} \left\langle \phi_j, \sum_{t\in \mcS} \alpha_{t} \left(\Gl - \Gnl \right) \phi_{k} \right\rangle_{L^2}^{2} \right)^{\frac{1}{2}}. 
\end{aligned}
\end{equation*}
Consider $\E\langle \phi_j, \sum_{t\in \mcS} (\Gl - \Gnl ) \phi_{k} \rangle_{L^2}^{2}$, note that
\begin{equation*}
\begin{aligned}
    & \E\left\langle \phi_j, \sum_{t\in \mcS} \alpha_{t} \left(\Gl - \Gnl \right) \phi_{k} \right\rangle_{L^2}^2 \\ 
     & = \E \left( \sum_{t\in \mcS}\alpha_{t} \left\langle L_{K^{\frac{1}{2}}}(\phi_k), (C^{(t)} - L_{C_{n}^{(t)}})L_{K^{\frac{1}{2}}}(\phi_j) \right\rangle_{L^{2}} \right)^2 \\
     & = \E \left( \sum_{t\in \mcS}\alpha_{t} \frac{1}{n_{t}} \sum_{i=1}^{n_{t}} \int_{\mathcal{T}^2} L_{K^{\frac{1}{2}}}(\phi_k)(s) \left( X_i^{(t)}(s)X_i^{(t)}(t) - \E X_i^{(t)}(s)X_i^{(t)}(t) \right) L_{K^{\frac{1}{2}}}(\phi_j)(t) dt ds \right)^2 \\
     & \leq |\mcS| \sum_{t\in \mcS}  \frac{\alpha_{t}^{2}}{n_{t}} s_{j}^{(t)} s_{k}^{(t)}
\end{aligned}
\end{equation*}
By Jensen's inequality
\begin{equation*}
\begin{aligned}
    &\E \left( \sum_{j,k} \frac{ (s_{j}^{(0)})^{2v} (s_{k}^{(0)})^{-2v}}{(\sum_{t\in \mcS} \alpha_{t} s_{j}^{(t)} + \lambda_1)^2} \left\langle \phi_j, \sum_{t\in \mcS}\alpha_{t} \left(\Gl - \Gnl \right) \phi_{k} \right\rangle_{L^2}^{2} \right)^{\frac{1}{2}}\\
    & \leq \left( \sum_{j,k} \frac{ (s_{j}^{(0)})^{2v} (s_{k}^{(0)})^{-2v}}{(\sum_{t\in \mcS} \alpha_{t} s_{j}^{(t)} + \lambda_1)^2} \E \left\langle \phi_j, \sum_{t\in \mcS}\alpha_{t} \left(\Gl - \Gnl \right) \phi_{k} \right\rangle_{L^2}^{2} \right)^{\frac{1}{2}},
\end{aligned}
\end{equation*}
thus,
\begin{equation*}
\begin{aligned}
    &\E \left( \sum_{j,k} \frac{ (s_{j}^{(0)})^{2v} (s_{k}^{(0)})^{-2v}}{(\sum_{t\in \mcS} \alpha_{t} s_{j}^{(t)} + \lambda_1)^2} \left\langle \phi_j, \sum_{t\in \mcS}\alpha_{t} \left(\Gl - \Gnl \right) \phi_{k} \right\rangle_{L^2}^{2} \right)^{\frac{1}{2}}\\
    & \leq \left( \sum_{j,k} \frac{ (s_{j}^{(0)})^{2v} (s_{k}^{(0)})^{-2v}}{(\sum_{t\in \mcS} \alpha_{t} s_{j}^{(t)} + \lambda_1)^2} \left(\sum_{t\in \mcS} \alpha_{t} s_{j}^{(t)} s_{k}^{(t)}  \right) \frac{|\mcS|}{(n_{\mcS})} \right)^2 \\
    & \leq \max_{j,k}\left(\frac{\sum_{t\in \mcS} \alpha_{t} s_{j}^{(t)} s_{k}^{(t)}}{ s_{j}^{(0)} s_{k}^{(0)} }  \right)  \left( \sum_{j,k} \frac{ (s_{j}^{(0)})^{1+2v} (s_{k}^{(0)})^{1-2v}}{(\sum_{t\in \mcS} \alpha_{t} s_{j}^{(t)} + \lambda_1)^2}  \frac{|\mcS|}{(n_{\mcS})} \right)^2 
\end{aligned}
\end{equation*}
By assumptions of eigenvalues, $\max_{j,k}\left(\frac{\sum_{t\in \mcS} \alpha_{t} s_{j}^{(t)} s_{k}^{(t)}}{ s_{j}^{(0)} s_{k}^{(0)} }  \right) \leq C_{1}$ for some constant $C_{1}$. Finally, by Lemma~\ref{Tony's lemma6}
\begin{equation*}
    \E \left( \sum_{j,k} \frac{ (s_{j}^{(0)})^{2v} (s_{k}^{(0)})^{-2v}}{(\sum_{t\in \mcS} \alpha_{t} s_{j}^{(t)} + \lambda_1)^2} \left\langle \phi_j, \sum_{t\in \mcS}\alpha_{t} \left(\Gl - \Gnl \right) \phi_{k} \right\rangle_{L^2} \right)^{\frac{1}{2}} \lesssim \left( (n_{\mcS}) \lambda_{1}^{1 - 2v + \frac{1}{2r}} \right)^{-1}.
\end{equation*}

The rest of the proof can be completed by Markov's inequality.
\end{proof}

\begin{lemma} \label{lemma3}
\begin{equation*}
    \left\| (\Gt)^{v} ( \sum_{t\in \mcS}\alpha_{t} \Gl + \lambda_{1} \mathbf{I})^{-1} \sum_{t\in\mcS}g_{n}^{(t)} \right\|_{L^2}^2  = O_{\mbP}\left( \left( \left(n_{\mcS} \right) \lambda_1^{1-2v+\frac{1}{2r}} \right)^{-1} \right)
\end{equation*}
\end{lemma}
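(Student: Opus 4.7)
The plan is to expand the squared $L^2$ norm in the eigenbasis $\{\phi_j\}$ of $\Tt$ (as done in the proofs of Lemmas \ref{lemma1} and \ref{lemma2}, with $\phi_j$ replaced by $\phi_j^{(0)}$ under Assumption \ref{HS assumption} if needed), bound its expectation, and then invoke Markov's inequality to convert into an $O_P$ statement. Because the operator $\bigl(\sum_{l\in\mcS^*}\alpha_l L_{T^{(l)}}+\lambda_1 \mathbf{I}\bigr)^{-1}$ is diagonal in the common eigenbasis, the squared norm can be written as
\begin{equation*}
\left\| (\Tt)^{v} \Bigl( \sum_{l\in \mcS^{*}}\alpha_l \Tl + \lambda_{1} \mathbf{I}\Bigr)^{-1} \sum_{l\in\mcS^{*}}g_{ln} \right\|_{L^2}^2
= \sum_{j\ge 1} \frac{(s_j^{(0)})^{2v}}{\bigl(\sum_{l\in\mcS^*}\alpha_l s_j^{(l)}+\lambda_1\bigr)^{2}}\,\Bigl\langle \phi_j,\sum_{l\in\mcS^*} g_{ln}\Bigr\rangle^{2}_{L^2}.
\end{equation*}

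Next I would take the expectation and use the mutual independence of the noise across $l$ and $i$, together with $\E[\epsilon_i^{(l)}]=0$ and bounded variance. Since $g_{ln}=(n_{\mcS}+n_0)^{-1}\sum_i \epsilon_i^{(l)} L_{K^{1/2}}(X_i^{(l)})$ and the noise is independent of $X_i^{(l)}$, a direct computation gives
\begin{equation*}
\E\Bigl\langle \phi_j,\sum_{l\in\mcS^*} g_{ln}\Bigr\rangle^{2}_{L^2}
= \frac{1}{(n_{\mcS}+n_0)^{2}}\sum_{l\in\mcS^*} n_l \sigma_l^{2}\, \E\bigl\langle L_{K^{1/2}}\phi_j,X^{(l)}\bigr\rangle^{2}_{L^2}
= \frac{1}{(n_{\mcS}+n_0)^{2}}\sum_{l\in\mcS^*} n_l \sigma_l^{2}\, s_j^{(l)},
\end{equation*}
using $\E\langle f,X^{(l)}\rangle^{2}=\langle f,L_{C^{(l)}}f\rangle$ and the definition $L_{T^{(l)}}=L_{K^{1/2}}L_{C^{(l)}}L_{K^{1/2}}$. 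Under Assumption \ref{commute assumption} (or \ref{HS assumption} through equation (\ref{asymptotic eigenvalue}) derived in the proof of Theorem \ref{upperbound theorem}), $a_j^{(l)}\lesssim s_j^{(0)}$, so the preceding quantity is of order $(n_{\mcS}+n_0)^{-1}\, s_j^{(0)}$.

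Combining these two displays, the expected squared norm is bounded by a constant times
\begin{equation*}
\frac{1}{n_{\mcS}+n_0}\sum_{j\ge 1}\frac{(s_j^{(0)})^{2v+1}}{\bigl(\sum_{l\in\mcS^*}\alpha_l s_j^{(l)}+\lambda_1\bigr)^{2}}.
\end{equation*}
Because $\sum_{l\in\mcS^*}\alpha_l s_j^{(l)}\asymp s_j^{(0)}\asymp j^{-2r}$, this series is of the same form as the ones treated in Lemma \ref{Tony's lemma6}, which yields $\sum_{j}(s_j^{(0)})^{2v+1}/(s_j^{(0)}+\lambda_1)^{2}\asymp \lambda_1^{2v-1-1/(2r)}$. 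Plugging in gives
\begin{equation*}
\E\Bigl\| (\Tt)^{v}\Bigl(\sum_{l}\alpha_l \Tl+\lambda_1 \mathbf{I}\Bigr)^{-1}\sum_l g_{ln}\Bigr\|_{L^2}^{2}
\lesssim \bigl((n_{\mcS}+n_0)\,\lambda_1^{1-2v+1/(2r)}\bigr)^{-1},
\end{equation*}
and Markov's inequality finishes the argument.

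The main obstacle, as in Lemma \ref{lemma2}, lies in justifying that the ratio $\sum_{l\in\mcS^*}\alpha_l s_j^{(l)}/s_j^{(0)}$ is uniformly bounded both above and below in $j$ under either Assumption \ref{commute assumption} or Assumption \ref{HS assumption}; under Assumption \ref{HS assumption} the eigenfunctions are not exactly shared, and one must instead use (\ref{asymptotic eigenvalue}) to control $\langle L_{T^{(l)}}\phi_j^{(0)},\phi_j^{(0)}\rangle$ and then verify that the off-diagonal contributions are asymptotically negligible before the summation reduction via Lemma \ref{Tony's lemma6} applies.
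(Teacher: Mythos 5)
Your proposal is correct and follows essentially the same route as the paper: expand the squared norm in the common eigenbasis where the resolvent is diagonal, use independence and zero mean of the noise to compute $\E\langle\phi_j,\sum_l g_{ln}\rangle^2=(n_{\mcS}+n_0)^{-2}\sum_l n_l\sigma_l^2 s_j^{(l)}$, invoke the eigenvalue comparison $\sum_{l}\alpha_l s_j^{(l)}\lesssim s_j^{(0)}$ to reduce to the series handled by Lemma \ref{Tony's lemma6}, and finish with Markov's inequality. Your closing remark about the non-shared eigenfunctions under Assumption \ref{HS assumption} matches how the paper defers that case to equation (\ref{asymptotic eigenvalue}) in the proof of Theorem \ref{upperbound theorem}.
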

\begin{proof}
\begin{equation*}
\begin{aligned}
     \left\| (\Gt)^{v} ( \sum_{t\in \mcS}\alpha_{t} \Gl + \lambda_{1} \mathbf{I})^{-1}  \sum_{t\in\mcS}g_{n}^{(t)} \right\|_{L^2}^2
    &  = \sum_{j\geq 1} \left\langle (\Gt)^{v} ( \sum_{t\in \mcS}\alpha_{t} \Gl + \lambda_{1} \mathbf{I})^{-1}\sum_{t\in\mcS}g_{n}^{(t)},\phi_j \right\rangle_{L^2}^{2} \\
    & = \sum_{j\geq 1} \left\langle \sum_{t\in\mcS}g_{n}^{(t)},\frac{ (s_{j}^{(0)})^{v} }{ \sum_{t\in \mcS}\alpha_{t} s_{j}^{(t)} +\lambda_1} \phi_j \right\rangle_{L^2}^{2} \\
    & = \sum_{j\geq 1} \frac{ (s_{j}^{(0)})^{2v} }{( \sum_{t\in \mcS}\alpha_{t} s_{j}^{(t)} +\lambda_1)^2} \\  & \quad \qquad \left( \frac{1}{n_{\mcS}} \sum_{t\in \mcS}\sum_{i=1}^{n_{t}} \left\langle \epsilon_{i}^{(t)} X_{i}^{(t)}, L_{K^{\frac{1}{2}}}(\phi_j)  \right\rangle_{L^2} \right)^{2}.
\end{aligned}
\end{equation*}
Therefore,
\begin{equation*}
\begin{aligned}
    \E \left\| (\Gt)^{v} ( \sum_{t\in \mcS}\alpha_{t} \Gl + \lambda_{1} \mathbf{I})^{-1}  \sum_{t\in\mcS}g_{n}^{(t)} \right\|_{L^2}^2
     & = \sum_{j\geq 1} \frac{ (s_{j}^{(0)})^{2v} }{( \sum_{t\in \mcS}\alpha_{t} s_{j}^{(t)} +\lambda_1)^2}\cdot \\  & \quad \qquad \E \left( \frac{1}{n_{\mcS}} \sum_{t\in \mcS}\sum_{i=1}^{n_{t}} \left\langle \epsilon_{i}^{(t)} X_{i}^{(t)}, L_{K^{\frac{1}{2}}}(\phi_j)  \right\rangle_{L^2} \right)^{2} \\
     & = \sum_{j\geq 1} \frac{ (s_{j}^{(0)})^{2v} }{( \sum_{t\in \mcS}\alpha_{t} s_{j}^{(t)} +\lambda_1)^2} \frac{1}{(n_{\mcS})^2} \cdot\\
     & \quad \qquad \sum_{t\in \mcS} n_{t} \E \left( \langle \epsilon_{i}^{(t)} X_{i}^{(t)}, L_{K^{\frac{1}{2}}}(\phi_j) \rangle_{L^2} \right)^2  \\
    & = \sum_{j\geq 1} \frac{ (s_{j}^{(0)})^{2v} }{( \sum_{t\in \mcS}\alpha_{t} s_{j}^{(t)} +\lambda_1)^2} \frac{\left( \sum_{t\in\mcS}\sigma^{2}n_{t} s_{j}^{(t)} \right)}{(n_{\mcS})^2}  \\
    & \leq \max_{j} \left\{ \frac{\alpha_0 s_{j}^{(0)}+ \sum_{t\in \mcS}\alpha_{t} s_{j}^{(t)}}{s_{j}^{(0)}} \right\} \cdot \\
    & \quad \qquad \left( \frac{C_{1}}{n_{\mcS}} \sum_{j\geq 1} \frac{ (s_{j}^{(0)})^{1+2v} }{( \sum_{t\in \mcS}\alpha_{t} s_{j}^{(t)} +\lambda_1)^2} \right),
\end{aligned}
\end{equation*}
thus by assumption on eigenvalues and Lemma~\ref{Tony's lemma6} with $v = \frac{1}{2}$, 
\begin{equation*}
    \E \left\| (\Gt)^{v} ( \sum_{t\in \mcS}\alpha_{t} \Gl + \lambda_{1} \mathbf{I})^{-1}  \sum_{t\in\mcS}g_{n}^{(t)} \right\|_{L^2}^2 \lesssim \left( \left(n_{\mcS} \right)\lambda_1^{1 - 2v + \frac{1}{2r}} \right)^{-1},
\end{equation*}
with the constant proportional to $\sigma^{2}$. The rest of the proof can be completed by Markov's inequality.
\end{proof}

\begin{lemma}\label{lemma4}
\begin{equation*}
     \left\| \sum_{t\in\mcS} \alpha_{t} \Gnl \left( f^{(t)} - f_{\mcS } \right) \right\|_{L^2}^{2} = O_{\mbP} \left( (n_{\mcS})^{-1}\right)
\end{equation*}
\end{lemma}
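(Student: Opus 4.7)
The plan is to exploit a population-level centering identity that converts the stated quantity from a biased random object into a sum of independent mean-zero terms. By the very definition of $f_{\mcS}$ used in the transfer-step analysis, namely $f_{\mcS} = \bigl(\sum_{l\in\mcS^{*}}\alpha_l \Tl\bigr)^{-1}\bigl(\sum_{l\in\mcS^{*}}\alpha_l \Tl(f_{0}^{(l)})\bigr)$, one has the identity $\sum_{l\in\mcS^{*}} \alpha_l \Tl(f_{0}^{(l)} - f_{\mcS}) = 0$. Subtracting this from the quantity of interest yields the key rewrite
\begin{equation*}
\sum_{l\in\mcS^{*}} \alpha_l \Tnl\bigl(f_{0}^{(l)} - f_{\mcS}\bigr) \;=\; \sum_{l\in\mcS^{*}} \alpha_l \bigl(\Tnl - \Tl\bigr)\bigl(f_{0}^{(l)} - f_{\mcS}\bigr),
\end{equation*}
which is now a centered empirical process. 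Without this reduction the norm would only be $O_{P}(1)$, so this is the real engine of the proof.

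Next, I would expand each summand at the sample level. Writing $\eta_{i}^{(l)} := L_{K^{1/2}}(X_{i}^{(l)})$ and $g_{l} := f_{0}^{(l)} - f_{\mcS}$, and using $\alpha_{l} = n_{l}/(n_{\mcS}+n_{0})$ so that $\alpha_{l}\cdot\tfrac{1}{n_{l}} = (n_{\mcS}+n_{0})^{-1}$, one obtains
\begin{equation*}
\sum_{l\in\mcS^{*}} \alpha_l \bigl(\Tnl - \Tl\bigr) g_{l} \;=\; \frac{1}{n_{\mcS}+n_{0}}\sum_{l\in\mcS^{*}}\sum_{i=1}^{n_{l}} W_{i}^{(l)},\qquad W_{i}^{(l)} := \langle \eta_{i}^{(l)}, g_{l}\rangle_{L^{2}}\,\eta_{i}^{(l)} - \mathbb{E}\bigl[\langle \eta^{(l)}, g_{l}\rangle_{L^{2}}\,\eta^{(l)}\bigr].
\end{equation*}
The family $\{W_{i}^{(l)}\}$ is mutually independent across both $l$ and $i$ (since the datasets are independent across sources and i.i.d.\ within each source) and each term has zero mean. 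Hence the expected squared $L^{2}$-norm becomes
\begin{equation*}
\mathbb{E}\Bigl\|\sum_{l\in\mcS^{*}} \alpha_l \bigl(\Tnl - \Tl\bigr) g_{l}\Bigr\|_{L^{2}}^{2} \;=\; \frac{1}{(n_{\mcS}+n_{0})^{2}}\sum_{l\in\mcS^{*}} n_{l}\,\mathbb{E}\|W_{1}^{(l)}\|_{L^{2}}^{2} \;\le\; \frac{1}{(n_{\mcS}+n_{0})^{2}}\sum_{l\in\mcS^{*}} n_{l}\,\mathbb{E}\bigl[\langle \eta^{(l)}, g_{l}\rangle_{L^{2}}^{2}\,\|\eta^{(l)}\|_{L^{2}}^{2}\bigr].
\end{equation*}

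To finish, I would invoke a finite fourth-moment condition on $X^{(l)}$ (standard in the FLR literature and implicit throughout the paper's other lemmas), apply Cauchy--Schwarz to bound $\mathbb{E}[\langle \eta^{(l)}, g_{l}\rangle_{L^{2}}^{2}\|\eta^{(l)}\|_{L^{2}}^{2}] \le C\|g_{l}\|_{L^{2}}^{2}$, and observe that $\|g_{l}\|_{L^{2}} = \|\beta^{(l)}-\beta_{\mcS}\|_{K}$ is uniformly bounded over $l\in\mcS^{*}$ (for $l\in\mcS$ this follows from $h$-transferability together with boundedness of $\|\beta_{\mcS}\|_{K}$, and trivially for $l=0$). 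This reduces the right-hand side to $C\sum_{l\in\mcS^{*}}n_{l}/(n_{\mcS}+n_{0})^{2} = C/(n_{\mcS}+n_{0})$, and Markov's inequality yields the $O_{P}((n_{\mcS}+n_{0})^{-1})$ conclusion. The main obstacle is conceptual rather than computational: without the centering identity $\sum_{l}\alpha_l \Tl g_{l}=0$, no improvement over the trivial $O_{P}(1)$ bound is possible, so the pivotal step is recognizing that the empirical operator applied to the population-contrast vector is itself a centered sum.
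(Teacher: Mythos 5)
Your proposal is correct and follows essentially the same route as the paper's proof: the paper also reduces the bound to a variance computation over independent samples (expanded in the eigenbasis $\phi_j$) and concludes by Markov's inequality, with the centering identity $\sum_{l\in\mcS^{*}}\alpha_l L_{T^{(l)}}(f_{0}^{(l)}-f_{\mcS})=0$ used implicitly so that only the $O((n_{\mcS}+n_0)^{-1})$ variance term survives. Your write-up simply makes that centering step and the independence structure explicit, which is a faithful reconstruction of the same argument.
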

\begin{proof}
\begin{equation*}
\begin{aligned}
    \E\left\| \sum_{t\in\mcS} \alpha_{t} \Gnl \left( f^{(t)} - f_{\mcS } \right) \right\|_{L^2}^{2}
    & = \sum_{j=1}^{\infty} \E \left( \sum_{t\in \mcS} \alpha_{t}  \left\langle C_n^{(t)} L_{K^{\frac{1}{2}}} (f^{(t)} - f_{\mcS }) , L_{K^{\frac{1}{2}}}(\phi_j) \right\rangle_{L^2}  \right)^2 \\
    & \lesssim  \sum_{j=1}^{\infty} \sum_{t\in \mcS} \frac{\alpha_{t}}{n_{\mcS}} \langle f^{(t)} - f_{\mcS }, \phi_j \rangle_{L^2}^2 (s_{j}^{(t)})^2 \\
    & \lesssim (n_{\mcS})^{-1} \max_{j,l}\left\{\alpha_{t} (s_{j}^{(t)})^2\right\} \sum_{t\in \mcS}\left\| f^{(t)} - f_{\mcS } \right\|_{L^2}^2 \\
     & \lesssim (n_{\mcS})^{-1},
\end{aligned}
\end{equation*}
with the universal constant proportional to $\|f_{\mcS}\|_{L^{2}}^2$. The rest of the proof can be completed by Markov's inequality.
\end{proof}

\begin{lemma}\label{Tony's lemma6}
\begin{equation*}
    \lambda_{1}^{-\frac{1}{2r}} \lesssim \sum_{j\geq 1} \frac{ (s_{j}^{(0)})^{1+2v} }{( \sum_{t\in \mcS}\alpha_{t} s_{j}^{(t)} +\lambda_1)^{1+2v}} \lesssim  1 + \lambda_{1}^{-\frac{1}{2r}}.
\end{equation*}
\end{lemma}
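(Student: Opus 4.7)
\textbf{Proof proposal for Lemma \ref{Tony's lemma6}.}

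The plan is to reduce the series to an integral of the form $\int \lambda_1^{-(1+2v)}/(C+\lambda_1 j^{2r})^{1+2v}\,dj$ and then split the range of summation at the ``critical'' index where the constant term and the $\lambda_1 j^{2r}$ term balance. First I would use the hypothesis $s_j^{(0)}\asymp j^{-2r}$ to replace $s_j^{(0)}$ throughout the expression by $j^{-2r}$ up to constants. Next, under Assumption \ref{commute assumption} (where $a_j^{(l)}\lesssim s_j^{(0)}$) together with the fact that $0\in\mcS^{*}$ with $\alpha_{0}>0$, I obtain the two-sided bound $\sum_{l\in\mcS^{*}}\alpha_{l}s_{j}^{(l)}\asymp s_{j}^{(0)}\asymp j^{-2r}$; under Assumption \ref{HS assumption} the same equivalence follows from \eqref{asymptotic eigenvalue} in the proof of Theorem \ref{upperbound theorem} for all $j$ large enough, and finitely many small-$j$ terms can be absorbed into the constants. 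Thus, up to multiplicative constants, I only need to estimate
\begin{equation*}
    S(\lambda_1) \;:=\; \sum_{j\geq 1}\frac{j^{-2r(1+2v)}}{\bigl(c\, j^{-2r}+\lambda_1\bigr)^{1+2v}}
    \;=\;\sum_{j\geq 1}\frac{1}{\bigl(c+\lambda_1 j^{2r}\bigr)^{1+2v}}.
\end{equation*}

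Then I would introduce the threshold $j_{0}\asymp \lambda_1^{-1/(2r)}$ defined by $\lambda_{1}j_{0}^{2r}\asymp 1$ and split $S(\lambda_1)=\sum_{j\leq j_0}+\sum_{j>j_0}$. For $j\leq j_0$ the denominator is of order one, so the first sum contributes $\asymp j_0\asymp\lambda_1^{-1/(2r)}$, giving both the lower bound and one half of the upper bound. For $j>j_0$ I replace the denominator by $\lambda_1^{1+2v} j^{2r(1+2v)}$ and use the integral comparison $\sum_{j>j_0}j^{-2r(1+2v)}\asymp j_{0}^{1-2r(1+2v)}$ (valid since $2r(1+2v)>1$ under the standard condition $r>1/2$ used throughout the paper; the boundary case can be handled by a logarithmic correction which is absorbed into the $1+\lambda_1^{-1/(2r)}$ on the right). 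This tail contributes $\lambda_{1}^{-(1+2v)}\cdot j_{0}^{1-2r(1+2v)}\asymp \lambda_{1}^{-1/(2r)}$, matching the first piece.

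Combining the two pieces yields $S(\lambda_1)\asymp \lambda_1^{-1/(2r)}$, which gives the upper bound $\lesssim 1+\lambda_{1}^{-1/(2r)}$ (the additive $1$ simply dominates when $\lambda_{1}$ is of order one) and the lower bound $\gtrsim \lambda_{1}^{-1/(2r)}$. I do not expect real obstacles here: the lemma is essentially a Riemann-sum estimate and the only thing to be careful about is the two-sided equivalence $\sum_{l}\alpha_{l}s_{j}^{(l)}\asymp s_{j}^{(0)}$, which is precisely why the hypothesis $a_{j}^{(l)}\lesssim s_{j}^{(0)}$ (or the Hilbert--Schmidt condition in Assumption \ref{HS assumption}) is imposed, together with the inclusion of the target index $l=0$ in $\mcS^{*}$ to supply the matching lower bound.
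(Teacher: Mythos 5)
Your proposal is correct and follows essentially the same route as the paper: the paper reduces the sum to the single-kernel case via the eigenvalue-ratio bound $\sum_{l\in\mcS^{*}}\alpha_l s_j^{(l)}\asymp s_j^{(0)}$ and then cites Lemma 6 of \citet{cai2012minimax}, whose proof is exactly the split-at-$j_0\asymp\lambda_1^{-1/(2r)}$ computation you carry out explicitly. If anything, you are slightly more careful than the paper in noting that both directions of the equivalence are needed (the paper states only the direction $\max_j s_j^{(0)}/\sum_l\alpha_l s_j^{(l)}\leq C$, though the lower bound of the lemma also uses $a_j^{(l)}\lesssim s_j^{(0)}$ together with $n_0/n_{\mcS}\nrightarrow 0$).
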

\begin{proof}
The proof is exactly the same as Lemma 6 in \citet{cai2012minimax} once we know that $\max_{j}\left(\frac{ s_{j}^{(0)}  } {\sum_{t\in \mcS} \alpha_{t} s_{j}^{(t)} } \right) \leq C$, which got satisfied under the assumptions of eigenvalues.
\end{proof}

\begin{lemma}[Fano's Lemma]\label{lemmaFano}
Let $P_{1},\cdots,P_{M}$ be probability measure such that 
\begin{equation*}
    KL(P_{i} | P_{j}) \leq \alpha, \quad 1 \leq i\neq j \leq K
\end{equation*}
then for any test function $\psi$ taking value in $\{ 1,\cdots,M \}$, we have 
\begin{equation*}
    P_i(\psi \neq i) \geq 1 - \frac{\alpha + log(2)}{log(M-1)}.
\end{equation*}
\end{lemma}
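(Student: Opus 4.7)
The plan is to prove this by the standard information-theoretic reduction underlying Fano's inequality. I would interpret the statement in its usual sense: place a uniform prior $\pi$ on $\{1,\ldots,M\}$, let $I$ be a random index drawn from $\pi$, and given $I=i$ let $X$ be an observation with distribution $P_i$. The test function $\psi$ produces an estimate $\hat I = \psi(X) \in \{1,\ldots,M\}$, and let $E = \mathbf{1}\{\hat I \neq I\}$ denote the error indicator. The quantity $P_i(\psi \neq i)$ in the statement should be read as the (average or worst-case) error probability $\bar p = \mathbb{P}(E=1) = \frac{1}{M}\sum_i P_i(\psi \neq i)$, since an individual-$i$ version does not follow from a pure KL hypothesis.

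The first step is the standard chain-rule bound for $H(I \mid \hat I)$. Conditioning on the error indicator gives
\begin{equation*}
H(I \mid \hat I) \;\le\; H(I, E \mid \hat I) \;=\; H(E\mid\hat I) + H(I \mid \hat I, E) \;\le\; \log 2 + \bar p \,\log(M-1),
\end{equation*}
using $H(E\mid\hat I)\le H(E)\le \log 2$ and the fact that conditional on $E=1$ the index $I$ ranges over at most $M-1$ values (while conditional on $E=0$ it is determined). The second step is to bound $H(I\mid\hat I)$ from below by $H(I) - I(I;\hat I) \ge \log M - I(I;X)$, invoking the data-processing inequality since $I \to X \to \hat I$ forms a Markov chain.

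The third step is to bound the mutual information by the pairwise KL divergences. Writing $\bar P = \frac{1}{M}\sum_{j} P_j$, convexity of KL in the second argument gives
\begin{equation*}
I(I;X) \;=\; \frac{1}{M}\sum_{i=1}^M KL(P_i \,\Vert\, \bar P) \;\le\; \frac{1}{M^2}\sum_{i,j} KL(P_i \,\Vert\, P_j) \;\le\; \alpha,
\end{equation*}
by the assumed pairwise bound. Combining with the previous display yields $\log M - \alpha \le \log 2 + \bar p \log(M-1)$, and solving for $\bar p$ together with $\log M \ge \log(M-1)$ gives $\bar p \ge 1 - (\alpha + \log 2)/\log(M-1)$. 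Since $\max_i P_i(\psi\neq i) \ge \bar p$, the stated bound follows.

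The only conceptual obstacle is the interpretation of $P_i(\psi\neq i)$ on the right-hand side; once one reads it as the average (equivalently a lower bound on the maximum) error probability over a uniform prior, the remaining steps are the textbook Fano argument with no analytic difficulty. I would also double-check that the two standard ingredients, the data-processing inequality for $I \to X \to \hat I$ and the convexity identity $I(I;X) = \frac{1}{M}\sum_i KL(P_i \Vert \bar P)$, are applicable under the measurability assumptions implicit in the paper's setup, but neither causes a genuine difficulty in the parametric testing scenarios invoked in Theorem \ref{lowerbound theorem}.
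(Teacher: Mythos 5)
Your proof is correct and is the standard textbook derivation of Fano's inequality (chain rule for conditional entropy with the error indicator, data-processing for the Markov chain $I \to X \to \hat I$, and the convexity bound $I(I;X) \le M^{-2}\sum_{i,j} KL(P_i \Vert P_j) \le \alpha$). The paper itself states this lemma as a known classical result and offers no proof, so there is nothing to compare approaches against; your argument supplies exactly the missing derivation. You are also right to flag that the conclusion holds for the maximum (equivalently the average) of $P_i(\psi\neq i)$ over $i$, not for an arbitrary fixed $i$ as the paper's phrasing literally suggests --- this is how the lemma is actually invoked in the proof of Theorem \ref{lowerbound theorem}, so the imprecision is in the paper's statement, not in your proof.
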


\begin{lemma}(Varshamov-Gilbert) \label{lemmaVG}
For any $N \geq 1$, there exists at least $M = exp(N/8)$ N-dimenional vectors, $b_1,\cdots,b_M \subset \{0,1\}^{N}$ such that
\begin{equation*}
    \sum_{l=1}^{N} \mathbf{1}\left\{b_{ik} \neq b_{jk} \right\} \geq N/4.
\end{equation*}
\end{lemma}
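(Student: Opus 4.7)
The plan is to prove this via a greedy packing argument in the Hamming cube, combined with a standard tail bound on a symmetric binomial to control the size of Hamming balls. Throughout, let $d_H(b, b') = \sum_{l=1}^N \mathbf{1}\{b_l \neq b'_l\}$ denote the Hamming distance on $\{0,1\}^N$; the lemma asks for $M \geq e^{N/8}$ vectors whose pairwise Hamming distances all exceed $N/4$.

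First I would bound the volume of a Hamming ball of radius $N/4$. For any fixed center $b \in \{0,1\}^N$, if $b'$ is chosen uniformly from $\{0,1\}^N$, then $d_H(b, b')$ is a sum of $N$ i.i.d.\ Bernoulli$(1/2)$ random variables, so $d_H(b,b') \sim \mathrm{Bin}(N, 1/2)$. Applying Hoeffding's inequality to this sum with deviation $1/2 - 1/4 = 1/4$ gives $\mathbb{P}(d_H(b, b') < N/4) \leq \exp(-2N(1/4)^2) = \exp(-N/8)$, and hence the Hamming ball $B(b, N/4) := \{b' : d_H(b, b') < N/4\}$ satisfies $|B(b, N/4)| \leq 2^N e^{-N/8}$.

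Next I would run a greedy selection. Initialize $S = \emptyset$, and at each step add to $S$ any vector in $\{0,1\}^N$ whose Hamming distance to every element already in $S$ is at least $N/4$; stop when no such vector remains. At termination, every point of $\{0,1\}^N$ lies within Hamming distance strictly less than $N/4$ of some element of $S$, so the balls $\{B(b, N/4) : b \in S\}$ cover the whole cube. Combining the volume bound from the previous step yields $|S| \cdot 2^N e^{-N/8} \geq 2^N$, so $|S| \geq e^{N/8}$. Taking $M = \lceil e^{N/8} \rceil$ of these vectors as $b_1, \ldots, b_M$ completes the construction, since by the greedy rule any two distinct $b_i, b_j \in S$ satisfy $d_H(b_i, b_j) \geq N/4$.

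There is no real obstacle here; the proof is entirely combinatorial and the constants are tight in the sense that the Hoeffding exponent matches the claimed $N/8$ in the lemma. The only sanity check required is to verify that the small-probability bound $e^{-N/8}$ is preserved rather than the sometimes sharper (but constant-losing) Chernoff form; Hoeffding is enough. An alternative but equivalent route would be the probabilistic method: draw $M$ vectors i.i.d.\ uniformly from $\{0,1\}^N$, note that any fixed pair has Hamming distance $<N/4$ with probability at most $e^{-N/8}$, and union-bound over $\binom{M}{2}$ pairs to show a positive probability of success when $M \leq e^{N/8}$. I would present the greedy version because it is deterministic and avoids the mild slack introduced by the union bound.
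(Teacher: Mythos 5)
Your greedy-packing argument is correct and complete: the Hoeffding bound $\mathbb{P}(\mathrm{Bin}(N,1/2) < N/4) \le e^{-2N(1/4)^2} = e^{-N/8}$ gives the ball-volume estimate $|B(b,N/4)| \le 2^N e^{-N/8}$, and maximality of the packing forces the open balls to cover the cube, yielding $|S| \ge e^{N/8}$ with all pairwise distances at least $N/4$. The paper itself offers no proof of this lemma --- it is stated as the classical Varshamov--Gilbert bound with a citation to \citet{varshamov1957estimate} and used as a black box in the lower-bound argument of Theorem \ref{lowerbound theorem} --- so there is nothing to compare against; your derivation is a clean, self-contained justification of exactly the constants the paper asserts. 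One small caveat on your closing aside: the probabilistic-method alternative with a union bound over $\binom{M}{2}$ pairs requires $M^2 e^{-N/8} < 2$, which only delivers $M$ of order $e^{N/16}$, not $e^{N/8}$; so that route does \emph{not} recover the stated bound, and your choice to present the deterministic greedy version is not merely a stylistic preference but necessary for the claimed exponent.
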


\section{Proofs for Section~\ref{sec: extension to FGLM}}\label{apd: theoretical section for FGLM}
We prove the upper bound and the lower bound of TL-FGLM. We first note that under Assumption~\ref{assump: FGLM bounded second derivative}, the excess risk $\mcE(\hat{\beta})$ is equivalent to $\E_{X^{(0)}} \langle \hat{\beta} - \beta^{(0)}, X^{(0)} \rangle_{L^{2}}^2$ up to universal constants. Thus we focus on $\E_{X^{(0)}} \langle \hat{\beta} - \beta^{(0)}, X^{(0)} \rangle_{L^{2}}^2$ in following proofs. 

Although we are focusing on $\E_{X^{(0)}} \langle \hat{\beta} - \beta^{(0)}, X^{(0)} \rangle_{L^{2}}^2$, which is exactly the same as FLR. However, minimizing the regularized negative log-likelihood will not provide an analytical solution of $\hat{\beta}$ as those in FLR, meaning that the proof techniques we used in proving TL-FLR and ATL-FLR are not applicable. Therefore, we use the empirical process to prove the upper bound. 

We abbreviate $\langle\cdot ,\cdot \rangle_{L^2{}}$ as $\langle\cdot , \cdot \rangle$ in following proofs. We first introduce some notations. Let
\begin{equation*}
\begin{aligned}
    & \mcL^{\mcS}(\beta) = \sum_{t\in \mcS} \alpha_{t} \E \left[ Y^{(t)} \langle X^{(t)}, \beta \rangle - \psi(\langle X^{(t)}, \beta \rangle )  \right], \\
    & \mcL(\beta) = \E \left[ Y^{(0)} \langle X^{(0)}, \beta + \hat{\beta}_{\mcS} \rangle - \psi(\langle X^{(0)}, \beta + \hat{\beta}_{\mcS} \rangle )  \right]
\end{aligned}
\end{equation*}
and their empirical version are denoted as 
\begin{equation*}
\begin{aligned}
    & \mcL_{n_{\mcS}}^{\mcS}(\beta) = \frac{1}{n_0 + n_{\mcS}} \sum_{t\in \mcS} \sum_{i=1}^{n_{t}} \left[ Y_{i}^{(t)} \langle X_{i}^{(t)}, \beta \rangle - \psi(\langle X_{i}^{(t)}, \beta \rangle )  \right], \\
    & \mcL_{n}(\beta) = \frac{1}{n_{0}} \sum_{i=1}^{n_{0}} \left[ Y_{i}^{(0)} \langle X_{i}^{(0)}, \beta + \hat{\beta}_{\mcS} \rangle - \psi(\langle X_{i}^{(0)}, \beta + \hat{\beta}_{\mcS} \rangle )  \right]
\end{aligned}
\end{equation*}
Let $P^{\mcS}$ and $P$ be the conditional distribution of $\cup_{t\in \mcS} Y^{(t)}|X^{(t)}$ and $Y^{(0)}|X^{(0)}$ respectively, and $P_{n_{\mcS}}^{\mcS}$ and $P_{n}$ as their empirical version, by define
\begin{equation*}
\begin{aligned}
    & \ell^{\mcS}(\beta) = \sum_{t\in \mcS} \alpha_{t}  \left[ Y^{(t)} \langle X^{(t)}, \beta \rangle - \psi(\langle X^{(t)}, \beta \rangle )  \right], \\
    & \ell(\beta) =  \left[ Y^{(0)} \langle X^{(0)}, \beta + \hat{\beta}_{\mcS} \rangle - \psi(\langle X^{(0)}, \beta + \hat{\beta}_{\mcS} \rangle )  \right]
\end{aligned}
\end{equation*}
we get 
\begin{equation*}
    P_{n_{\mcS}}^{\mcS}\ell^{\mcS}(\beta) = \mcL_{n_{\mcS}}^{\mcS}(\beta), \quad
    P^{\mcS}\ell^{\mcS}(\beta) = \mcL^{\mcS}(\beta), \quad
    P_{n} \ell(\beta) = \mcL_{n}(\beta), \quad
    P \ell(\beta) = \mcL(\beta).
\end{equation*}

\subsection{Proof of Upper bound for TL-FGLM (Theorem~\ref{thm: bounds for TL-FGLM})}
\begin{proof}
As mentioned before, we are focusing on $ \| \hat{\beta} - \beta^{(0)} \|_{C^{(0)}}^{2} $, i.e. 
\begin{equation*}
\begin{aligned}
    \| \hat{\beta} - \beta^{(0)} \|_{C^{(0)}}^{2}  & = \int_{\mcT} \int_{\mcT} (\hat{\beta}(s) - \beta^{(0)}(s) ) C^{(0)}(s,t) (\hat{\beta}(t) - \beta^{(0)}(t) ) ds dt \\
    & = \E \langle X^{(0)}, \hat{\beta} - \beta^{(0)} \rangle^{2}.
\end{aligned}
\end{equation*}
Therefore, we only need to show  $ \| \hat{\beta} - \beta^{(0)} \|_{C^{(0)}}^{2} $ is bounded by the error terms in Theorem~\ref{thm: bounds for TL-FGLM}. Notice that 
\begin{equation}\label{upperbound terms}
    \left\| \hat{\beta} - \beta^{(0)} \right\|_{C^{(0)}}  \leq \left\| \hat{\beta}_{\mcS} - \beta_{\mcS} \right\|_{C^{(0)}} + \left\| \hat{\delta}_{\mcS} - \delta_{\mcS} \right\|_{C^{(0)}},
\end{equation}
we then bound the two terms in the r.h.s. separately. We denote $\|a - b\|_{C^{(t)}}^{2}:=d_{t}^{2}(a,b)$ for all $t\in 0\cup[T]$ and $a,b \in \mcH_{K}$.

We first focus on the transfer learning error. Based on the Theorem 3.4.1 in \citep{vaart1996weak}, if the following three conditions hold,
\begin{enumerate}
    \item  
    $\E \sup_{\rho/2 \leq d_{0}(\beta,\beta_{\mcS})\leq \rho} \sqrt{n_{\mcS}} |(\mcL_{n_{\mcS}}^{\mcS} -\mcL^{\mcS})(\beta-\beta_{\mcS})| \lesssim \rho^{\frac{2r-1}{2r}} $;
    \item 
    $\sup_{\rho/2 \leq d_{0}(\beta,\beta_{\mcS})\leq \rho}P^{\mcS} \ell^{\mcS}(\beta)  - P^{\mcS} \ell^{\mcS}(\beta_{\mcS}) \lesssim -\rho^2 $;
    \item 
    $\mcL_{n_{\mcS}}^{\mcS}(\hat{\beta}_{\mcS})\geq \mcL^{\mcS}(\beta_{\mcS}) - O_{\mbP}\left(r_{n_{\mcS}}^{-2} \|\beta_{\mcS}\|_{K}^2 \right)$.
\end{enumerate}
then 
\begin{equation*}
    d_{0}^2(\hat{\beta}_{\mcS},\beta_{\mcS}) = O_{\mbP}(r_{n_{\mcS}}^{-2}\|\beta_{\mcS}\|_{K}^2).
\end{equation*}

For part (1), define
\begin{equation*}
    \Pi_{\rho}^{\mcS} = \{ \ell^{\mcS} (\beta) - \ell^{\mcS}(\beta_{\mcS}): \beta \in \mcB_{\rho} \} \quad \text{where} \quad \mcB_{\rho} = \{ \beta \in \mcH_{K}: d_{0}^{2}(\beta, \beta_{\mcS}) \in [\frac{\rho}{2}, \rho] \}.
\end{equation*}
Then $\sup_{\beta\in \mcB_{\rho}}|(\mcL_{n_{\mcS}}^{\mcS} -\mcL^{\mcS})(\beta-\beta_{\mcS})| = \sup_{f\in \Pi_{\rho}^{\mcS}} |(P_{n_{\mcS}}^{\mcS} - P^{\mcS})f|$ and by Cauchy-Schwarz inequality, 
\begin{equation*}
    \E \sup_{f \in \Pi_{\rho}^{\mcS}}|(P_{n_{\mcS}}^{\mcS} - P^{^{\mcS}})f | \leq \left\{ \E [ \sup_{f \in \Pi_{\rho}^{\mcS}}|(P_{n_{\mcS}}^{\mcS} - P^{\mcS})f | ]^2  \right\}^{1/2} := \left\| \sup_{f \in \Pi_{\rho}^{\mcS}}|(P_{n_{\mcS}}^{\mcS} - P^{\mcS})f | \right\|_{P^{\mcS},2}.
\end{equation*}
To bound the right hand side, by Theorem 2.14.1 in \citep{vaart1996weak}, we need to find the covering number of $\Pi_{\rho}^{\mcS}$, i.e. $\mcN(\epsilon,\Pi_{\rho}^{\mcS},\|\cdot\|_{P^{\mcS},2})$. We first show that 
\begin{equation*}
    log\left(\mcN(\epsilon,\Pi_{\rho}^{\mcS},\|\cdot\|_{P^{\mcS},2})\right) \leq O\left( \epsilon^{-\frac{1}{r}} log(\frac{\rho}{\epsilon}) \right).
\end{equation*}
Suppose there exist functions $\beta_1,\cdots,\beta_{M} \in \mcB_{\rho}$ such that
\begin{equation*}
    \min_{1\leq m \leq M} \|  \ell^{\mcS}(\beta) - \ell^{\mcS}(\beta_m) \|_{P^{\mcS},2} < \epsilon, \quad \forall  \beta \in \mathcal{B}_{\rho}.
\end{equation*}
Since
\begin{equation*}
    \begin{aligned}
     \left(\ell^{\mcS}(\beta)-\ell^{\mcS}(\beta_i)\right)^2 & = \left[ \sum_{t\in \mcS} \alpha_{t} Y^{(t)} \langle X^{(t)},\beta-\beta_i \rangle - \left( \psi(\langle X^{(t)},\beta \rangle) -\psi(\langle X^{(t)},\beta_i \rangle) \right) \right]^2\\
     & \leq |\mcS| \sum_{t\in \mcS} \alpha_{t}^{2} ((Y^{(t)})^2 + (C^{(t)})^2) \langle X^{(t)} , \beta - \beta_{i} \rangle_{\mcL^{2}}^{2} \\
     & \leq |\mcS| \max_{t\in \mcS}\left\{ (Y^{(t)})^2 + (C^{(t)})^2 \right\} \sum_{t\in \mcS} \alpha_{t}^{2} \langle X^{(t)} , \beta - \beta_{i} \rangle_{\mcL^{2}}^{2} \\
    \end{aligned}
\end{equation*}
thus 
\begin{equation*}
    \|l^{\mcS}(\beta)-l^{\mcS}(\beta_i) \|^2_{P^{\mcS},2} \leq |\mcS| \max_{t\in \mcS}\left\{ \E \psi'(\langle X^{(t)}, \beta^{(t)} \rangle) + (C^{(t)})^2 \right\} d_{0}^{2}(\beta,\beta_{i}) := C_{1} d_{0}^{2}(\beta,\beta_{i}),
\end{equation*}
where the inequality follows the fact for all $t\in \mcS$, and $d_{t}(\beta,\beta_{i}) \asymp d_{0}(\beta,\beta_{i})$ under Assumption~\ref{assump: eigen assumption}. Hence, the covering number of $\Pi_{\rho}^{\mcS}$ under norm $\|\cdot\|_{P^{\mcS},2}$ is bounded by covering number of $\mcB_{\rho}$ under norm $d_{0}$, i.e.
\begin{equation*}
     \mathcal{N}(\epsilon,\Pi_{\rho}^{\mcS}, \|\cdot\|_{P^{\mcS},2}) \leq  \mathcal{N}(\frac{\epsilon}{C_{1}},\mcB_{\rho}, d_{0}).
\end{equation*}
Define $\tilde{\mathcal{B}}_{\rho} = \left\{  \beta\in \mcH_{K} : d_{0}(\beta,\beta_\mcS) \in [0,\rho]  \right\}$, then 
\begin{equation*}
    \mathcal{N}(\frac{\epsilon}{C_1},\mathcal{B}_{\rho}, d_{0}) \leq \mathcal{N}(\frac{\epsilon}{C_1},\tilde{\mathcal{B}}_{\rho}, d_{0}).
\end{equation*}
Next, we will show $\mathcal{N}(\frac{\epsilon}{C_1},\tilde{\mathcal{B}}_{\rho}, d_{0})$ can be bounded by covering number for a ball in $\mathbb{R}^{J}$ for some finite integer $J$. Notice that $\mcH_{K} = L_{K^{1/2}}(L^2) = \{ \sum_{j \geq 1} b_j L_{K^{1/2}}(\phi_j) : (b_j)_{j\geq 1}\in \ell^2 \}$, hence for any $\beta = \sum_{j \geq 1} b_j L_{K^{1/2}}(\phi_j) \in \mcH_{K}$,
\begin{equation*}
    \begin{aligned}
     d_{0}^{2}(\beta,\beta_\mcS) &= \langle \beta-\beta_\mcS , L_{C^{(0)}} (\beta -\beta_{\mcS} )\rangle^{2} \\
     & = \sum_{j=1}^{\infty} \langle  b_{j} - b_{j}^{\mcS} , L_{K^{1/2} C K^{1/2}}( b_{j} - b_{j}^{\mcS} )  \rangle \\
     & = \sum_{j=1}^{\infty} s_j^{0} (b_{j} - b_{j}^{\mcS})^2
    \end{aligned}
\end{equation*}
which allows one to rewrite $\tilde{\mcB}_{\rho}$ as 
\begin{equation*}
    \tilde{\mcB}_{\rho} = \left\{  \sum_{j \geq 1} b_{j} L_{K^{1/2}}(\phi_{j}) :  \sum_{j=1}^{\infty} s_j^{0} (b_{j} - b_{j}^{\mcS})^2 \leq \rho^{2} \right\}.
\end{equation*}
Let $J = \lfloor ( \frac{\epsilon}{2C_{1}} )^{-\frac{1}{r}} \rfloor$ be a truncation number, and define
\begin{equation*}
    \tilde{\mcB}_{\rho}^{*} = \left\{  \sum_{j=1}^{J} b_{j} L_{K^{1/2}}(\phi_{j}) :  \sum_{j=1}^{J} s_j^{0} (b_{j} - b_{j}^{\mcS})^2 \leq \rho^{2} \right\}.
\end{equation*}
For any $\beta \in \tilde{\mcB}_{\rho}$, let $\beta^* \in \tilde{\mcB}_{\rho}^{*} $ be its counterpart, then
\begin{equation*}
    d_{0}^{2}(\beta,\beta^*) = \sum_{j=J+1}^{\infty} s_j^{0} b_{j}^{2} \leq s_{J}^{0} \sum_{j=J+1}^{\infty} b_{j}^{2} \asymp J^{-2r} = ( \frac{\epsilon}{2C_{1}})^{2}.
\end{equation*}
Suppose there exist function $\beta_1^*,\cdots,\beta_M^* \in \tilde{\mcB}_{\rho}^*$ such that
\begin{equation*}
    \min_{1\leq m \leq M}  d_{0}(\beta^*,\beta_m^{*})< \frac{\epsilon}{2C_{1}} \quad \forall \beta \in \tilde{\mathcal{B}}_{\rho}^*,
\end{equation*}
then by triangle inequality
\begin{equation*}
    \min_{1\leq m \leq M}  d_{0}(\beta,\beta_i^{*})< \frac{\epsilon}{C_1} \quad  \forall \beta \in \mathcal{B}_{\rho}.
\end{equation*}
The above inequality indeed shows that the covering number of $\tilde{\mcB}_{\rho}$ with radius $\frac{\epsilon}{C_{1}}$ can be bounded by the covering of  $\tilde{\mcB}_{\rho}^*$ with radius $\frac{\epsilon}{2C_{1}}$, i.e.
\begin{equation*}
    \mathcal{N}(\frac{\epsilon}{C_1},\tilde{\mathcal{B}}_{\rho}, d_{0}) \leq \mathcal{N}(\frac{\epsilon}{2C_1},\tilde{\mathcal{B}}_{\rho}^*, d_{0}).
\end{equation*}
It is known that the covering number for a unit ball in $\mathbb{R}^N$, then the covering number is less than $( \frac{2}{\epsilon} + 1)^{N}$. Therefore, 
\begin{equation*}
    \mathcal{N}(\frac{\epsilon}{2C_1},\tilde{\mathcal{B}}_{\rho}^*, d_{0}) \leq 
\left(  \frac{2\rho + \frac{\epsilon}{2C_1}}{\frac{\epsilon}{2C_1}} \right)^{J}
\end{equation*}
which leads to 
\begin{equation*}
log \mathcal{N}(\frac{\epsilon}{2C_1},\tilde{\mathcal{B}}_{\rho}^*, d_{0}) \leq 
O_{\mbP} \left(  \epsilon^{-\frac{1}{r}} log( \frac{\rho}{\epsilon} )    \right).
\end{equation*}
By the Dudley entropy integral, we know 
\begin{equation*}
    \begin{aligned}
     \sup_{f\in L_{\rho}^{\mcS}} |(P_{n_{\mcS}}^{\mcS}-P^{\mcS})f| & \lesssim \int_{0}^{\rho} \sqrt{\frac{log \mathcal{N}(\frac{\epsilon}{2C_1},\tilde{\mathcal{B}}_{\rho}^*, d_{0}(\cdot,\cdot))}{n_{\mcS}}} d\epsilon \\
     & = \rho^{\frac{2r-1}{2r}} n_{\mcS}^{-\frac{1}{2}} \int_{1}^{\infty}exp\{ (1-\frac{1}{2h}) u^2 \}u^2 du \\
     & = O(\rho^{\frac{2r-1}{2r}} n_{\mcS}^{-\frac{1}{2}})
    \end{aligned}
\end{equation*}
Hence, by Theorem 2.14.1 in \citep{vaart1996weak}, we finish the proof of (1).

For part (2), let $G(t) = \ell^{\mcS}(\beta_{\mcS} + t \tilde{\beta})$ where $\tilde{\beta} = \beta - \beta_{\mcS}$, then we notice $G(1) = \ell^{\mcS}(\beta)$ and $G(0) = \ell^{\mcS}(\beta_{\mcS})$. We further notice 
\begin{equation*}
    G^{'}(t) = - \sum_{t\in \mcS} \alpha_{t} \left\{ Y^{(t)} \langle X^{(t)}, \tilde{\beta} \rangle - \psi'( \langle X^{(t)}, \beta_{\mcS} + t \tilde{\beta} \rangle ) \langle X^{(t)}, \tilde{\beta} \rangle \right\}
\end{equation*}
and thus 
\begin{equation*}
\begin{aligned}
     \E G^{'}(0) & =  \sum_{t\in \mcS} \alpha_{t} \E \left\{ Y^{(t)} \langle X^{(t)}, \tilde{\beta} \rangle - \psi'( \langle X^{(t)}, \beta_{\mcS}  \rangle ) \langle X^{(t)}, \tilde{\beta} \rangle \right\} \\
     & =   \sum_{t\in \mcS} \alpha_{t} \E \left\{ \E \left\{Y^{(t)}  - \psi'( \langle X^{(t)}, \beta_{\mcS}  \rangle )  | X^{(t)} \right\} \langle X^{(t)}, \tilde{\beta} \rangle \right\}\\
     & = 0
\end{aligned}
\end{equation*}
Besides, by direct calculation, 
\begin{equation*}
    G^{''}(t) = -\sum_{t\in \mcS} \alpha_{t} \left\{ \psi''( \langle X^{(t)}, \beta_{\mcS} +  t \tilde{\beta}  \rangle ) \langle X^{(t)}, \tilde{\beta} \rangle^{2} \right\}.
\end{equation*}
By Taylor expansion, there exists a $\gamma \in [0,1]$ such that
\begin{equation*}
\begin{aligned}
    G(1) - G(0) & = G^{'}(0) + \frac{1}{2}G^{''}(\gamma)\\
    & = G^{'}(0) - \frac{1}{2}\sum_{t\in \mcS} \alpha_{t} \left\{ \psi''( \langle X^{(t)}, \beta_{\mcS} + \gamma \tilde{\beta} \rangle ) \langle X^{(t)}, \tilde{\beta} \rangle^{2} \right\}.
\end{aligned}
\end{equation*}
Notice that $P^{\mcS}\ell^{\mcS}(\beta) - P^{\mcS}\ell^{\mcS}(\beta_{\mcS})= \E [G(1) - G(0)]$, and then
\begin{equation*}
\begin{aligned}
    P^{\mcS}\ell^{\mcS}(\beta) - P^{\mcS}\ell^{\mcS}(\beta_{\mcS}) &= \E [G(1) - G(0)]\\
    & =  -\frac{1}{2}\sum_{t\in \mcS} \alpha_{t} \E \left\{ \psi''( \langle X^{(t)}, \beta_{\mcS} + \gamma \tilde{\beta} \rangle ) \langle X^{(t)}, \tilde{\beta} \rangle^{2} \right\} \\
    & \leq -\frac{\min_{t\in \mcS}\{ \mcA_{1} \} }{2} \sum_{t\in \mcS} \alpha_{t} \langle X^{(t)} , \tilde{\beta} \rangle^{2},
\end{aligned}
\end{equation*}
and 
\begin{equation*}
\begin{aligned}
    P^{\mcS}\ell^{\mcS}(\beta) - P^{\mcS}\ell^{\mcS}(\beta_{\mcS}) &= \E [G(1) - G(0)]\\
    & =  -\frac{1}{2}\sum_{t\in \mcS} \alpha_{t} \E \left\{ \psi''( \langle X^{(t)}, \beta_{\mcS} + \gamma \tilde{\beta} \rangle ) \langle X^{(t)}, \tilde{\beta} \rangle^{2} \right\} \\
    & \geq -\frac{\max_{t\in \mcS}\{ \mcA_{2} \} }{2} \sum_{t\in \mcS} \alpha_{t} \langle X^{(t)} , \tilde{\beta} \rangle^{2},
\end{aligned}
\end{equation*}
which leads to 
\begin{equation*}
    P^{\mcS}\ell^{\mcS}(\beta) - P^{\mcS}\ell^{\mcS}(\beta_{\mcS}) \asymp - \sum_{t\in \mcS} \alpha_{t} d_{t}^{2}(\beta, \beta_{\mcS})
\end{equation*}
Hence, we get 
\begin{equation*}
\begin{aligned}
    \sup_{\rho/2 \leq d_{0}(\beta,\beta_0)\leq \rho} \left\{ P^{\mcS}\ell^{\mcS}(\beta) - P^{\mcS}\ell^{\mcS}(\beta_{\mcS}) \right\} & \asymp - \left(\rho^2 +  \sum_{t\in \mcS}\alpha_{t}d_{t}^{2}(\beta, \beta_{\mcS}) \right) \\
    & \lesssim -\rho^{2},
\end{aligned}
\end{equation*}
which proves part (2).

Finally for part (3), we pick $r_{n_{\mcS}} = n_{\mcS}^{\frac{r}{2r+1}} \|\beta_{\mcS}\|_{K}^{-\frac{2r}{2r+1}} $ which satisfies $r_{n_{\mcS}}^{2} \phi_{n}(r_{n_{\mcS}}^{-1}) \leq \sqrt{n_{\mcS}}$ where $\phi_{n}(x) = \|\beta_{\mcS}\|_{K} x^{\frac{2r-1}{2r}}$. Let $\lambda_{1} = O(r_{n_{\mcS}}^{-2})$, 
since 
\begin{equation*}
    -\mcL_{n}^{\mcS}(\hat{\beta}_{\mcS}) + \lambda_{1} \|\hat{\beta}_{\mcS}\|_{K}^2 \leq -\mcL_{n}^{\mcS}(\beta_{\mcS}) + \lambda_{1} \|\beta_{\mcS}\|_{K}^2,
\end{equation*}
hence
\begin{equation*}
\begin{aligned}
    \mcL_{n}^{\mcS}(\hat{\beta}_{\mcS}) & \geq \mcL_{n}^{\mcS}(\beta_{\mcS}) + \lambda_{1} \left( \|\hat{\beta}_{\mcS}\|_{K}^2 - \|\beta_{\mcS}\|_{K}^2 \right) \\
    & \geq \mcL_{n}^{\mcS}(\beta_{\mcS}) - \lambda_{1}  \|\beta_{\mcS}\|_{K}^2 \\
    &\geq \mcL_{n}^{\mcS}(\beta_{\mcS}) - O(r_{n_{\mcS}}^{-2}\|\beta_{\mcS}\|_{K}^2).
\end{aligned}
\end{equation*}
Combining part (1)-(3), based on the Theorem 3.4.1 in \citep{vaart1996weak}, we know
\begin{equation*}
    d_{0}^{2}(\hat{\beta}_{\mcS}, \beta_{\mcS}) = O_{p} \left( r_{n_{\mcS}}^{-2}\|\beta_{\mcS}\|_{K}^2) \right).
\end{equation*}

To bound the second term in the r.h.s. of (\ref{upperbound terms}), we follow the same proof procedure as the proof of bounding the first term. Specifically, we need to show
\begin{enumerate}
    \item  
    $\E \sup_{\rho/2 \leq d_{0}(\delta,\delta_{\mcS})\leq \rho} \sqrt{n_{0}} |(\mcL_{n_{0}} -\mcL)(\delta-\delta_{\mcS})| \lesssim \rho^{\frac{2r-1}{2r}} $;
    \item 
    $\sup_{\rho/2 \leq d_{0}(\delta,\delta_{\mcS})\leq \rho}P \ell(\delta)  - P \ell(\delta_{\mcS}) \lesssim -\rho^2 $;
    \item 
    $\mcL_{n_{0}}(\hat{\delta}_{\mcS})\geq \mcL(\delta_{\mcS}) - O_{\mbP}\left(r_{n_{0}}^{-2} \|\delta_{\mcS}\|_{K}^2 \right)$.
\end{enumerate}
It is not hard to check that including the estimator from the transfer step $\hat{\beta}_{\mcS}$ into the loss function for the calibration step defined at the beginning of the proof will not affect the statements (1)-(3). For example, in part (1), the $\hat{\beta}_{\mcS}$ will vanish when calculating $(\ell(\delta)-\ell(\delta_i))^2$; in part (2), its effect will vanish since our assumption of the second order derivatives of $\psi$s are bounded from infinity and zero; in part (3), the inequality holds as $\hat{\delta_{S}}$ is the minimizer of the regularized loss function. Therefore, in the end, we have 
\begin{equation*}
    d_{0}^{2}(\hat{\delta}_{\mcS}, \delta_{\mcS}) = O_{\mbP}( r_{n_{0}}^{-2}\|\delta_{\mcS}\|_{K}^2 )
\end{equation*}
where $r_{n_{0}} = n_{0}^{\frac{r}{2r+1}} \|\delta_{\mcS}\|_{K}^{-\frac{2r}{2r+1}}$.

Combining the bounds of $d_{0}(\hat{\beta}_{\mcS}, \beta_{\mcS})$ and $d_{0}(\hat{\delta}_{\mcS}, \delta_{\mcS})$, we reach to 
\begin{equation*}
    \mcE(\hat{\beta}) = O_{p} \left( n_{\mcS}^{-\frac{2r}{2r+1}} + \left(\frac{h^2}{\|\beta_{\mcS}\|_{K}^2}\right)^a n_{0}^{-\frac{2r}{2r+1}} \right),
\end{equation*}
for some $a>0$.

\end{proof}

\subsection{Proof of Lower Bound for TL-FGLM (Theorem~\ref{thm: bounds for TL-FGLM})}
\begin{proof}
Similar to the lower bound of TL-FLR, we consider the following two cases.

(1) Consider $h=0$, i.e., all the source datasets come from the target domain, and thus $\beta^{(t)} = \beta^{(0)}$ for all $t\in \mcS$. This can also be viewed as finding the lower bound of the estimator on $\beta^{(0)}$ with the size of the target dataset as $n_0 + n_{\mcS}$.

We first calculate the Kullback–Leibler divergence between $P_{i}$ and $P_{j}$ under the exponential family. By the definition of KL divergence and the density function of the exponential family, we have
\begin{equation*}
    \begin{aligned}
     KL(P_{i}||P_{j}) & = (n_{0}+n_{\mcS}) \E \bigg\{ \langle X^{(0)} , \beta_{i} -\beta_{j} \rangle \psi'(\langle X^{(0)} , \beta_{i}  \rangle) \\
     & \qquad \qquad \qquad \qquad - \bigg( \psi(\langle X^{(0)} , \beta_{i}  \rangle) - \psi(\langle X^{(0)} , \beta_{j}  \rangle) \bigg)  \bigg\}\\
     & = (n_{0}+n_{\mcS}) \E \bigg\{ \frac{1}{2}\psi''(\langle X^{(0)} , \hat{\beta} \rangle)< X^{(0)} , \beta_i - \beta_j  >^2 \bigg\}\\
     & \lesssim (n_{0}+n_{\mcS}) d_{0}^{2}(\beta_i,\beta_j),
    \end{aligned}
\end{equation*}
for some $\hat{\beta}$ between $\beta_{i}$ and $\beta_{j}$. For any estimator $\tilde{\beta}$ based on $\{(X_{i}^{(0)},Y_{i}^{(0)})\}_{i=1}^{n_{0} + n_{\mcS}}$, by Markov's inequality and Fano's Lemma, we have
\begin{equation} \label{GLM fanolowerbound}
    \begin{aligned}
     d_{0}^{2}(\tilde{\beta},\beta_i) & \geq P_{i} \left( | \langle X^{(0)},\tilde{\beta}-\beta_i \rangle | \geq \min_{i,j}d_{0}(\beta_i,\beta_j) \right) \min_{i,j}d_{0}^{2}(\beta_i,\beta_j)\\
     & = P_{i}(\tilde{\beta} \neq \beta_i)  \min_{i,j}d_{0}^{2}(\beta_i,\beta_j)\\
     & \geq \bigg( 1 - \frac{ (n_{0}+n_{\mcS}) \max_{i,j}d_{0}^{2}(\beta_i,\beta_j) + log(2)}{log(M-1)}\bigg) \min_{i,j}d_{0}^{2}(\beta_i,\beta_j).
    \end{aligned}
\end{equation}
To have the lower bound matches with the upper bound, we need to construct the series of $\beta_{1}, \beta_{M} \in \mcB_{\mcH}{h}$ such that the r.h.s. of the above inequality is equal to $(n_0 + n_{\mcS})^{-\frac{2r}{2r+1}}$ up to a constant. Let $N$ be a fixed integer and 
\begin{equation*}
    \beta_{i} = \sum_{k=N+1}^{2N} \frac{b_{i,k-N} }{\sqrt{N}} L_{K^{\frac{1}{2}}}(\phi_{k}) \quad \textit{for} \quad i = 1,2,\cdots,M.
\end{equation*}
Then,
\begin{equation*}
\begin{aligned}
    d_{0}^{2}(\beta_{i}, \beta_{j}) & = \frac{1}{N} \sum_{k=N+1}^{2N} \left(b_{i,k-N} - b_{j,k-N} \right)^2 s_l^{(0)} \\
    & \geq \frac{ s_{2N}^{(0)} }{N} \sum_{k=N+1}^{2N} \left(b_{i,k-N} - b_{j,k-N} \right)^2 \\
    & \geq \frac{ s_{2N}^{(0)}}{4}  ,
\end{aligned}
\end{equation*}
where the last inequality is by Lemma~\ref{lemmaVG}, and 
\begin{equation*}
\begin{aligned}
    d_{0}^{2}(\beta_{i}, \beta_{j}) & = \frac{1}{N} \sum_{k=N+1}^{2N} \left(b_{i,k-N} - b_{j,k-N} \right)^2 s_{k}^{(0)} \\
    & \leq \frac{ s_{N}^{(0)} }{N} \sum_{k=M+1}^{M} \left(b_{i,k-N} - b_{j,k-N} \right)^2 \\
    &  \leq   s_{N}^{(0)}.
\end{aligned}
\end{equation*}
Combining the upper and lower bound of $d_{0}^{2}(\beta_{i}, \beta_{j})$ with the r.h.s. of (\ref{GLM fanolowerbound}), we obtain
\begin{equation*}
    d_{0}^{2}(\beta_{i}, \beta_{j}) \geq    \left( 1 - \frac{ 4 (n_0+n_{\mcS})  s_{N}^{(0)} + 8log(2)}{N} \right) \frac{s_{2N}^{(0)} }{4}.
\end{equation*}
Taking $N = 8(n_0 + n_{\mcS})^{\frac{1}{2r+1}}$, which implies $N\rightarrow \infty$, would produce
\begin{equation*}
    \left( 1 - \frac{4 (n_0+n_{\mcS})  s_{N}^{(0)} + 8log(2)}{N} \right) \frac{s_{2N}^{(0)} }{4} \asymp
    \left( \frac{1}{2} - \frac{8log(2)}{N} \right)  N^{-2r} \asymp (n_{\mcS} + n_{0})^{-\frac{2r}{2r+1}} 
\end{equation*}
Now we finish the proof of the first half of the lower bound. To prove the second half, we consider the following case.

(2) Consider $\beta^{(t)} = 0$ for all $t\in \mcS$, i.e., all the source domains have no useful information about the target domain. Then we know the $\beta^{(0)} \in \mcB_{\mcH}(h)$, and our goal is to show $d_{0}^{2}(\tilde{\beta}, \beta_{i})$ is bounded by $n_{0}^{-\frac{2r}{2r+1}}$ up to a constant related to $h$ by constructing a sequence of $\beta_{1},\cdots,\beta_{M} \in \mcB_{\mcH}(h)$.

Again, let $N$ be a fixed integer and 
\begin{equation*}
    \beta_{i} = \sum_{k=N+1}^{2N} \frac{b_{i,k-N} h }{\sqrt{N}} L_{K^{\frac{1}{2}}}(\phi_{k}) \quad \textit{for} \quad i = 1,2,\cdots,M.
\end{equation*}
Then, similar to case (1), we can prove that
\begin{equation*}
    d_{0}^{2}(\beta_{i}, \beta_{j}) \geq \frac{ s_{2N}^{(0)} h^{2}}{4} \quad \textit{and} \quad 
    d_{0}^{2}(\beta_{i}, \beta_{j})\leq   s_{N}^{(0)} h^{2}.
\end{equation*}
Then for any estimator $\tilde{\beta}$ based on $\{(X_{i}^{(0)},Y_{i}^{(0)})\}_{i=1}^{n_{0} }$ and follows a similar process as case (1),
\begin{equation*}
    d_{0}^{2}(\beta_{i}, \beta_{j}) \geq    \left( 1 - \frac{ 4 n_0 h^2 s_{N}^{(0)} + 8log(2)}{N} \right) \frac{s_{2N}^{(0)} h^2}{4}.
\end{equation*}
Again, taking $N = 8h^{2} (n_0 + n_{\mcS})^{\frac{1}{2r+1}}$ leads to 
\begin{equation*}
    d_{0}^{2}(\beta_{i}, \beta_{j}) \gtrsim n_{0}^{-\frac{2r}{2r+1}} h^{2}
\end{equation*}

Combining the lower bound in case (1) and case (2), we obtain the desired lower bound.
\end{proof}

\section{Additional Experiments for TL-FLR/ATL-FLR}\label{appendix: additional experiments}

In this section, we explore how the smoothness of the coefficient functions, $\beta^{(t)}$, for $t\in \mcS^{c}$, will affect the performance of ATL-FLR. We also explore how different temperatures will affect the performance of Exponential Weighted ATL-FLR (ATL-FLR (EW)).

We consider the setting that $\beta^{(t)}$ with $t\in \mcS^{c}$ are generated from a much rougher Gaussian process, i.e. $\beta_{t}$ are generated from a Gaussian process with mean function $\cos(2\pi t)$ with covariance kernel $\min(s,t)$, which is exactly Wiener process, and thus the $\beta_{t}$s are less smooth than $\beta_{t}$s that are generated from Ornstein–Uhlenbeck process (the one we used in main paper). For ATL-FLR (EW), we consider three different temperatures, i.e., $T = 0.2, 2, 10$, where a lower temperature will usually produce small aggregation coefficients. All the other settings are the same as the simulation section.

The results are presented in Figure~\ref{fig: diff_temp ATL-FLR}. In general, the patterns of using the Wiener process are consistent with using the Ornstein–Uhlenbeck process, which demonstrates the robustness of the proposed algorithms to negative transfer source models. We also note that while the temperature is low ($T = 0.2$), the small convex combination coefficients $\{c_{j}\}$ will make ATL-FLR(EW) have almost the same performance as ATL-FLR, but it still cannot beat ATL-FLR. While we increase the temperature ($T = 2, T = 10$), the gap between ATL-FLR(EW) and ATL-FLR increases, especially when the proportion of $|\mcS|$ is small. Therefore, selecting the wrong $T$ can hugely degrade the performance of ATL-FLR(EW). This demonstrates the superiority of sparse aggregation in practice since its performance does not depend on the selection of any hyperparameters.

\begin{figure}[htbp]
    \centering
    \subfloat[Low Temperature $(T = 0.2)$]{
    \includegraphics[page = 1, width=0.5\textwidth]{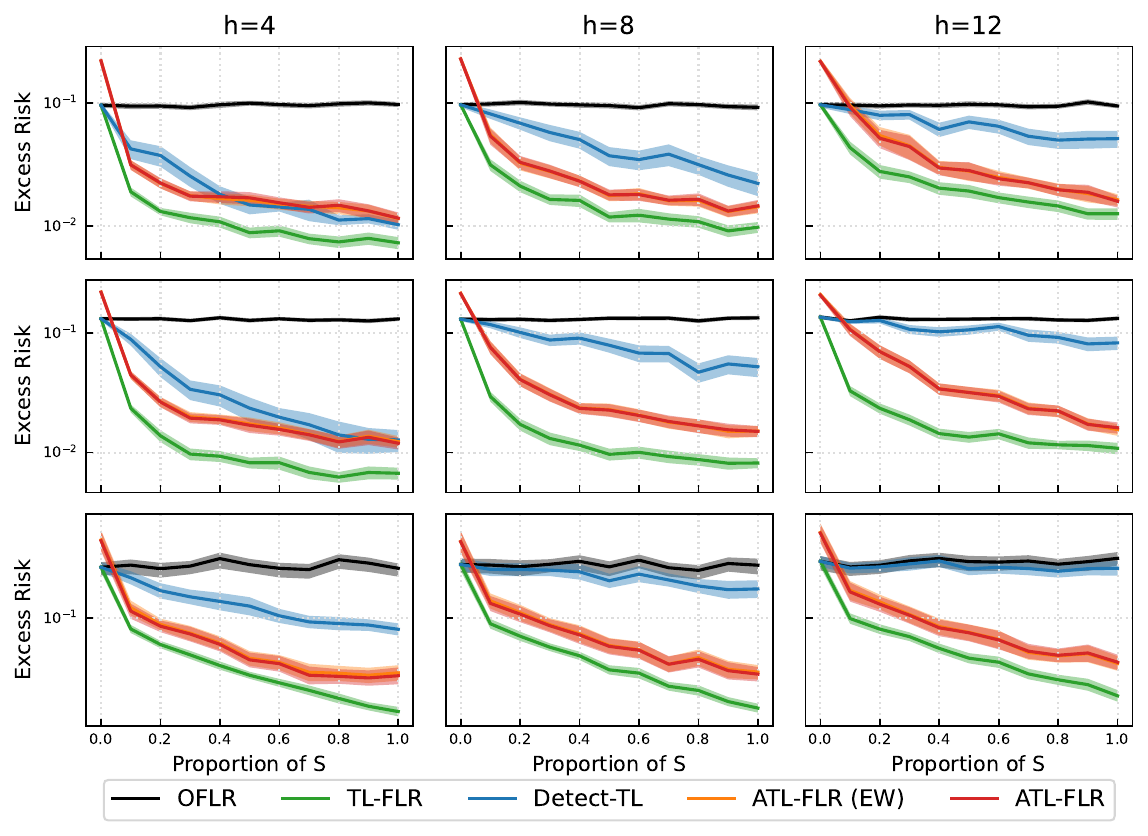}
    }

    \subfloat[Middle Temperature $(T = 2)$]{
    \includegraphics[page = 1, width=0.5\textwidth]{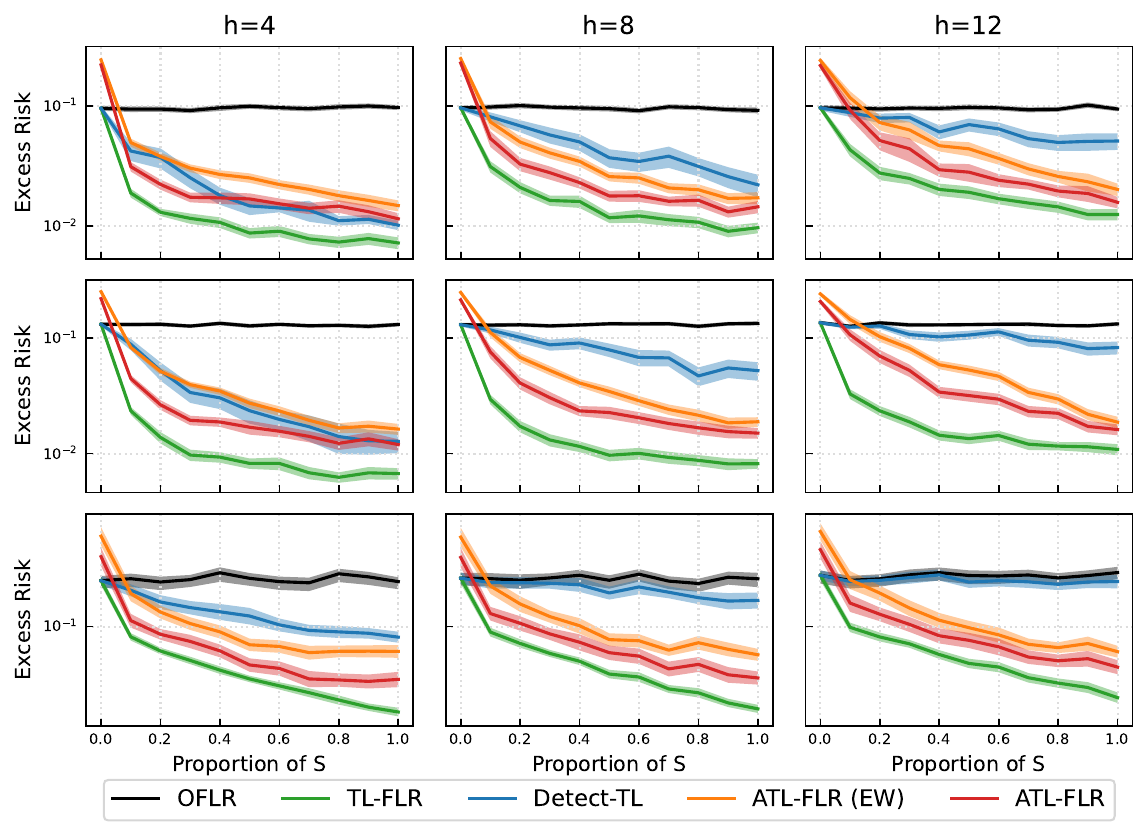}
    }

    \subfloat[High Temperature $(T = 10)$]{
    \includegraphics[page = 1, width=0.5\textwidth]{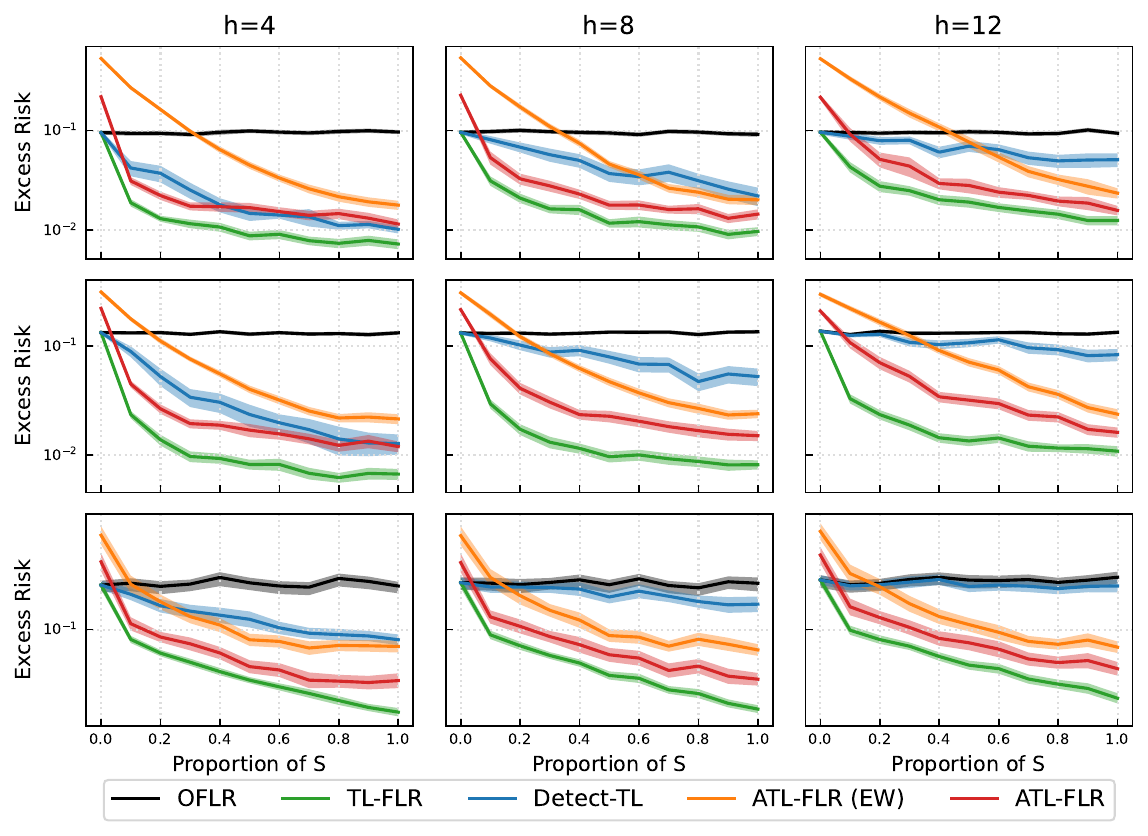}
    }
    
    \caption{Excess Risk of different transfer learning algorithms. Each row corresponds to a different $\beta^{(0)}$, and the y-axes for each row are under the same scale. The result for each sample size is an average of 100 replicate experiments, with the shaded area indicating $\pm$ two standard errors.}
    \label{fig: diff_temp ATL-FLR}
\end{figure}

\end{document}